\def\keywords{\vspace{.5em}
	{\bf{Keywords}:\,\relax%
}}
\newtheorem{lemma}{Lemma}
\newtheorem{remark}{Remark}
\title{An Improved Optimal Proximal Gradient Algorithm for Non-Blind Image Deblurring}
\author{
 Qingsong Wang \\
  School of Mathematics and Computational Science\\
  Xiangtan University\\
  Hunan, China 411105 \\
  \texttt{nothing2wang@hotmail.com} \\
   \And
 Shengze Xu \\
  Department of Mathematics\\
  The Chinese University of Hong Kong\\
  Shatin, Hong Kong \\
  \texttt{szxu@math.cuhk.edu.hk} \\
  \And
 Xiaojiao Tong \\
  School of Mathematics and Computational Science\\
  Xiangtan University\\
  Hunan, China 411105 \\
  \texttt{dysftxj@hnfnu.edu.cn} \\
     \And
 Tieyong Zeng\thanks{Corresponding author. Email: \texttt{zeng@math.cuhk.edu.hk}} \\
  Department of Mathematics\\
  The Chinese University of Hong Kong\\
  Shatin, Hong Kong \\
  \texttt{zeng@math.cuhk.edu.hk} \\
}
\begin{document}
\maketitle
\begin{abstract}
Image deblurring remains a central research area within image processing, critical for its role in enhancing image quality and facilitating clearer visual representations across diverse applications. This paper tackles the optimization problem of image deblurring, assuming a known blurring kernel. We introduce an improved optimal proximal gradient algorithm (IOptISTA), which builds upon the optimal gradient method and a weighting matrix, to efficiently address the non-blind image deblurring problem. Based on two regularization cases, namely the $l_1$ norm and total variation norm, we perform numerical experiments to assess the performance of our proposed algorithm. The results indicate that our algorithm yields enhanced PSNR and SSIM values, as well as a reduced tolerance, compared to existing methods. 
\end{abstract}

\keywords{Non-blind image deblurring, Optimal gradient method, Proximal gradient algorithm, Weighted matrix. }

\section{Introduction}
Image deblurring, a critical process for restoring clarity to images obscured by factors such as motion blur or defocus, extends its utility across a multitude of domains, including image/video processing \cite{ZhangLGWL24, ZhangQZ24, LiSXZ22}, signal processing \cite{ZhangWZZL20, TangXCZ18, WangJJGY24, EqtedaeiA24}, traffic monitoring \cite{AliyanB12, YangFZWYY23, WangZZLZ24}, and remote sensing \cite{ZhangZLHC22, LiLZGW24, LiuYLJCP23}. Typically, image deblurring can be formulated as a linear inverse problem \cite{BenningB18, ArridgeMOS19}, that is,
\begin{eqnarray}
	b=Ax+\varepsilon, \label{linear-equation}
\end{eqnarray}
where $A\in\mathbb{R}^{m\times n}$ represents the blur kernel, $b\in\mathbb{R}^{m}$ is observed the blurred image, $\varepsilon$ accounts for noise. However, solving this inverse problem to recover the original image is often fraught with challenges due to its ill-conditioned nature, which can result in non-unique solutions. To mitigate this, we formulate an optimization problem:
\begin{eqnarray}
	\min_{x\in\mathbb{R}^{n}}  f(x)+h(x), \label{target-function}
\end{eqnarray}
where $f(x):=\frac{1}{2}\|Ax-b\|^{2}$, $h(x)$ is the regularization component, which addresses the ill-posed nature of the problem by imposing specific properties or constraints on the solution $x$, such as $l_{1}$ norm \cite{Tibshirani96}, total variation (TV) norm \cite{RudinOF92}. In this paper, we only consider the convex case for simplicity.

Over the years, a multitude of algorithms has been developed to address the non-blind deblurring problem. Broadly, deblurring methods can be categorized into optimization-based approaches \cite{Beck17, Nesterov18} and deep learning techniques \cite{QuanLXNJ22, BarmanD24, HuiLLJPCZ24}. In this paper, we focus exclusively on optimization-based methods, which provide a structured framework for solving the convex optimization formulation of the problem. One of the most prominent algorithms in this category is the iterative shrinkage-thresholding algorithm (ISTA) \cite{DaubechiesDM04}, which is also known as the proximal gradient algorithm and is defined as follows:
\begin{eqnarray}
x_{k+1} = \mathrm{Prox}_{\frac{1}{L}h} (x_{k}-\frac{1}{L}\nabla f(x_{k})), \label{ISTA}
\end{eqnarray}
where $1/L>0$ is the step size, $L$ is the gradient Lipschitz constant of function $f$, and $\mathrm{Prox}_{\eta h}(\cdot)$ is the proximal operator which is defined as $\mathrm{Prox}_{\eta h}(x):=\mathrm{argmin}_{y\in\mathbb{R}^{d}}\, h(y) +\frac{1}{2\eta}\|y-x\|^{2}$. The convergence of the ISTA is relatively slow, achieving a rate of $\mathcal{O}(\frac{1}{k})$. To address this issue, Beck et al. \cite{BeckT09} introduced a modified version of ISTA called the fast iterative shrinkage thresholding algorithm (FISTA), namely the accelerated proximal gradient algorithm, which is defined as:
\begin{eqnarray}
\begin{aligned}
x_{k+1} =& \mathrm{Prox}_{\frac{1}{L} h} (y_{k}-\frac{1}{L}\nabla f(y_{k})),\\		\alpha_{k+1}=&\frac{1+\sqrt{1+4\alpha_{k}^{2}}}{2},\\
y_{k+1}=&x_{k}+\frac{\alpha_{k}-1}{\alpha_{k+1}}(x_{k}-x_{k-1}),
\end{aligned} \label{FISTA}	
\end{eqnarray}
where $\alpha_{1}=1$, and FISTA archives a faster convergence rate of $\mathcal{O}(\frac{1}{k^{2}})$. Based on the proximal gradient framework, the  optimized gradient method (OGM) \cite{KimF16} can also applied to solve the optimization problem \eqref{target-function} as:
\begin{eqnarray}
\begin{aligned}
y_{k+1}=&\mathrm{Prox}_{\eta h}(x_{k}-\frac{1}{L}\nabla f(x_{k})),\\
x_{k+1}=&y_{k+1}+\frac{\alpha_{k}-1}{\alpha_{k+1}}(y_{k+1}-y_{k})+ \frac{\alpha_{k}}{\alpha_{k+1}}(y_{k+1}-x_{k}).
\end{aligned} \label{POGM}
\end{eqnarray}

We denote this algorithm as the proximal optimized gradient method (POGM). We know the term $\frac{\alpha_{k}-1}{\alpha_{k+1}}(y_{k+1}-y_{k})$ can be viewed as the momentum term, while $\frac{\alpha_{k}}{\alpha_{k+1}}(y_{k+1}-x_{k})$ is called the correction term. However, the convergence of this algorithm is unclear now. Recently, to improve the convergence rate of the FISTA algorithm, Jang et al. \cite{JangGR23} proposed the optimal iterative shrinkage thresholding algorithm (OptISTA), which is defined as:
\begin{eqnarray}
\begin{aligned}
y_{k+1}=&\mathrm{Prox}_{\gamma_{k}\frac{1}{L} h}(y_{k}-\gamma_{k}\frac{1}{L} \nabla f(x_{k})),\\
z_{k+1}=&x_{k}+\frac{1}{\gamma_{k}}(y_{k+1}-y_{k}),\\
x_{k+1}=&z_{k+1}+\frac{\alpha_{k}-1}{\alpha_{k+1}}(z_{k+1}-z_{k})+\frac{\alpha_{k}}{\alpha_{k+1}}(z_{k+1}-x_{k}),
\end{aligned}\label{OptISTA}
\end{eqnarray}
where $\gamma_{k} > 0$. It improves the leading convergence rate of FISTA by a factor of 2 in the $\mathcal{O}\left(\frac{1}{k^2}\right)$ term, although their analysis method is quite complex. When $h(x) = 0$, the OptISTA algorithm simplifies to the optimized gradient method (OGM) \cite{KimF16}. However, the OptISTA algorithm was only analyzed theoretically, and the efficiency of the algorithm was not verified through any numerical experiments.
Other accelerated algorithms, such as FPGM2 \cite{Taylor17} and APIRL1-AM \cite{AdamMHMN23}, can also be employed to solve the optimization problem \eqref{target-function}.

Alternatively, to further enhance the convergence speed of FISTA, Zulfiquar et al. \cite{BhottoAS15} introduced the improved FISTA algorithm (IFISTA). IFISTA accelerates the optimization process by advancing the least-squares gradient descent by $n$ steps, formulated as: $x_{k+1} = \mathrm{Prox}_{\frac{1}{L} h}(y_{k} - \frac{1}{L} W_{n} \nabla f(y_{k}))$ in \eqref{FISTA}, where the weighting matrix is given by:
\begin{eqnarray}
W_{n} &:=& \sum_{i=1}^{n} \binom{n}{i} (-1)^{i-1} \left(\frac{1}{L} A^{T}A\right)^{i-1} \label{weighting_matrix}.
\end{eqnarray}

Based on this framework, Wang et al. \cite{WangWWGC18} proposed the IFISTA beyond Nesterov's momentum (IFISTA-BN) method to accelerate IFISTA by using a series of over-relaxation parameters instead of using the momentum term. Note that IFISTA and IFISTA-BN’s effectiveness was primarily demonstrated in scenarios with low additive white Gaussian noise. To address this limitation, Kumar and Sahoo \cite{KumarS24} proposed the enhanced FISTA (EFISTA) algorithm, which achieves a similar convergence rate.

From the details of the algorithms mentioned above, we observe that the use of momentum techniques or weighting matrix can significantly accelerate convergence and enhance numerical performance. Furthermore, the optimized gradient method (OGM) achieves a faster convergence rate than the Nesterov acceleration method.

In this paper, we introduce two innovative algorithms that integrate the weighting matrix technique within the OGM framework to tackle the optimization problem \eqref{target-function}. Specifically,
\begin{itemize}
    \item \textbf{Well-defined Efficient Algorithm}: To expedite the algorithm's flexibility and convergence, we propose an improved optimal proximal gradient algorithm (referred to as Algorithm \ref{IOptISTA}) designed to address the convex optimization problem \eqref{target-function} effectively.
    \item \textbf{Superior Performance in Non-blind Image Deblurring}: Leveraging the $l_1$ norm and total variation (TV) norm, we demonstrate the numerical efficacy of our proposed algorithms on non-blind image deblurring tasks. Our methods consistently outperform existing algorithms in terms of numerical results.
\end{itemize}

The structure of the subsequent sections of this paper is as follows: Section \ref{algorithms-part} delineates the intricacies of the proposed algorithm (Algorithm \ref{IOptISTA}). Section \ref{numerical_part} showcases experimental outcomes on real-world datasets, thereby highlighting the superior performance of the proposed algorithm across two distinct regularization paradigms. Finally, Section \ref{conclusion-part} encapsulates our conclusions and proposes avenues for future research.


\section{Algorithm} \label{algorithms-part}
In this section, we develop an efficient algorithm to address the convex optimization problem \eqref{target-function}, aiming to achieve superior numerical performance. A natural approach is to incorporate the weighting matrix $W_{n}$ from \eqref{weighting_matrix} into the proximal gradient descent step as shown in \eqref{OptISTA}, i.e.,
\begin{eqnarray}
y_{k+1} &=& \mathrm{Prox}_{\gamma_{k}\eta_{k} h}(y_{k} - \gamma_{k}\eta_{k} W_{n} \nabla f(x_{k})),
\end{eqnarray}
which is equivalent to:
\begin{eqnarray}
y_{k+1} &=& \underset{y}{\text{argmin}} \, h(y) + \langle y - y_{k}, \nabla f(x_{k}) \rangle + \frac{1}{2\gamma_{k}\eta_{k}} \|y - y_{k}\|^{2}_{W_{n}^{-1}}, 
\end{eqnarray}
where $\|x\|^{2}_{W_{n}} = x^{T}W_{n}x$. As far as we know, the OptISTA algorithm \cite{JangGR23} was only analyzed theoretically, and the efficiency of the algorithm was not verified through any numerical experiments. This paper should be the first attempt, and it is combined with the weighting matrix technique to solve practical problems, such as the non-blind image deblurring task \eqref{target-function}.

Based on this modification, we propose the following improved OptISTA (IOptISTA) algorithm. See Figure \ref{details_Alg} for an illustration of the differences and connections between the OptISTA and IOptISTA algorithms.

\begin{algorithm}[H]
\caption{\textbf{IOptISTA:} An improved OptISTA.}
\label{IOptISTA}
{\bfseries Input:} $b,A$, $\{\eta_{k}\}_{k\ge 0}\in(0,\lambda_{\max}^{-1}(A^{T}A)]$, $\{\alpha_{k}\}_{k\ge0}>0$ and $\{\gamma_{k}\}_{k\ge0}>0$. \\
{\bfseries Initialization:} $W_{n}\succ0$,  $x_{0}=y_{0}$.  
\begin{algorithmic}[1] 
\While {stopping criteria is not met} 
\State $y_{k+1}=\mathrm{Prox}_{\gamma_{k}\eta_{k} h}(y_{k}-\gamma_{k}\eta_{k} W_{n}\nabla f(x_{k}))$.
\State $z_{k+1}=x_{k}+\frac{1}{\gamma_{k}}(y_{k+1}-y_{k})$.
\State $x_{k+1}=z_{k+1}+\frac{\alpha_{k}-1}{\alpha_{k+1}}(z_{k+1}-z_{k})+\frac{\alpha_{k}}{\alpha_{k+1}}(z_{k+1}-x_{k})$.
\State $k=k+1$.
\EndWhile
\end{algorithmic} 
{\bfseries Output:}  $x_{k}$ 
\end{algorithm}
In addition, we also given the setting of $\alpha_{k}$ and $\gamma_{k}$ from \cite{JangGR23} in Algorithms \ref{IOptISTA} as following, 
\begin{eqnarray}
\alpha_{k}=\begin{cases}
	1, &\text{ if } k=0,\\
	\frac{1+\sqrt{1+4\alpha_{k-1}^{2}}}{2}, &\text{ if } 1\le k \le K-1,\\
	\frac{1+\sqrt{1+8\alpha_{K-1}^{2}}}{2}, &\text{ if } k=K,
\end{cases}
\end{eqnarray}
where $K$ is the maximum number of iterations in Algorithm \ref{IOptISTA}, and 
\begin{eqnarray}
\gamma_{k}=\frac{2\alpha_{k}}{\alpha_{K}^{2}}(\alpha_{K}^{2} - 2\alpha_{k}^{2}+\alpha_{k}), \quad k=0,1,\dots,K-1.
\end{eqnarray}
We know that $\gamma_{k}$ in Algorithm \ref{IOptISTA} depends on $\alpha_{K}$, thus the total iteration count $K$ must be chosen before the start of the method. In practice, we select the maximum number of iterations $K$, compute the sequences $\{\alpha_{k}\}$ and $\{\gamma_{k}\}$, and then iterate the algorithm. These two steps are not nested but sequential. Generally speaking, the calculation time will hardly increase because other algorithms, such as FISTA and MFISTA, also need time to calculate related parameters. There are many ways to select the step size $\eta_{k}$, such as constant step size, BB step size, adaptive step size, etc. However, this paper studies the non-blind deblurring optimization problem, that is, the operator $A$ in \eqref{target-function} is known. We can directly compute the parameter $L$ of the $L$-smooth function $f$. We set $\eta_{k}=1/L$ in the numerical part (Section \ref{numerical_part}) for simplicity. One advantage of the IOptISTA algorithm is that it is more convenient to use in practice.

Figure \ref{details_Alg} demonstrates that the IOptISTA framework (Algorithm \ref{IOptISTA}) offers greater flexibility and possibly achieves better acceleration than the OptISTA algorithm \cite{JangGR23}. 

\begin{figure}[!ht]
\setlength\tabcolsep{2pt}
\centering
\begin{tabular}{c} 
\includegraphics[width=0.8\textwidth, height=8cm]{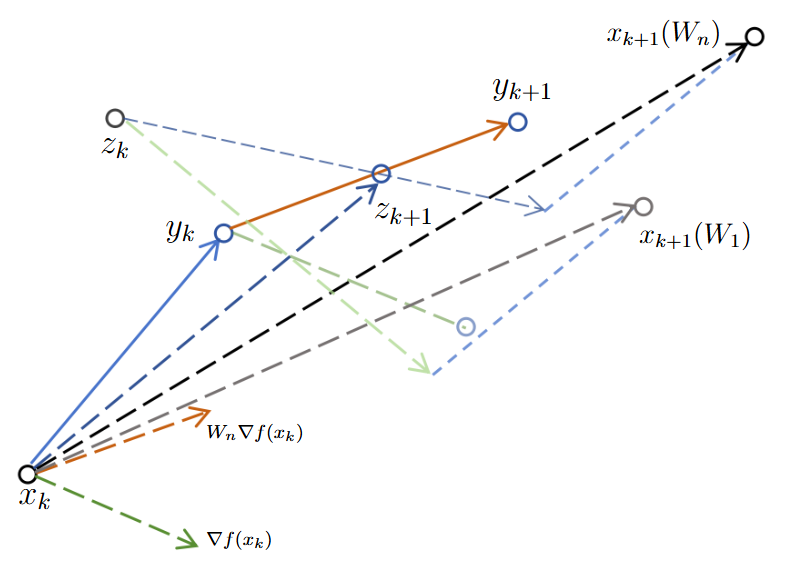} 
\end{tabular}
\caption{The illustration of the IOptISTA and OptISTA algorithms. Where ``$x_{k+1} (W_{1})$'' is the output of OptISTA, ``$x_{k+1}(W_{n})$'' is the output of IOptISTA.  }
\label{details_Alg}
\end{figure}

\begin{remark}
From the IOptISTA framework in Algorithm \ref{IOptISTA}, we have the following observations.
\begin{itemize}
    \item When $W_{n} = W_{1} = I$ (identity matrix), the IOptISTA algorithm simplifies to the OptISTA algorithm \cite{JangGR23}.
    \item When $W_{n} = I$ and $h = 0$ in Algorithm \ref{IOptISTA}, it reduces to the optimized gradient method (OGM) \cite{KimF16}.
\end{itemize}
\end{remark}
The convergence analysis of the IOptISTA algorithm (Algorithm \ref{IOptISTA}) is beyond the scope of this paper, and we leave it as one of the future research topics. It is worth noting that through our experiments, we have found that the proposed algorithm is consistently convergent and can almost always achieve superior numerical results.
\begin{remark}
To ensure the convergence of the algorithm, a viable approach is to incorporate the monotonicity of the objective function ($\phi(x) := f(x) + h(x)$) into the IOptISTA algorithm. For instance, for the $4$-th step in the IOptISTA algorithm (Algorithm \ref{IOptISTA}), we propose the following modification:
\begin{eqnarray}
\begin{aligned}
&\hat{x}_{k+1}=z_{k+1}+\frac{\alpha_{k}-1}{\alpha_{k+1}}(z_{k+1}-z_{k})+\frac{\alpha_{k}}{\alpha_{k+1}}(z_{k+1}-x_{k}),\\
&\text{If } \phi(\hat{x}_{k+1})< \phi(x_{k}) \text{ then } x_{k+1}=\hat{x}_{k+1} \text{ else } x_{k+1}=x_{k}.
\end{aligned}
\end{eqnarray}
We denote this modification based on Algorithm \ref{IOptISTA} as the MOptISTA algorithm. In this way, we can always ensure the algorithm converges, meaning that the objective function $\phi(x_k)$ forms a descending sequence. However, our experiments have revealed that the iteration time for the modified algorithm increases, and the numerical results are slightly inferior to those obtained with Algorithm \ref{IOptISTA}.  See Figure \ref{algorithms_two} for details.  We can see from Figure \ref{algorithms_two} that compared with the MOptISTA algorithm, Algorithm \ref{IOptISTA} can obtain better results in terms of error, PNSR, and SSIM values.
\begin{figure}[!ht]
\setlength\tabcolsep{2pt}
\centering
\begin{tabular}{ccc} 
\includegraphics[width=0.32\textwidth, height=3.5cm]{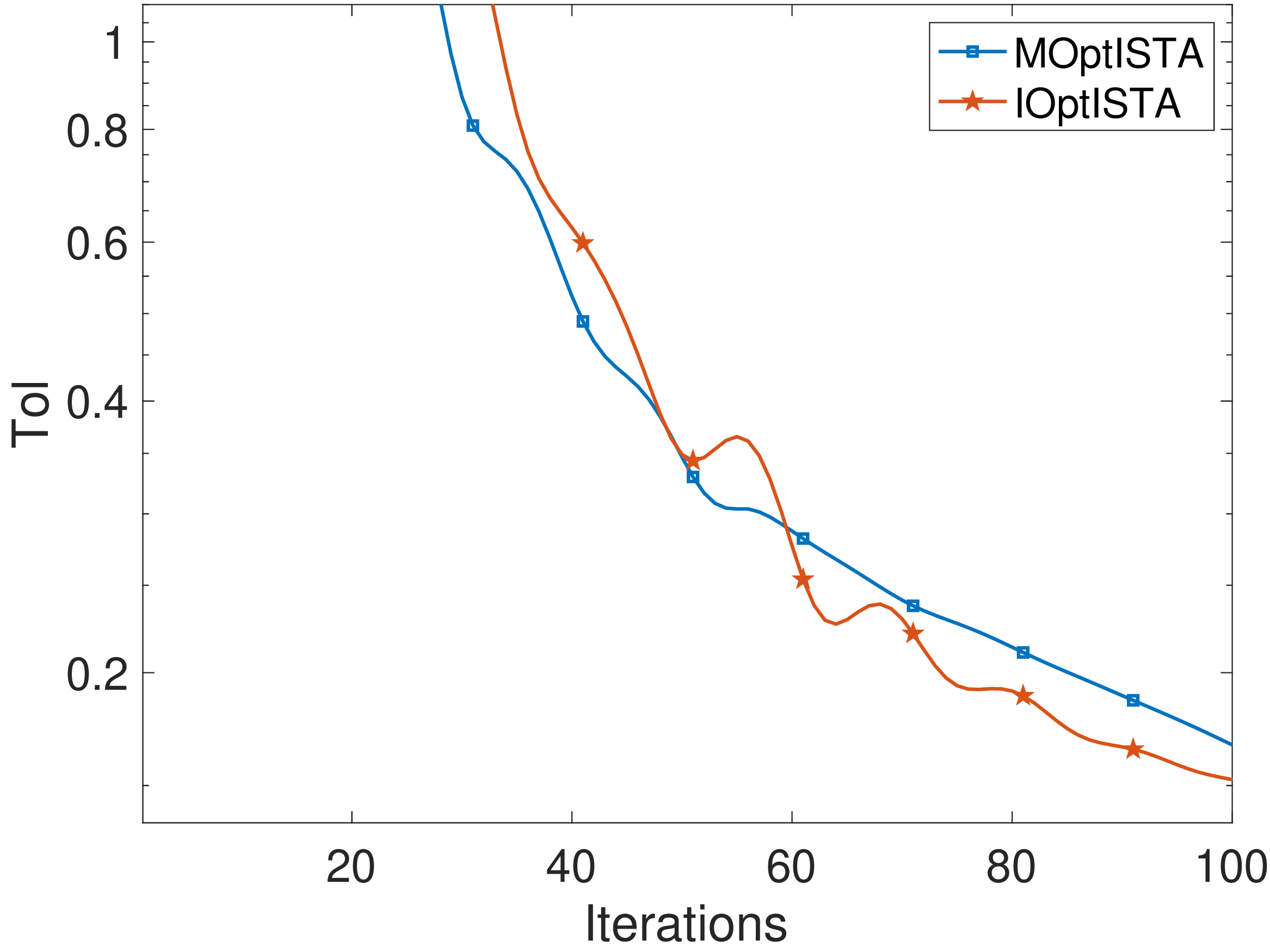} & \includegraphics[width=0.32\textwidth, height=3.5cm]{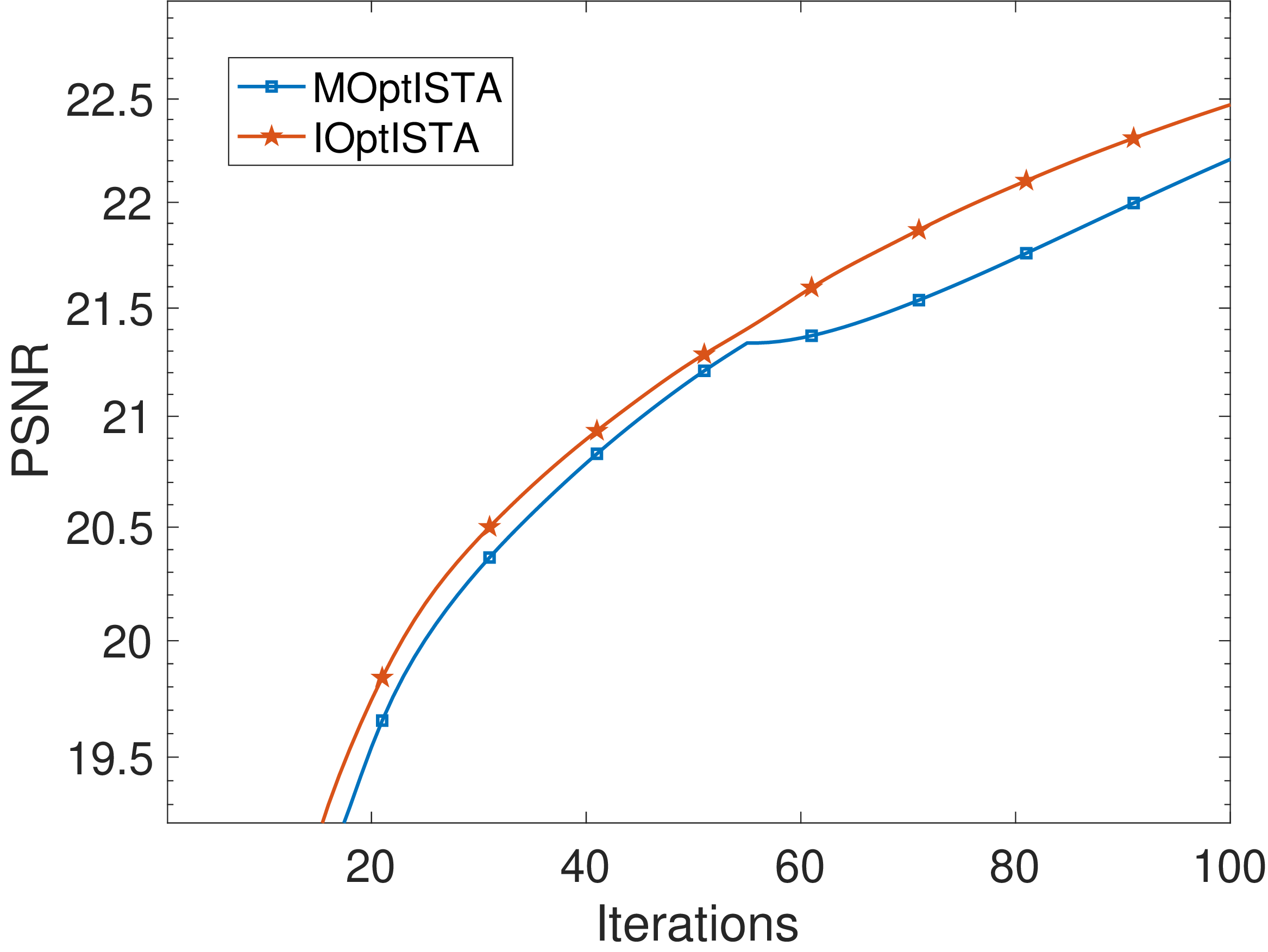}& \includegraphics[width=0.32\textwidth, height=3.5cm]{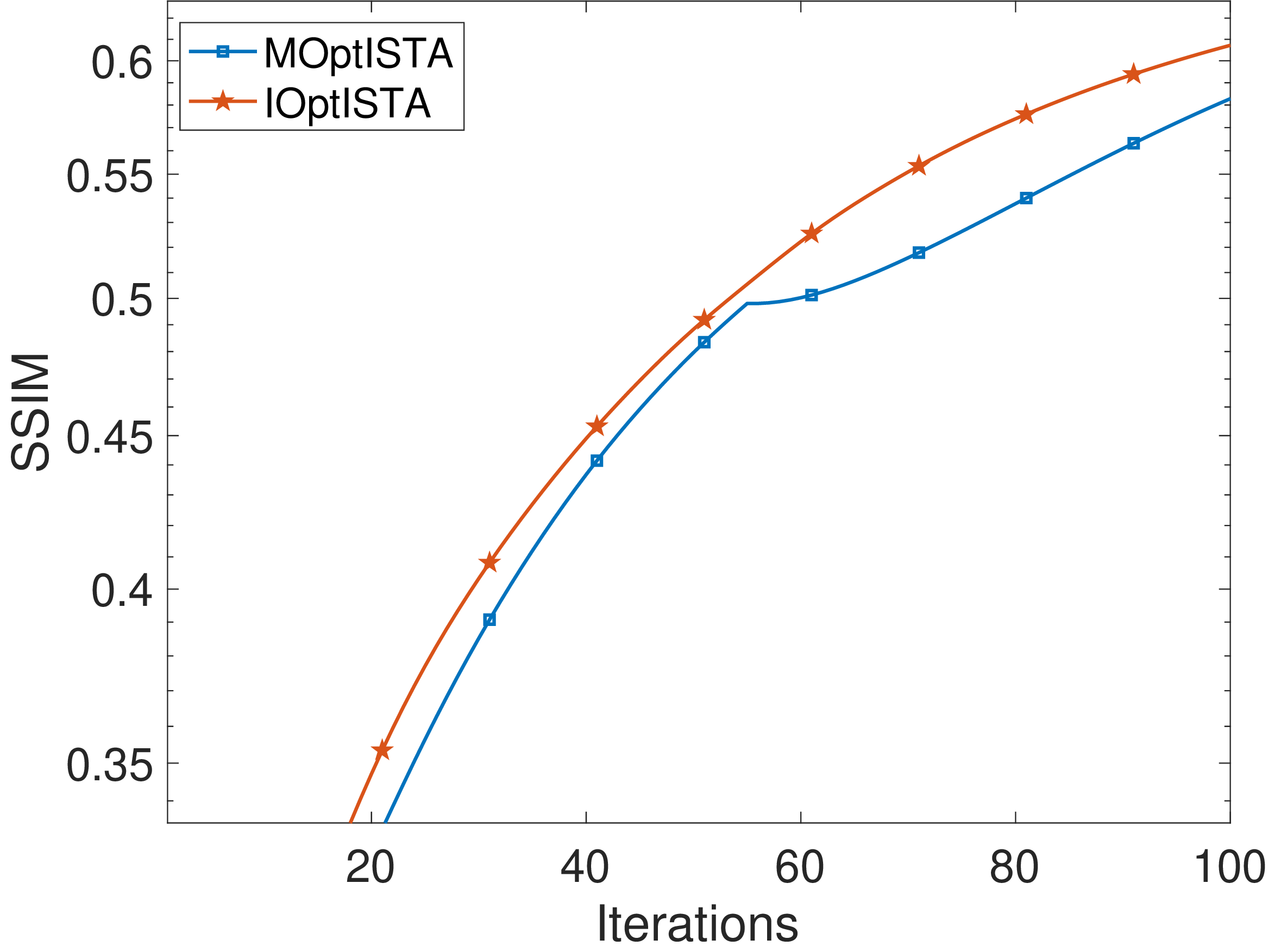}\\
(a) Tol & (b) PNSR & (c) SSIM
\end{tabular}
\caption{The numerical results for Figure \ref{details_images}(a)  with ``$k = \text{fspecial}(\text{`disk'}, 13)$'' and $\varepsilon\sim\mathcal{N}(0, 10^{-3})$.}
\label{algorithms_two}
\end{figure}
Consequently, for the sake of simplicity, we opt to utilize Algorithm \ref{IOptISTA} in this paper.
\end{remark}

In addition, we establish the following property of the IOptISTA algorithm.
\begin{lemma}\label{lemma1}
In Algorithm \ref{IOptISTA}, we have $x_K = y_K$.
\end{lemma}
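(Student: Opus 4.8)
The plan is to convert the vector identity $x_K=y_K$ into a finite set of scalar coefficient identities and verify them by induction. Since $x_0=y_0$ and every line of Algorithm~\ref{IOptISTA} produces an affine combination of earlier iterates, I would first eliminate the auxiliary variable by substituting $z_{k+1}=x_k+\gamma_k^{-1}(y_{k+1}-y_k)$ into Step~4, rewriting the update as $x_{k+1}=(1+a_k+b_k)z_{k+1}-a_kz_k-b_kx_k$ with $a_k=(\alpha_k-1)/\alpha_{k+1}$ and $b_k=\alpha_k/\alpha_{k+1}$. Expanding each iterate in the increment basis as $x_k=y_0+\sum_{j=0}^{k-1}P_{k,j}\,(y_{j+1}-y_j)$, a one-line check shows the coefficients $\{P_{k,j}\}_j$ always sum to one (so each $x_k$ is a genuine affine combination), and the target $x_K=y_K$ becomes $P_{K,j}=1$ for every $j=0,\dots,K-1$. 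This coefficientwise reduction is legitimate because, as the problem data vary, the proximal increments $y_{j+1}-y_j$ span enough independent directions that a universal linear identity must match term by term.

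From the substituted update I would read off three recursions for the coefficients: the newest one $P_{k+1,k}=(1+a_k+b_k)/\gamma_k$; the boundary case $P_{k+1,k-1}=(1+a_k)P_{k,k-1}-a_k/\gamma_{k-1}$; and, for $j\le k-2$, the two-term recursion $P_{k+1,j}-P_{k,j}=a_k\,(P_{k,j}-P_{k-1,j})$. The two defining relations for the parameters are the intermediate rule $\alpha_k^2-\alpha_k=\alpha_{k-1}^2$ (for $1\le k\le K-1$) and the doubled terminal rule $\alpha_K^2-\alpha_K=2\alpha_{K-1}^2$, the latter giving the crucial identity $\alpha_{K-1}^2=\tfrac12\alpha_K(\alpha_K-1)$. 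Writing $\alpha_k-1=\alpha_{k-1}^2/\alpha_k$ lets me recast $a_k=\alpha_{k-1}^2/(\alpha_k\alpha_{k+1})$, which makes the products $\prod_m a_m$ telescope.

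Solving the two-term recursion yields the closed form $P_{K,j}=P_{j+1,j}+D_{j+2,j}\sum_{l=j+2}^{K}\prod_{m=j+2}^{l-1}a_m$ for $j\le K-3$, where $D_{j+2,j}$ is the first difference obtained from the boundary case. The product telescopes to $\prod_{m=j+2}^{l-1}a_m=\alpha_{j+1}^2\alpha_{j+2}/(\alpha_{l-1}^2\alpha_l)$, and the inner sum collapses via the intermediate identity $\alpha_{l-1}^{-2}-\alpha_l^{-2}=(\alpha_{l-1}^2\alpha_l)^{-1}$ together with the terminal correction $\alpha_{K-1}^2=\tfrac12\alpha_K(\alpha_K-1)$ to the clean value $\sum_{l=j+2}^{K}(\alpha_{l-1}^2\alpha_l)^{-1}=\alpha_{j+1}^{-2}-2\alpha_K^{-2}$. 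Substituting this and the explicit $\gamma_j=\frac{2\alpha_j}{\alpha_K^2}(\alpha_K^2-2\alpha_j^2+\alpha_j)$, both sides of $P_{K,j}=1$ reduce to $2\alpha_j-2\alpha_j^2(2\alpha_j-1)\alpha_K^{-2}$, which establishes the generic coefficients. The two remaining coefficients $j\in\{K-2,K-1\}$ are then confirmed directly: the newest one uses $\gamma_{K-1}=2\alpha_{K-1}(\alpha_K+\alpha_{K-1})/\alpha_K^2$ and the terminal rule to give $P_{K,K-1}=1$, and the near-boundary one is the single-term specialization of the same computation.

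The main obstacle is the bookkeeping in this inductive coefficient computation: keeping the three cases straight and evaluating the telescoping sum without error. The conceptual crux, and the place where the construction is genuinely delicate, is that the factor $2$ produced by the doubled terminal rule $\alpha_K^2-\alpha_K=2\alpha_{K-1}^2$ is exactly what converts the naive telescoping tail $\alpha_{j+1}^{-2}-\alpha_{K-1}^{-2}$ into $\alpha_{j+1}^{-2}-2\alpha_K^{-2}$, which is precisely the quantity needed to cancel the horizon-dependent term in $\gamma_j$. Had $\alpha_K$ been generated by the ordinary factor-$4$ rule, the terminal coefficients would fail to collapse to one. I would therefore devote the most care to verifying this terminal matching and to confirming the two edge coefficients by direct substitution.
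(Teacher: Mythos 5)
Your proposal is correct, and I verified the key computations: the three coefficient recursions follow correctly from Steps 3--4 of Algorithm \ref{IOptISTA}, the telescoping product $\prod_{m=j+2}^{l-1}a_m=\alpha_{j+1}^2\alpha_{j+2}/(\alpha_{l-1}^2\alpha_l)$ is right, the collapsed sum $\sum_{l=j+2}^{K}(\alpha_{l-1}^2\alpha_l)^{-1}=\alpha_{j+1}^{-2}-2\alpha_K^{-2}$ checks out using the terminal rule, and the generic coefficient does reduce to $\gamma_j/\gamma_j=1$. Your route is, however, genuinely different from the paper's. The paper (Lemma \ref{lemma3} in the appendix) expands both $x_K$ and $y_K$ in the basis of gradient-plus-subgradient terms $\eta_j(W_n\nabla f(x_j)+h'(y_{j+1}))$, obtaining coefficients $t_{K,j}$ for $x_K$ and $\gamma_j$ for $y_K$, and then closes the argument by importing the identity $t_{K,j}=\gamma_j$ from Lemmas 16 and 18 of Jang et al. Since $y_{j+1}-y_j=-\gamma_j\eta_j(W_n\nabla f(x_j)+h'(y_{j+1}))$, your $P_{K,j}$ equals $t_{K,j}/\gamma_j$, so your target $P_{K,j}=1$ is the same underlying identity --- but you prove it from scratch by telescoping with $\alpha_k^2-\alpha_k=\alpha_{k-1}^2$ and $\alpha_K^2-\alpha_K=2\alpha_{K-1}^2$, which makes the argument self-contained where the paper's is not, and your increment-basis expansion never needs to unpack the proximal step into an explicit subgradient. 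Two minor remarks: the ``spanning'' justification for the coefficientwise reduction is superfluous, since $P_{K,j}=1$ for all $j$ already gives $x_K=y_K$ by direct substitution (independence of the increments would only be needed for the converse implication, which the lemma does not assert); and you should note in passing that $\alpha_0=1$ kills the coefficient of the undefined $z_0$ in the first $x$-update, so the induction starts cleanly (the paper glosses over this point as well).
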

\begin{proof}
The proof of this lemma is detailed in \ref{pf_lemma1}.
\end{proof}
Based on Lemma \ref{lemma1}, we define the output of Algorithm \ref{IOptISTA} as $x_k$, rather than $y_k$, which is the convention in other acceleration algorithm frameworks, such as FISTA \cite{BeckT09}, IFISTA \cite{BhottoAS15}, and EFISTA \cite{KumarS24}.

\section{Numerical experiments} \label{numerical_part}
In this section, we conduct a numerical study to evaluate the performance of the proposed algorithm under various regularization settings for non-blind image deblurring tasks. Furthermore, we compare the performance of our algorithm (Algorithm \ref{IOptISTA}) with several well-established methods, including ISTA \cite{DaubechiesDM04}, IISTA (Improved ISTA), FISTA \cite{BeckT09}, EFITSA \cite{KumarS24}, and OptISTA \cite{JangGR23}. We do not compare with the IFISTA algorithm \cite{BhottoAS15} because it is a special case of the EFISTA algorithm \cite{KumarS24}. The numerical experiments are implemented using MATLAB R2019a and executed on a computer equipped with an Intel CORE i7-14700KF @ 3.40GHz processor and 64GB of RAM. 

The algorithm terminates under one of the following three conditions:
\begin{itemize}
    \item[(1)] The maximum run time ($T$) is reached.
    \item[(2)] The maximum number of iterations ($K$) is reached.
    \item[(3)] All compared algorithms terminate when the tolerance level is satisfied:
    \begin{eqnarray}
        \text{Tol} := \frac{1}{2} \|Ax - b\|^2 \leq 10^{-4}.
    \end{eqnarray}
\end{itemize}

To quantitatively assess the quality of the restored images, we utilize the peak signal-to-noise ratio (PSNR)\footnote{$\text{PSNR} = 20 \log_{10} \left( \frac{255^2}{\text{MSE}} \right)$, where $\text{MSE} = \|\hat{x} - x\|^2$, $\hat{x}$ and $x$ are the restored and original images, respectively.} and the structural similarity index (SSIM)\footnote{$\text{SSIM} = \frac{(2u_{\hat{x}}u_{x} + c_1)(2\sigma_{\hat{x}x} + c_2)}{(u_{\hat{x}}^2 + u_x^2 + c_1)(\sigma_{\hat{x}}^2 + \sigma_x^2 + c_2)}$, where all parameters are defined as in \cite{WangBSS04}.} \cite{WangBSS04}.

We utilize six images to demonstrate the numerical performance of the proposed algorithms, as depicted in Figure \ref{details_images}. 
\begin{figure}[!ht]
	\setlength\tabcolsep{2pt}
	\centering
	\begin{tabular}{cccccc} 
		\includegraphics[width=0.15\textwidth]{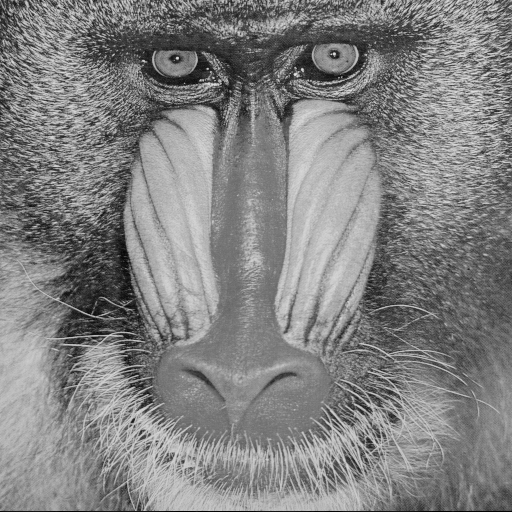} & \includegraphics[width=0.15\textwidth]{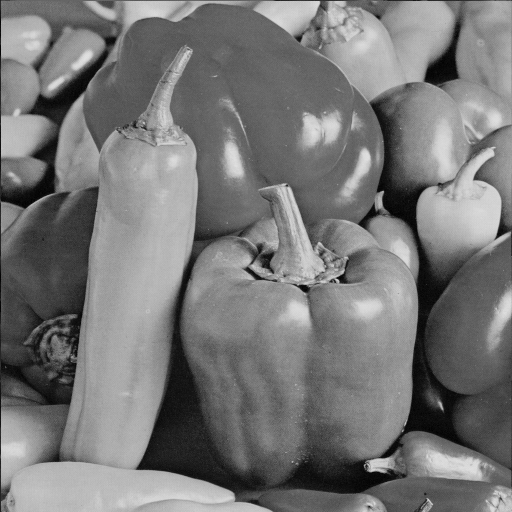}&
		\includegraphics[width=0.15\textwidth]{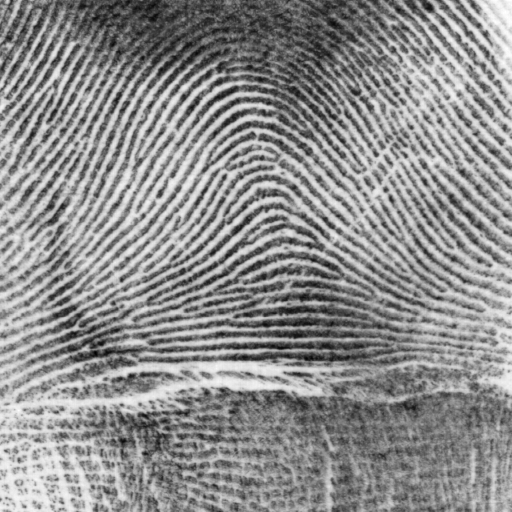}&
		\includegraphics[width=0.15\textwidth]{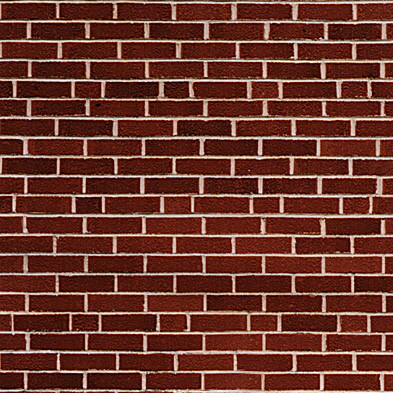}& \includegraphics[width=0.15\textwidth]{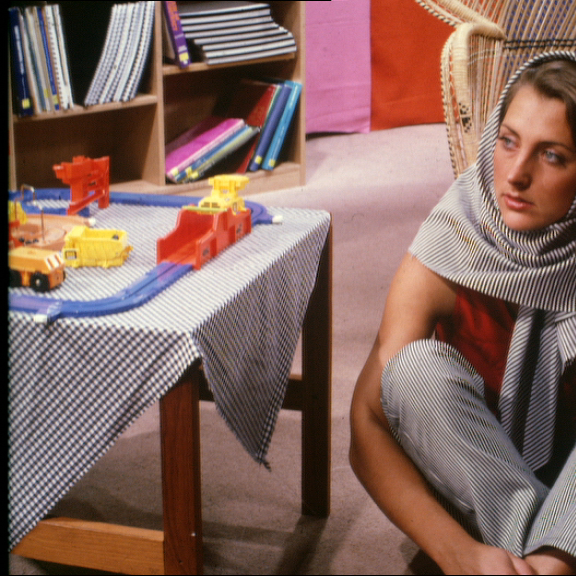}&
		\includegraphics[width=0.15\textwidth]{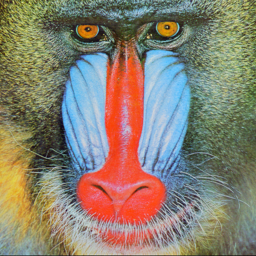}\\
		(a) Img01 & (b) Img02 & (c) Img03&(d) Img04 & (e) Img05 & (f) Img06
	\end{tabular}
	\caption{Details of the test images.} 
	\label{details_images}
\end{figure}

In addition, we use the commands ``disk'' and ``gaussian'' in MATLAB to blur the image, i.e., ``$s = \text{fspecial}(\text{`disk'}, a)$'' or ``$s = \text{fspecial}(\text{`gaussian'}, b, c)$'' with positive constants $a, b, c > 0$, and then processed with ``$\text{imfilter}(\text{im}, s, \text{`circular'})$'' to produce blurred images.

In this section, we consider two regularization cases for $h(x)$: the $l_1$ norm \cite{Tibshirani96} and the total variation (TV) norm \cite{RudinOF92}.
\subsection{Setting of \texorpdfstring{$n$}{n} in \texorpdfstring{$W_{n}$}{Wn}}

In this subsection, we utilize Figure \ref{details_images}(a) and (d) to assess the numerical performance of $W_n$ (as defined in \eqref{weighting_matrix}) under various values of $n$ within Algorithm \ref{IOptISTA}. For the sake of simplicity, we focus on the scenario where $h = \lambda \|x\|_1$ with $\lambda = 0.0001$. Across all compared algorithms, the parameters are configured as $T = 20$ seconds, $K = 300$ iterations, and the initial iteration point is set to zero. 

Subsequently, for this deblurring model, we employ Gaussian noise and disk blur algorithms for validation. Specifically, for Gaussian noise, we examine two cases: $\varepsilon \sim \mathcal{N}(0, 10^{-4})$ and $\varepsilon \sim \mathcal{N}(0, 5 \times 10^{-4})$. Regarding the blur kernel, we consider different levels of blurring. For the selection of $n$ in $W_n$, we explore 7 scenarios, namely $n = \{2, 4, 6, 8, 10, 12, 14\}$. 

To demonstrate the enhancement achieved by incorporating $W_n$ in IOptISTA (Algorithm \ref{IOptISTA}), we also contrast it with the OptISTA algorithm, which corresponds to the case of $W_n = W_1 = I$ in Algorithm \ref{IOptISTA}. Refer to Table \ref{real_Wn_table} for the outcomes. Additionally, we provide a detailed view of the last two columns in Table \ref{real_Wn_table}, as shown in Figure \ref{tps_results_Wn_img01}.

\begin{table*}[thp]
\begin{center}
	\fontsize{6}{11}\selectfont
	\setlength{\tabcolsep}{1.2mm}
	\caption{The numerical results for the optimization problem \eqref{target-function} with $h(x)=\lambda\|x\|_{1}$ for disk blurring kernel, and  different $n$ in $W_{n}$ and noises. 
	} 
\label{real_Wn_table}
\begin{tabular}{c|c|c c | cc | cc| cc cc } 
	\hline 
	 & Images & \multicolumn{4}{c|}{Img01}  & \multicolumn{4}{c}{Img04}  \\\cline{2-10}
& Noises & \multicolumn{2}{c|}{$\varepsilon\sim\mathcal{N}(0,10^{-4})$} & \multicolumn{2}{c|}{$\varepsilon\sim\mathcal{N}(0,5*10^{-4})$} & \multicolumn{2}{c|}{$\varepsilon\sim\mathcal{N}(0,10^{-4})$} &  \multicolumn{2}{c}{$\varepsilon\sim\mathcal{N}(0,5*10^{-4})$}\\\cline{2-10}
	Algorithm & Kernels& 12 & 14.5 & 12 & 13.5 & 7 & 12  & 7 & 11  \\\hline
\multirow{3}{*}{OptISTA} & Tol & 6.3e-2 & 5.5e-2 &  8.8e-2 & 8.4e-2 & 1.3e-1 & 1.6e-1 & 1.4 & 1.7e-1\\
& PSNR &22.60 &  22.28&  22.57 & 22.38 & 22.11 & 18.76 & 22.09 & 18.75\\
& SSIM &0.6227 & 0.5901&  0.6190 & 0.6018 & 0.6679 & 0.5738 & 0.6670 & 0.5726\\ \hline
\multirow{3}{*}{IOptISTA, $n=2$} & Tol & 3.2e-2 & 2.9e-2&  5.5e-2 & 5.3e-2 & 4.4e-2 & 4.6e-2& 5.4e-2 & 5.9e-2\\
& PSNR & 23.44 & 23.11 &  23.37 & 23.17 & 24.14 & 20.65& 24.05 & 20.60\\
& SSIM & 0.6938 & 0.6660&  0.6829 & 0.6684 & 0.7378 & 0.6419& 0.7342 & 0.6388\\ \hline
\multirow{3}{*}{IOptISTA, $n=4$} & Tol & 1.6e-2 & 1.5e-2&  3.6e-2 & 3.6e-2 & 1.7e-2 & 1.5e-2 & 2.5e-2 & 2.6e-2\\
& PSNR & 24.46 & 24.14 &  24.20 & 23.94 & 26.14 & 22.18 & 25.79 & 22.01\\
& SSIM & 0.7593 & 0.7369 &  0.7305 & 0.7164 & 0.7995 & 0.6935& 0.7884 & 0.6850\\ \hline
\multirow{3}{*}{IOptISTA, $n=6$} & Tol & 1.0e-2 & 9.5e-3 &  2.8e-2 & 2.8e-2 & 8.2e-3 & 8.8e-3 & 1.5e-2 & 1.9e-2\\
& PSNR & 25.13 & 24.83 &  24.62 & \textbf{24.21} & 27.20 & 22.89 & 26.51 & 22.56\\
& SSIM & 0.7927 & 0.7743 &  0.7448 & \textbf{0.7291} & 0.8296 & 0.7168& 0.8096 & 0.7016\\ \hline
\multirow{3}{*}{IOptISTA, $n=8$} & Tol & 7.4e-3 & 7.5e-3 &  2.4e-2 & 2.5e-2 & 6.5e-3 & 6.1e-3& 1.3e-2 & 1.5e-2\\
& PSNR & 25.62 & 25.34 &  24.83 & \textbf{24.21}& 27.79 & 23.33& 26.74 & 22.81\\
& SSIM & 0.8139 & 0.7973 &  0.7475 & 0.7286 & 0.8447 & 0.7311& 0.8163 & 0.7086\\ \hline
\multirow{3}{*}{IOptISTA, $n=10$} & Tol & 5.9e-3 & 6.7e-3 &  2.1e-2 & 2.3e-2& 5.6e-3 & 5.1e-3& 1.1e-2 & 1.3e-2\\
& PSNR &26.03 & 25.72&  \textbf{24.91} & 24.06 & 28.24 & 23.64& 26.83 & 22.92\\
& SSIM &0.8284 & 0.8130 &  \textbf{0.7442} & 0.7224 & 0.8548 &0.7438 & 0.8186 & 0.7132\\ \hline
\multirow{3}{*}{IOptISTA, $n=12$} & Tol & 4.8e-3 & \textbf{7.0e-3}&  1.9e-2 & 2.1e-2& 4.7e-3 & \textbf{4.1e-3}& 1.0e-2 & \textbf{1.2e-2}\\
& PSNR &26.36 & \textbf{26.00}&  24.90 & 23.83 & 28.65 & \textbf{23.89}& 26.88 & \textbf{22.96}\\
& SSIM &0.8392 & \textbf{0.8284}&  0.7383 & 0.7136 & 0.8632 & \textbf{0.7539}& 0.8196 & \textbf{0.7154}\\ \hline
\multirow{3}{*}{IOptISTA, $n=14$} & Tol & \textbf{4.0e-3} & 1.6e+31&  \textbf{1.6e-2} & \textbf{2.0e-2} & \textbf{3.7e-3} & 2.5e+19& \textbf{8.6e-3} & 2.5e+19\\
& PSNR &\textbf{26.64} & $<0$&  24.84 & 23.58& \textbf{29.05} & $<0$& \textbf{26.91} & $<0$\\
& SSIM &\textbf{0.8475} & $<0$&  0.7311 & 0.7039& \textbf{0.8719} & $<0$& \textbf{0.8208} & $<0$\\ \hline
\end{tabular}
\end{center}
\end{table*}

\begin{figure}[!ht]
\setlength\tabcolsep{2pt}
\centering
\begin{tabular}{ccc} 
\includegraphics[width=0.32\textwidth, height=3.5cm]{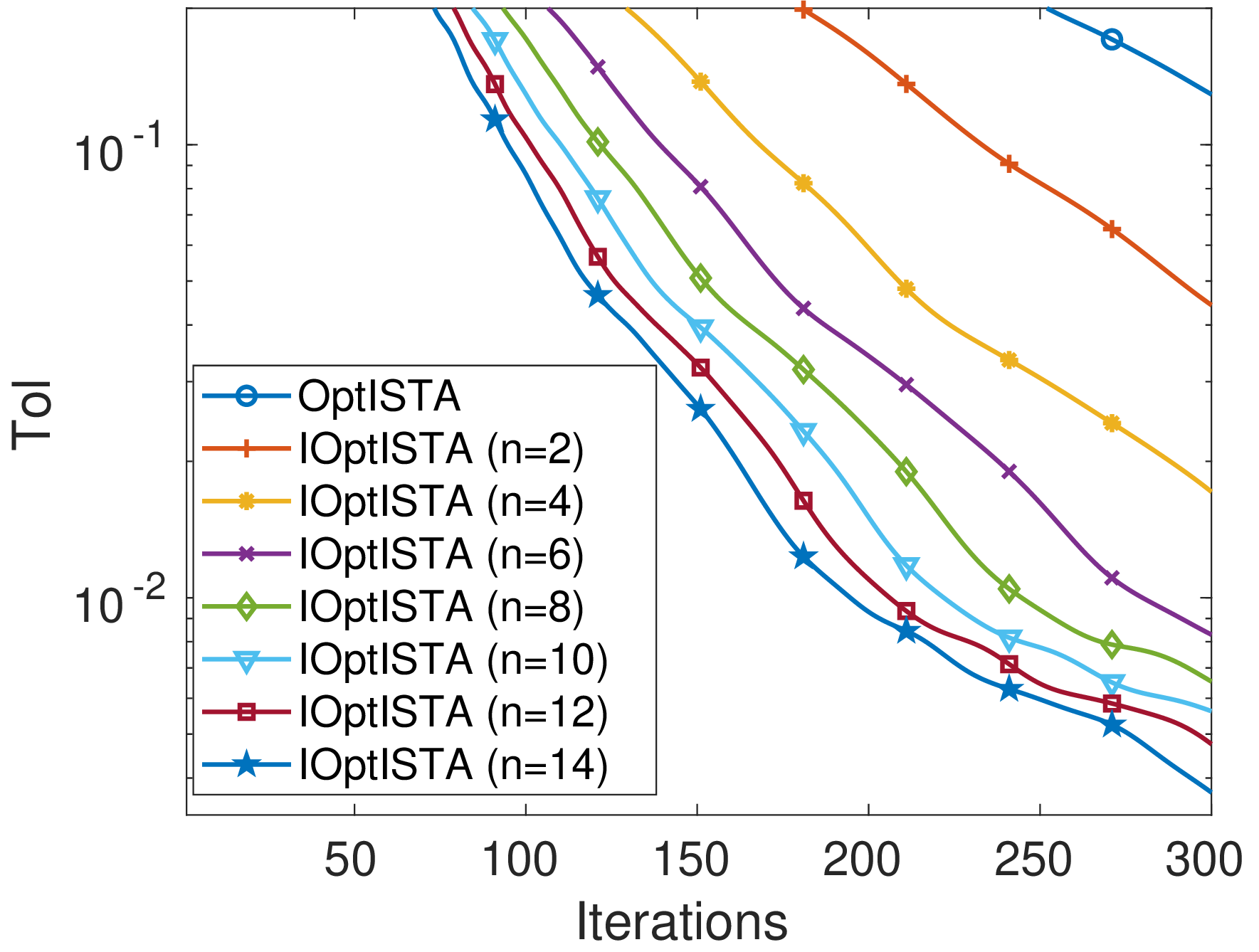} & \includegraphics[width=0.32\textwidth, height=3.5cm]{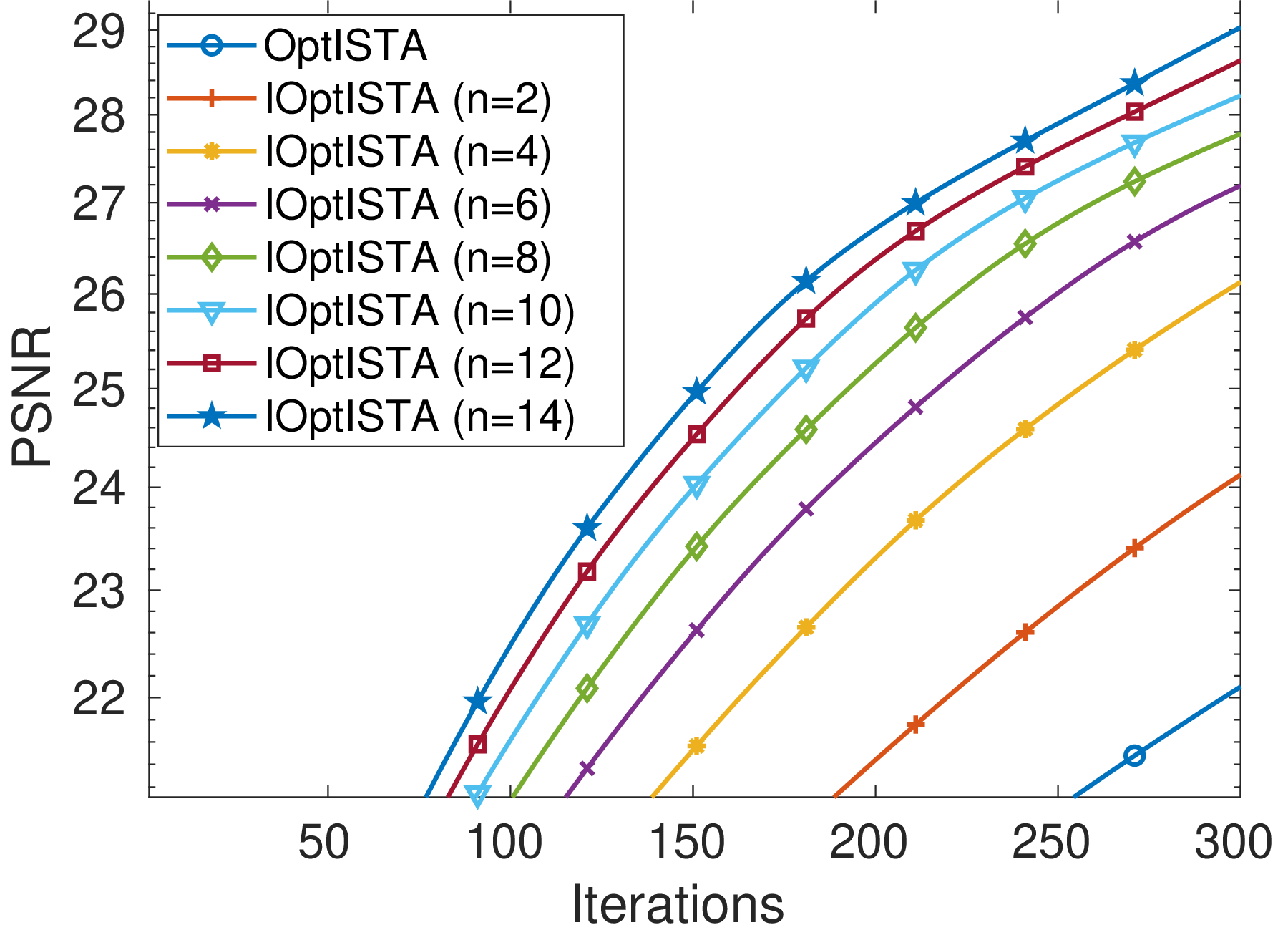}& \includegraphics[width=0.32\textwidth, height=3.5cm]{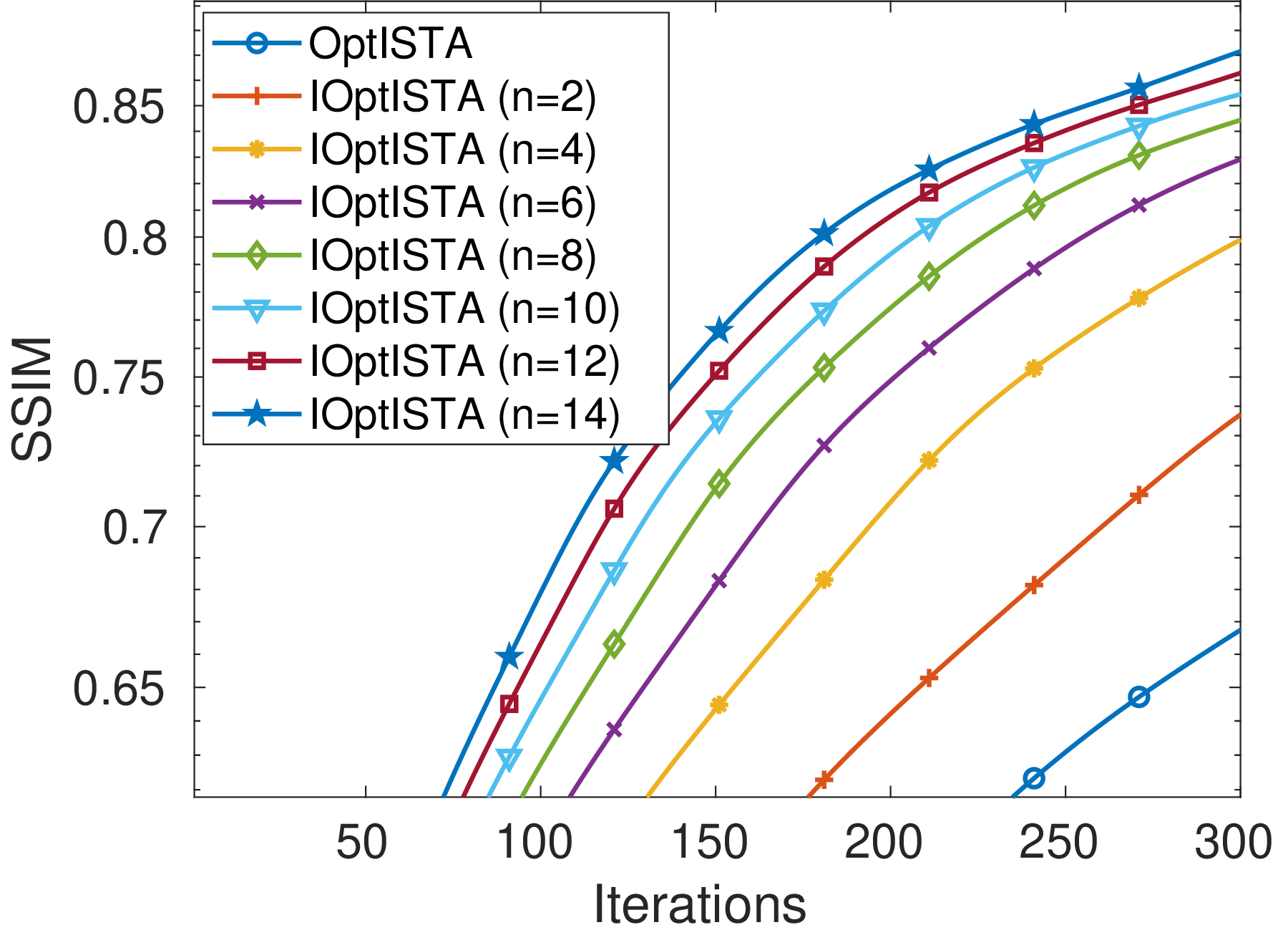}\\
(a) Tol & (b) PNSR & (c) SSIM\\
\includegraphics[width=0.32\textwidth, height=3.5cm]{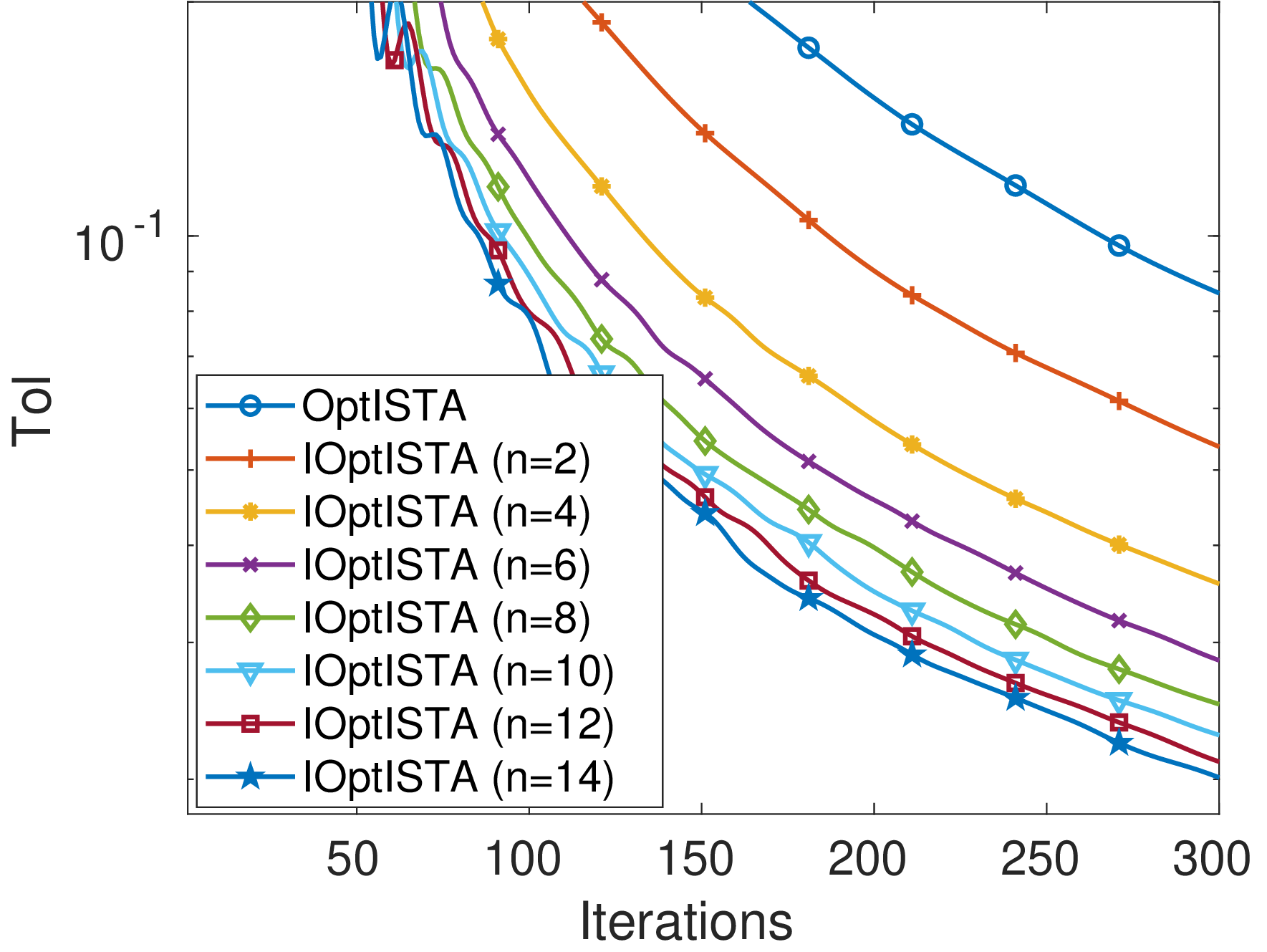} & \includegraphics[width=0.32\textwidth, height=3.5cm]{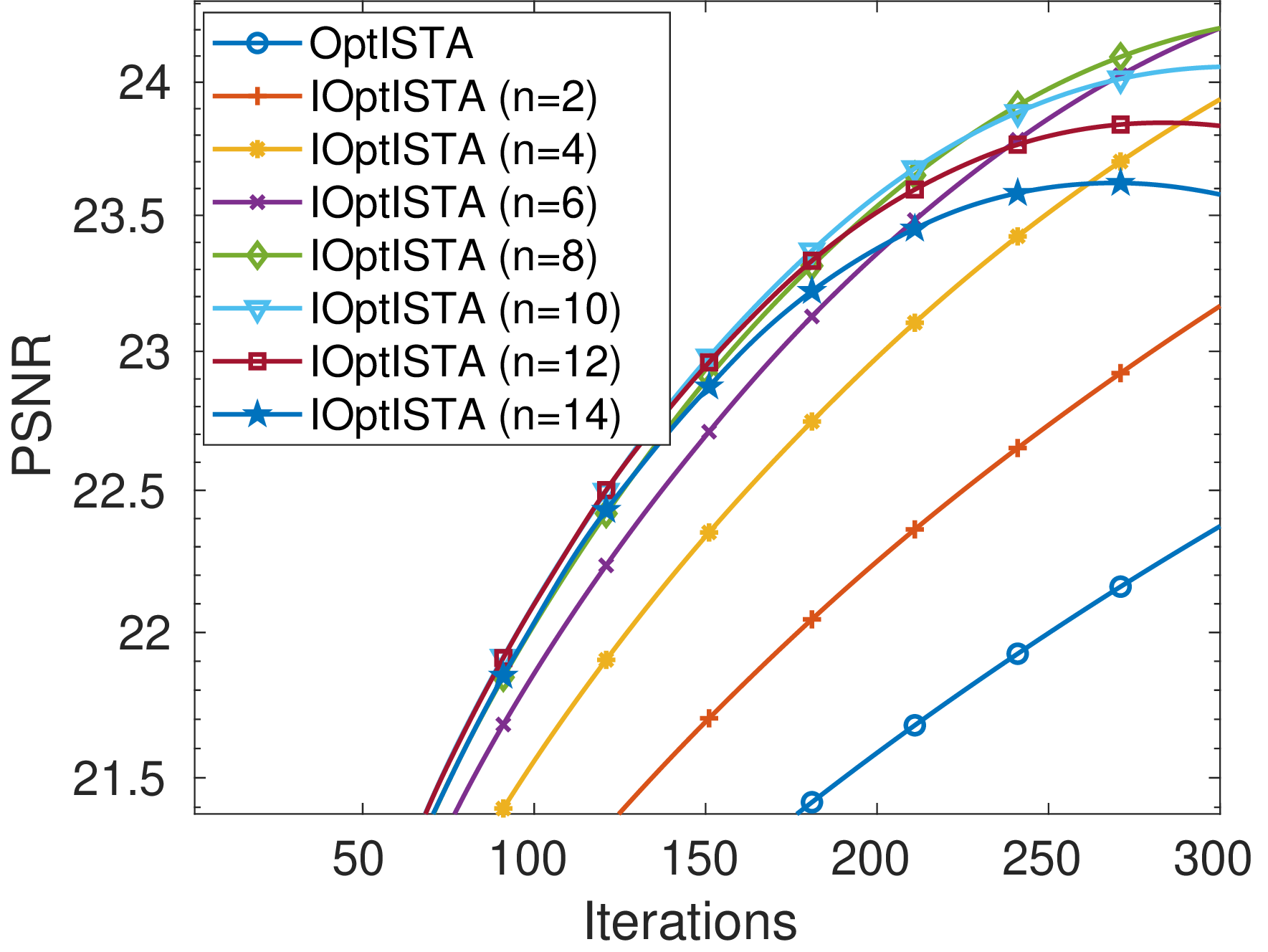}& \includegraphics[width=0.32\textwidth, height=3.5cm]{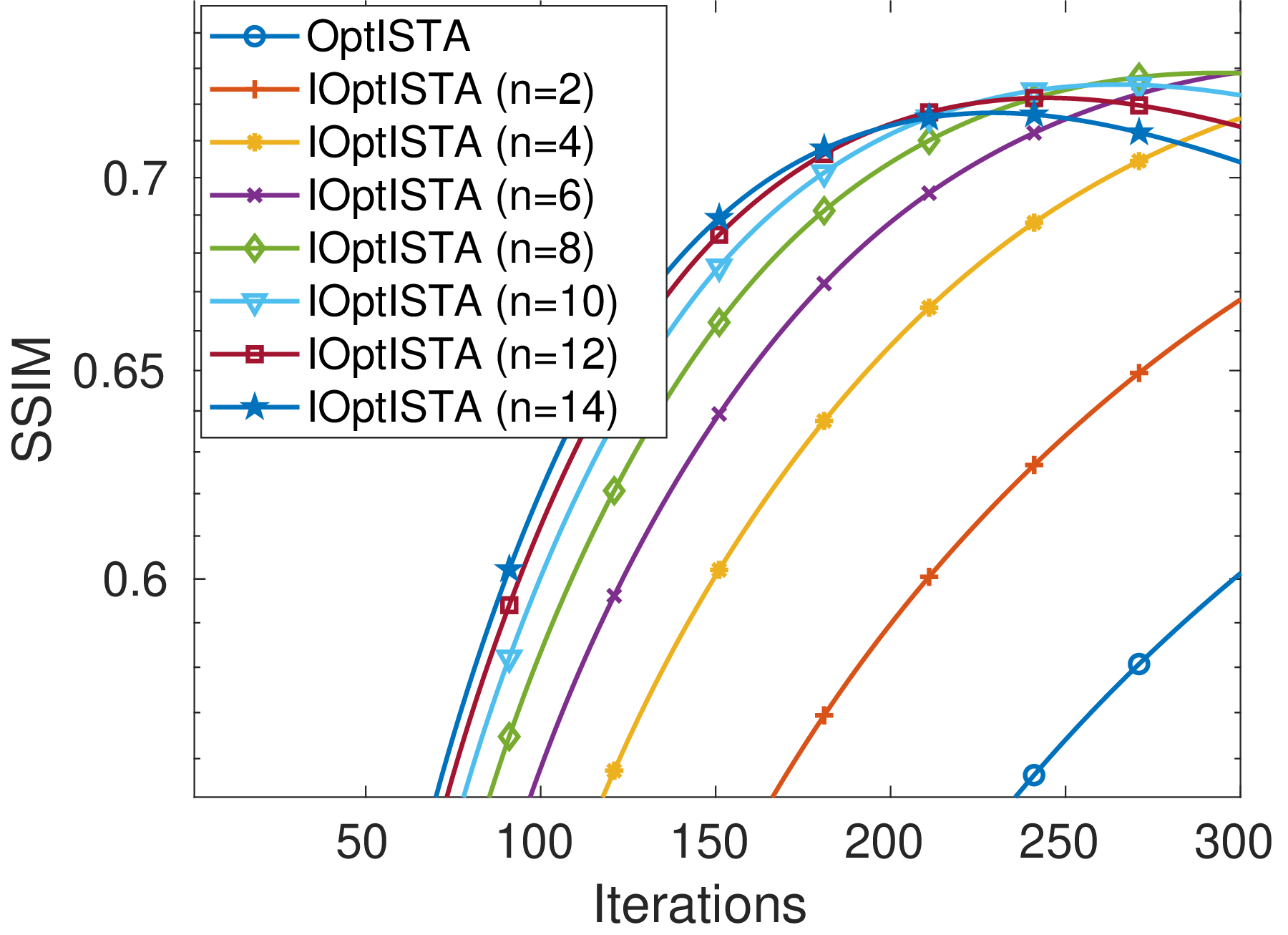}\\
(d) Tol & (e) PNSR & (f) SSIM\\
\includegraphics[width=0.32\textwidth, height=3.5cm]{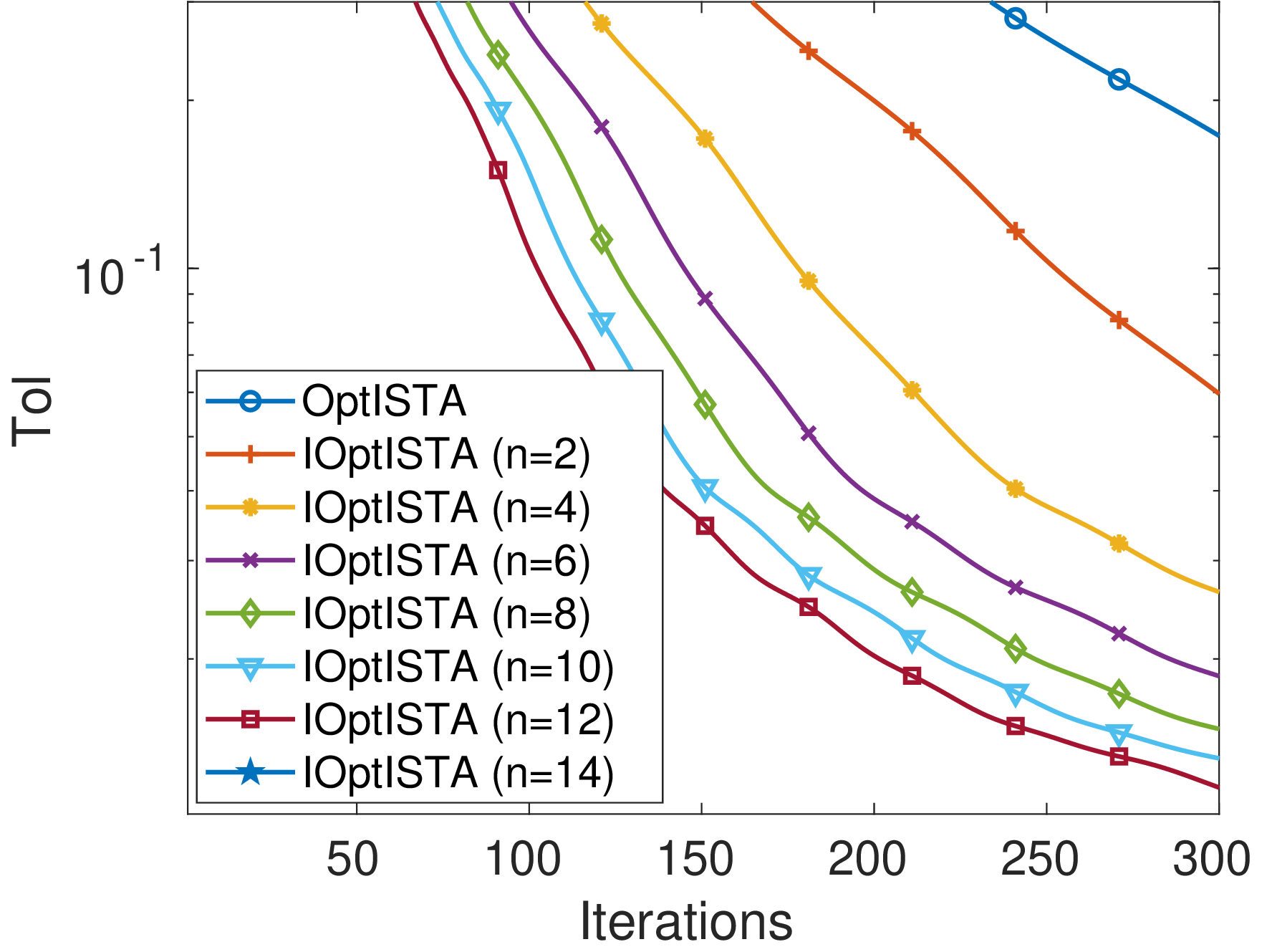} & \includegraphics[width=0.32\textwidth, height=3.5cm]{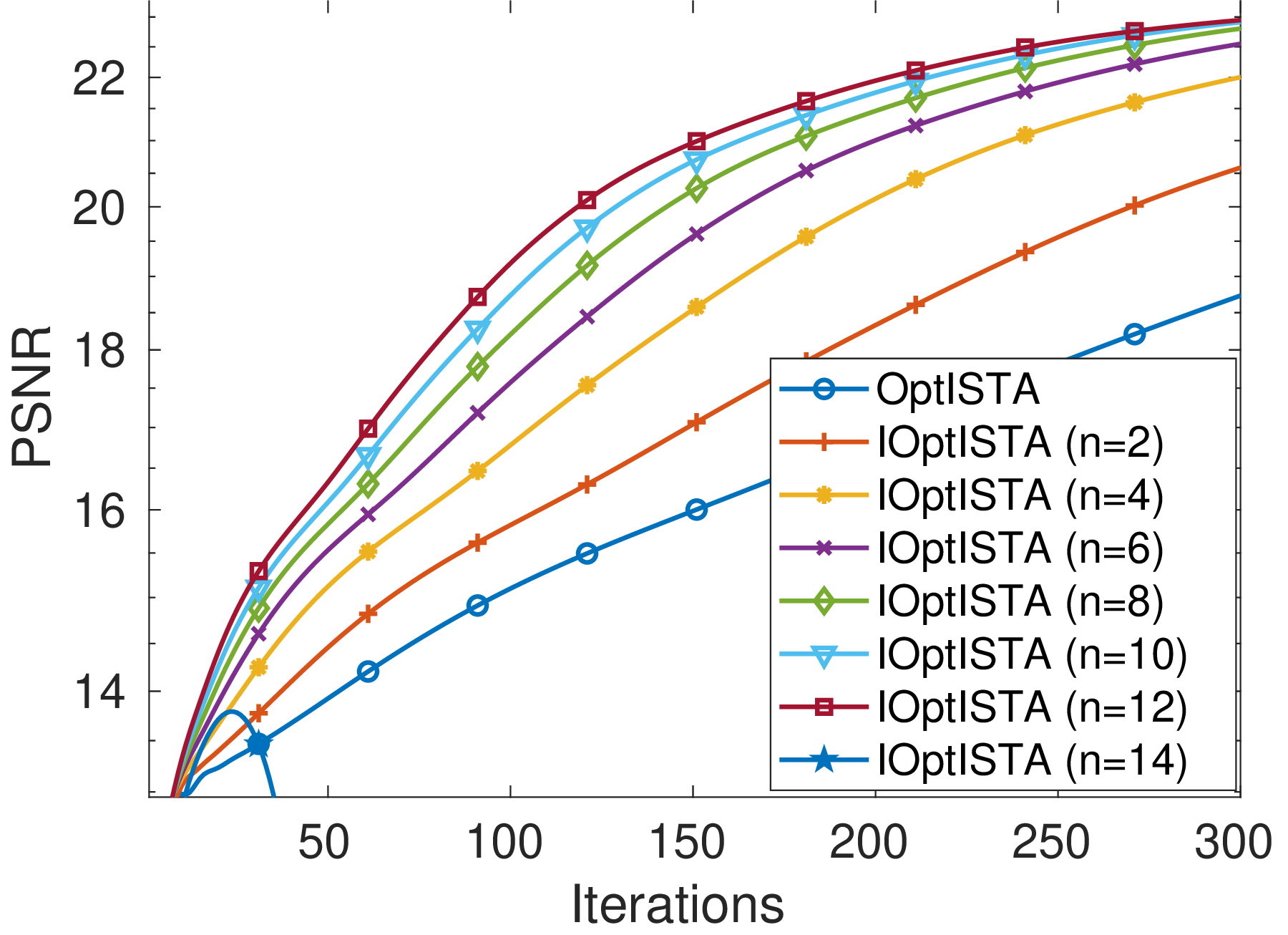}& \includegraphics[width=0.32\textwidth, height=3.5cm]{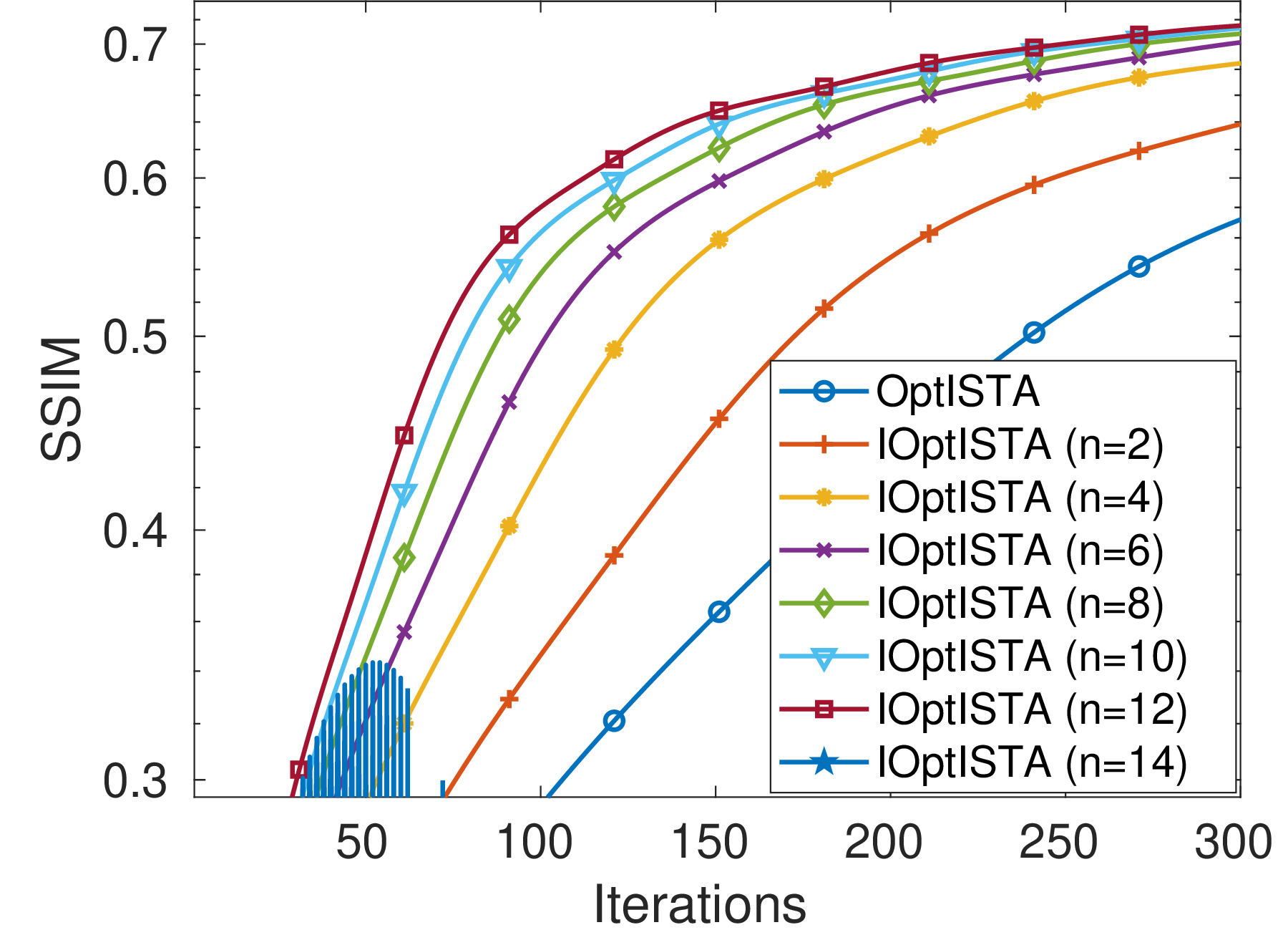}\\
(g) Tol & (h) PNSR & (i) SSIM
\end{tabular}
\caption{The Tol, PSNR and SSIM results for different images, noises and  $n$ in $W_{n}$. First row: Figure \ref{details_images}(d) with ``$k = \text{fspecial}(\text{`disk'}, 7)$'' and $\varepsilon\sim\mathcal{N}(0,1e-4)$. Second row:  Figure \ref{details_images}(a) with ``$k = \text{fspecial}(\text{`disk'}, 13.5)$'' and $\varepsilon\sim\mathcal{N}(0,5e-4)$. Third row: Figure \ref{details_images}(d) with ``$k = \text{fspecial}(\text{`disk'}, 11)$'' and $\varepsilon\sim\mathcal{N}(0,5e-4)$.
}
\label{tps_results_Wn_img01}
\end{figure}

By comparing Table \ref{real_Wn_table} and Figure \ref{tps_results_Wn_img01}, we can discern the following observations:
\begin{itemize}
    \item When the noise variance and blur degree are relatively low, a larger $n$ in $W_n$ yields lower error and superior PSNR and SSIM values for the proposed algorithm.
    \item Under a constant noise variance, as the blur level increases, increasing $n$ within a certain range leads to improved numerical results. However, beyond a certain threshold, an excessive value of $n$ in $W_n$ can degrade the results or even cause the algorithm to diverge and fail.
    \item If the blur degree is held constant, a similar numerical effect is observed as the noise variance increases.
\end{itemize}

Through numerical experiments, we have determined that a larger $n$ in $W_n$ of Algorithm \ref{IOptISTA} does not automatically result in better numerical performance. This relationship is dependent on both the noise variance and the degree of blur.

We observed that the impact of increasing $n$ on the algorithm's numerical performance is not straightforward; it is influenced by the variance of the noise, the degree of blur and the step size of the proposed algorithm. When $n=14$, one of the main reasons for the algorithm divergence should be that the step size is too large. In summary, under fixed step size, for scenarios with low noise variance and mild blur, selecting a relatively large $n$ can further enhance the algorithm's numerical performance. Conversely, for cases with higher noise variance or more severe blur, a smaller $n$ is preferred to ensure algorithm stability, even if it does not yield the best numerical results. Nonetheless, all selections of $W_{n}$ lead to improved performance compared to algorithms without the use of $W_n$. 

For example, algorithms such as IEFISTA \cite{BhottoAS15}, IEFISTA-BN \cite{WangWWGC18}, and EFISTA \cite{KumarS24}, which have convergence theoretical guarantees, will also diverge in numerical experiments when $n$ is large and the step size is not changed, just like the last case in Table \ref{real_Wn_table}.

The numerical disparity between employing $W_n$ and not employing it within an algorithm remains significantly large, thus warranting further exploration into the selection methodology for $n$ as part of our future research endeavours. For simplicity, we fixed $n=12$ in $W_{n}$ for all compared algorithms in the following numerical experiments.

\subsection{Case one: \texorpdfstring{$l_{1}$}{l1} norm regularization} \label{l1-regularization}
In this subsection, we focus on the case where $h(x) = \lambda \|x\|_1$ \cite{Tibshirani96,Lu17c} in the optimization problem \eqref{target-function}, which is formulated as:
\begin{eqnarray}
	\min_{x \in \mathbb{R}^n} \quad \frac{1}{2} \|Ax - b\|^2 + \lambda \|x\|_1, \label{l1-model}
\end{eqnarray}
where $\lambda > 0$ is a constant parameter, and $\|x\|_1 := \sum_{i=1}^{n} |x_i|$. We set $\lambda = 0.0001$ to evaluate the performance of all compared algorithms. For these algorithms, the parameters are configured with $T = 20$ seconds, $K = 300$ iterations, and the initial iteration point is set to zero.

We employ six images within the optimization problem \eqref{l1-model} to assess the numerical performance of the proposed algorithms under various blurred kernels and noise conditions. The results are compiled in Table \ref{real_l1_table}.

\begin{table*}[thp]
\begin{center}
	\fontsize{6}{11}\selectfont
	\setlength{\tabcolsep}{1.2mm}
	\caption{The numerical results for the optimization problem \eqref{l1-model} with different blurred kernels and noises under six compared algorithms. 
	} 
\label{real_l1_table}
\begin{tabular}{c|c|c c | cc | cc| cc| cc| cc cc } 
	\hline 
	 & Images & \multicolumn{2}{c|}{Img01} & \multicolumn{2}{c|}{Img02} & \multicolumn{2}{c|}{Img03} &  \multicolumn{2}{c|}{Img04} &\multicolumn{2}{c|}{Img05} &\multicolumn{2}{c}{Img06}\\\cline{2-14}
	Algorithm & Kernels& 12.5 &(24,40) & 12 & (24,40) & 13 & (20,30)  & 8 & (15,30) & 9  & (25,35) & 8 & (10,23)  \\\hline
    \multicolumn{14}{c}{Noise $\varepsilon\sim \mathcal{N}(0,10^{-4})$} \\\hline
\multirow{3}{*}{ISTA} & Tol & 2.6e0 &2.3e0 & 3.0e0 & 3.0e0  & 7.9e0 & 8.9e0 & 1.4e+1 & 2.2e+1 & 6.8e0 & 6.9e0 & 6.5e-1 & 9.0e-1  \\
	  & PSNR & 19.70 &19.64 & 23.89 & 23.56  & 16.43 & 16.61 & 14.93 & 15.79 & 21.29 & 20.31 &21.57 & 22.40\\ 
      & SSIM & 0.2934 &0.2906 & 0.6669 & 0.6609  & 0.1911 & 0.2221 & 0.2602 & 0.3784 & 0.5269 & 0.4821 & 0.3937 & 0.4644  \\\hline
\multirow{3}{*}{FISTA} & Tol & 1.2e-1 &7.9e-2 & 4.7e-2 & 4.4e-2  & 2.5e-1 & 3.0e-1 & 2.2e-1 & 1.8e-1 & 1.9e-1 & 1.0e-1 & 1.8e-2 & 1.7e-2  \\
	  & PSNR & 21.85 &21.23 & 28.59 & 27.69  & 21.56 & 19.64 & 19.72 & 22.81 & 25.27 & 23.26 & 25.19 & 25.72\\ 
      & SSIM & 0.5458 &0.4768 & 0.7816 & 0.7500  & 0.7466 & 0.6258 & 0.6152 & 0.6885 & 0.7362 & 0.6266 & 0.7200 & 0.7528  \\\hline
\multirow{3}{*}{OptISTA} & Tol & 6.3e-2 &4.2e-2 & 2.0e-2 & 2.1e-2  & 1.0e-1 & 1.3e-1 & 1.0e-1 & 5.1e-2 & 8.9e-2 & 5.3e-2 & 8.2e-3 & 8.5e-3  \\
	  & PSNR & 22.60 &21.65 & 29.66 & 28.62  & 22.91 & 20.63 & 21.15 & 24.66 & 26.55 & 23.86 & 26.30 & 26.56\\ 
      & SSIM & 0.6223 & 0.5244 & 0.8040 & 0.7689  & 0.8152 & 0.7035 & 0.6616 & 0.7478 & 0.7846 & 0.6583 & 0.7827 & 0.7961 \\\hline
\multirow{3}{*}{IISTA} & Tol & 4.0e-1 &1.0e0 & 2.4e-1& 1.1e0  & 1.6e0 & 1.0e0 & 1.1e0 & 8.6e-1 & 7.9e-1 & 7.4e-1 & 6.2e-2 & 6.7e-2  \\
	  & PSNR & 20.84 &18.93 & 26.78 & 21.36  & 18.89 & 18.24 & 17.61 & 19.61 & 23.39 & 21.54 & 23.65 & 24.48\\ 
      & SSIM & 0.4255 &0.3595 & 0.7447 & 0.5814  & 0.5342 & 0.4715 & 0.4960 & 0.5761 & 0.6513 & 0.5443 & 0.5956 & 0.6630  \\\hline
\multirow{3}{*}{EFISTA} & Tol & 1.0e-2 &9.3e-3 & 3.1e-3 & 4.5e-3  & 9.9e-3 & 1.6e-2 & 1.2e-2 & 6.0e-3 & 1.0e-2 & 9.2e-3 & 1.0e-3 & 1.1e-3   \\
	  & PSNR & 25.34 & 22.99 & 32.28 & 30.76  & 25.88 & 23.44 & 26.31 & 27.77 & 30.42 & 25.93 & 29.75 & 29.18\\ 
      & SSIM & 0.8003 & 0.6496 & 0.8539 & 0.8121  & 0.9106 & 0.8425 & 0.8023 & 0.8434 & 0.8778 & 0.7486 & 0.8950 & 0.8848  \\\hline
\multirow{3}{*}{IOptISTA} & Tol & \textbf{4.5e-3} & \textbf{6.2e-3} & \textbf{1.7e-3} & \textbf{3.8e-3}  & \textbf{4.2e-3} & \textbf{7.1e-3} &\textbf{4.5e-3} & \textbf{2.7e-3} & \textbf{3.8e-3} & \textbf{4.9e-3} &\textbf{5.0e-4} & \textbf{5.0e-4}  \\
	  & PSNR & \textbf{26.70} & \textbf{23.66} &  \textbf{33.28} & \textbf{31.48}  & \textbf{26.80} & \textbf{24.63} &\textbf{27.89} & \textbf{28.86} & \textbf{31.81} & \textbf{26.85} &\textbf{31.05} & \textbf{30.28}\\ 
      & SSIM & \textbf{0.8467} &\textbf{0.6978} & \textbf{0.8671} & \textbf{0.8244}  & \textbf{0.9285} & \textbf{0.8789} &\textbf{0.8412} & \textbf{0.8835} & \textbf{0.8973} & \textbf{0.7811} &\textbf{0.9164} & \textbf{0.9066} \\\hline
\multicolumn{14}{c}{Noise $\varepsilon\sim \mathcal{N}(0,5\times 10^{-4})$} \\\hline
\multirow{3}{*}{ISTA} & Tol & 2.6e0 & 2.3e0 & 3.1e0 & 3.1e0  & 7.9e0 & 8.9e0 & 1.4e+1 & 2.2e+1 & 6.9e0 & 6.9e0 & 6.5e-1 & 9.1e-1   \\
	  & PSNR & 19.70 &  19.64 & 23.89 & 23.56  & 16.43 & 16.61 & 14.93 & 15.79 & 21.29 & 20.31 & 21.57 & 22.40 \\ 
      & SSIM & 0.2934 & 0.2906 & 0.6669 & 0.6609  & 0.1911 & 0.2221 & 0.2602 & 0.3784 & 0.5269 & 0.4821 & 0.3938 & 0.4644  \\\hline
\multirow{3}{*}{FISTA} & Tol & 1.5e-1 & 1.1e-1 & 7.4e-2 & 7.2e-2  & 2.8e-1 & 3.3e-1 & 2.4e-1 & 1.9e-1 & 2.2e-1 & 1.4e-1 & 2.3e-2 & 2.3e-2   \\
	  & PSNR & 21.84 & 21.22 & 28.56 & 27.67  & 21.55 & 19.64 & 19.71 & 22.80 & 25.26 & 23.25 & 25.15 & 25.69 \\ 
      & SSIM & 0.5445 & 0.4761 & 0.7780 & 0.7472  & 0.7460 & 0.6255 & 0.6147 & 0.6880 & 0.7340 & 0.6250 & 0.7167 & 0.7499  \\\hline
    \multirow{3}{*}{OptISTA} & Tol & 8.9e-2 & 6.9e-2 & 4.6e-2 & 4.8e-2  & 1.3e-1 & 1.6e-1 & 1.1e-1 & 6.5e-2 & 1.2e-1 & 8.8e-2 & 1.3e-2 & 1.3e-2   \\
	  & PSNR & 22.57 & 21.64 & 29.54 & 28.56  & 22.88 & 20.62 & 21.13 & 24.62 & 26.51 & 23.84 & 26.16 & 26.46 \\ 
      & SSIM & 0.6182 & 0.5222 & 0.7932 & 0.7616  & 0.8136 & 0.7025 & 0.6605 & 0.7465 & 0.7783 & 0.6542 & 0.7732 & 0.7896  \\\hline
\multirow{3}{*}{IISTA} & Tol & 4.1e-1 & 9.2e-1 & 2.5e-1 & 1.0e0  & 1.6e0 & 1.1e0 & 1.2e0 & 8.8e-1 & 8.3e-1 & 7.7e-1 & 6.9e-2 & 7.3e-2   \\
	  & PSNR & 20.84 & 19.04 & 26.77 & 21.36  & 18.89 & 18.24 & 17.61 & 19.61 & 23.39 & 21.54 & 23.64 & 24.47 \\ 
      & SSIM & 0.4255 & 0.3614 & 0.7440 & 0.5811  & 0.5341 & 0.4715 & 0.4958 & 0.5761 & 0.6510 & 0.5441 & 0.5952 & 0.6623  \\\hline
\multirow{3}{*}{EFISTA} & Tol & 4.6e-2 & 9.5e-2 & 3.9e-2 & 3.7e-2  & 5.1e-2 & 5.7e-2 & 3.0e-2 & 2.9e-2 & 5.1e-2 & 1.1e-1 & 9.4e-3 & 1.2e-2   \\
	  & PSNR & 24.74 & 22.37 & 30.48 & 29.36  & 24.27 & 23.25 & 25.69 & 27.17 & 29.21 & 24.39 & 27.65 & 27.96 \\ 
      & SSIM & 0.7469 & 0.6179 & 0.7952 & 0.7589  & 0.8803 & 0.8320 & 0.7827 & 0.8237 & 0.8207 & 0.6722 & 0.8237 & 0.8374  \\\hline
\multirow{3}{*}{IOptISTA} & Tol & \textbf{1.8e-2} & \textbf{2.8e-2} & \textbf{1.6e-2} & \textbf{2.6e-2} & \textbf{1.9e-2} & \textbf{2.7e-2} & \textbf{1.0e-2} & \textbf{1.2e-2} & \textbf{1.9e-2} & \textbf{3.4e-2} & \textbf{2.4e-3} & \textbf{2.8e-3}   \\
	  & PSNR & \textbf{24.99} & \textbf{23.17} & \textbf{30.67} & \textbf{29.82} & \textbf{25.34} & \textbf{24.03} & \textbf{26.21} & \textbf{27.26}  & \textbf{29.40} & \textbf{26.06} & \textbf{27.83} & \textbf{28.30} \\ 
      & SSIM & \textbf{0.7484} &\textbf{ 0.6429} & \textbf{0.8098} & \textbf{0.7674} & \textbf{0.8904} & \textbf{0.8529} & \textbf{0.7969} & \textbf{0.8332} & \textbf{0.8338} & \textbf{0.7115} & \textbf{0.8302} & \textbf{0.8453}  \\\hline
\end{tabular}
\end{center}
\end{table*}

The results presented in Table \ref{real_l1_table} demonstrate that, when compared to other algorithms, the IOptISTA consistently attains the lowest error values. Across all experiments, the IOptISTA algorithm outperforms other methods in delivering superior PSNR and SSIM values. Collectively, the proposed algorithm exhibits enhanced numerical performance. For further insights, refer to Figures \ref{tps_results_L1_img02} and \ref{tps_results_L1_img05}.

\begin{figure}[!ht]
\setlength\tabcolsep{2pt}
\centering
\begin{tabular}{ccc} 
\includegraphics[width=0.32\textwidth, height=3.5cm]{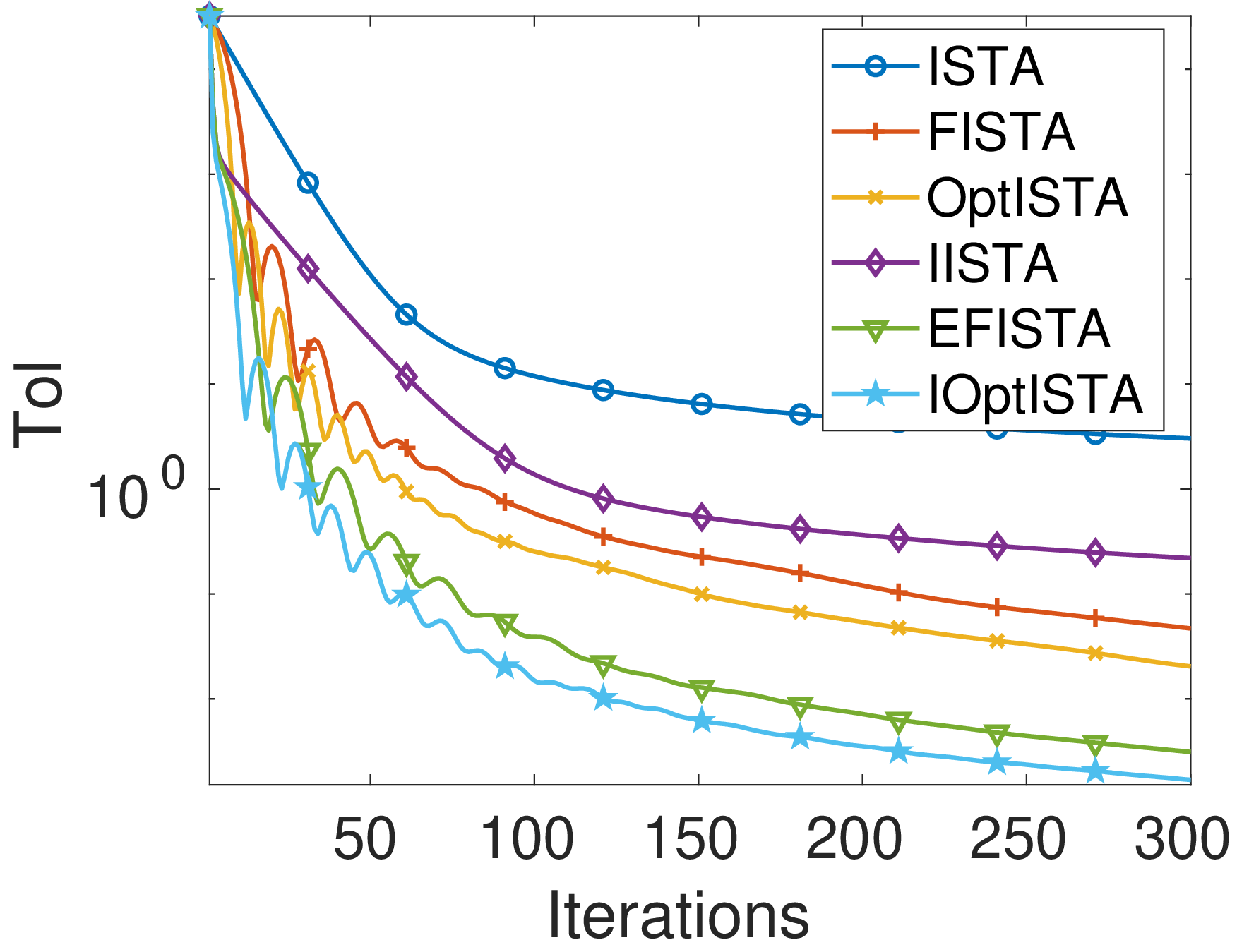} & \includegraphics[width=0.32\textwidth, height=3.5cm]{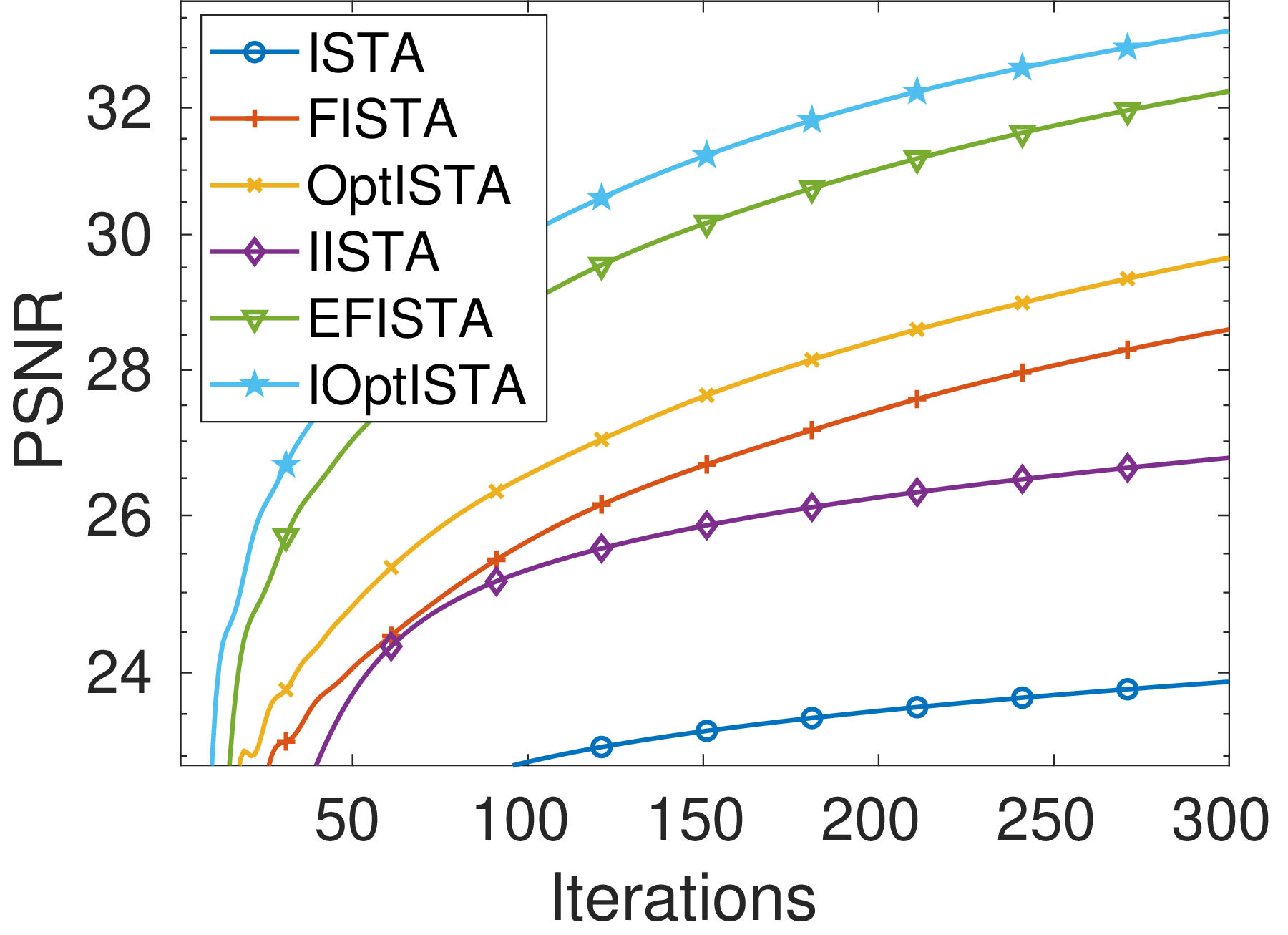}& \includegraphics[width=0.32\textwidth, height=3.5cm]{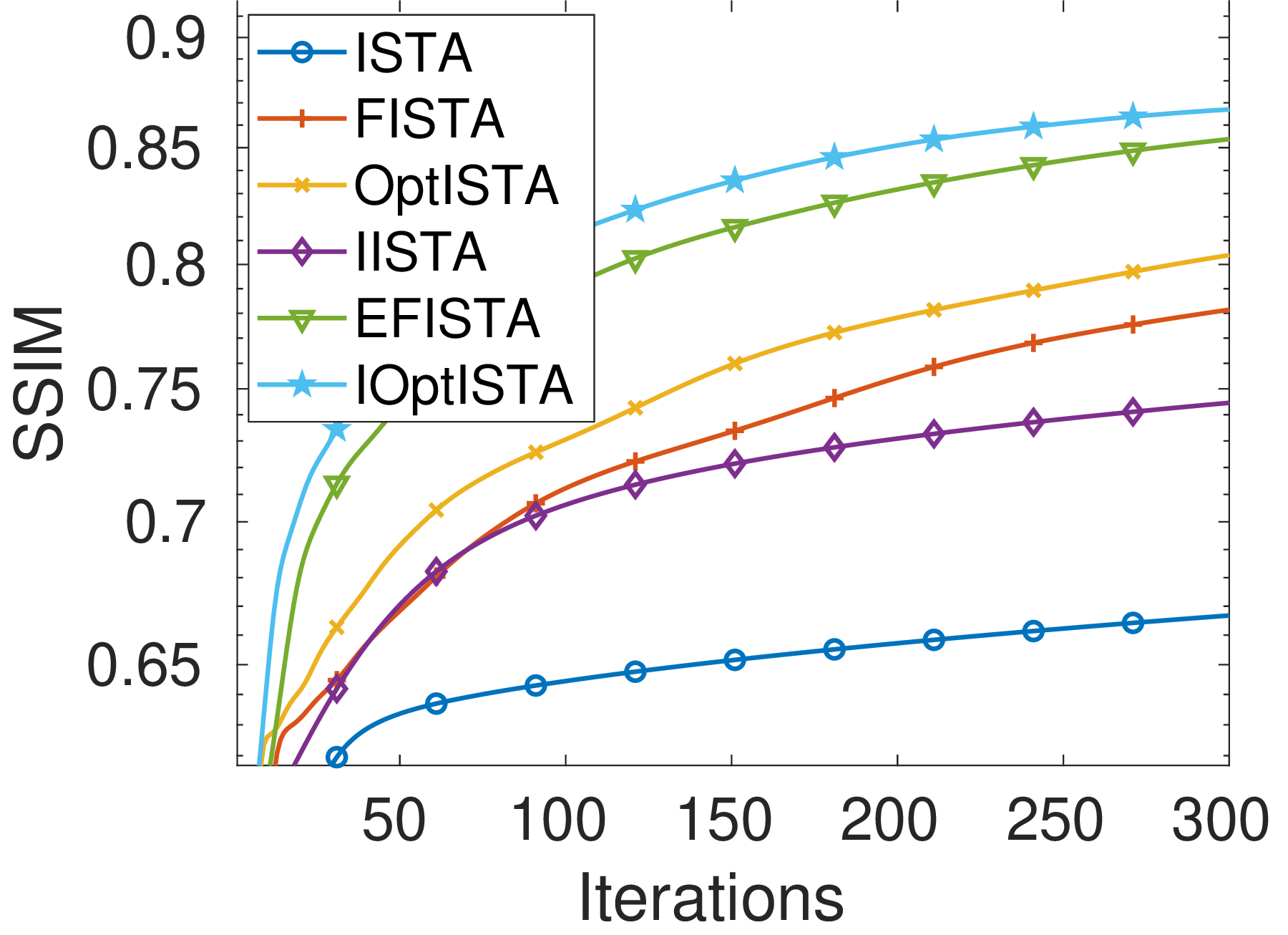}\\
	(a) Tol & (b) PNSR & (c) SSIM
\end{tabular}
\caption{The Tol, PSNR and SSIM results for Figure \ref{details_images}(b)  with ``$k = \text{fspecial}(\text{`disk'}, 12)$'' and $\varepsilon\sim\mathcal{N}(0, 1e-4)$.}
\label{tps_results_L1_img02}
\end{figure}
\begin{figure}[!ht]
\setlength\tabcolsep{2pt}
\centering
\begin{tabular}{ccc} 
\includegraphics[width=0.32\textwidth, height=3.5cm]{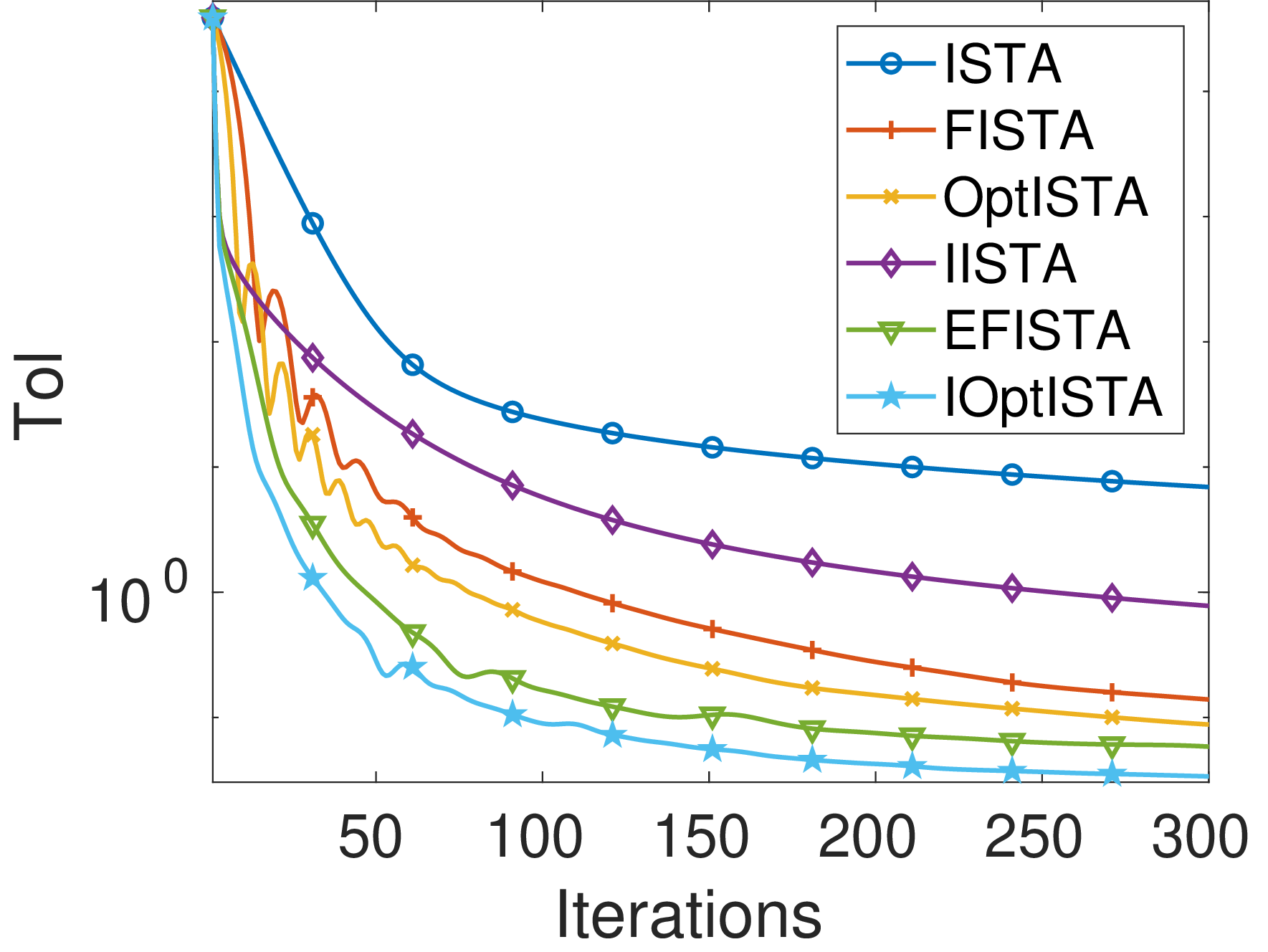} & \includegraphics[width=0.32\textwidth, height=3.5cm]{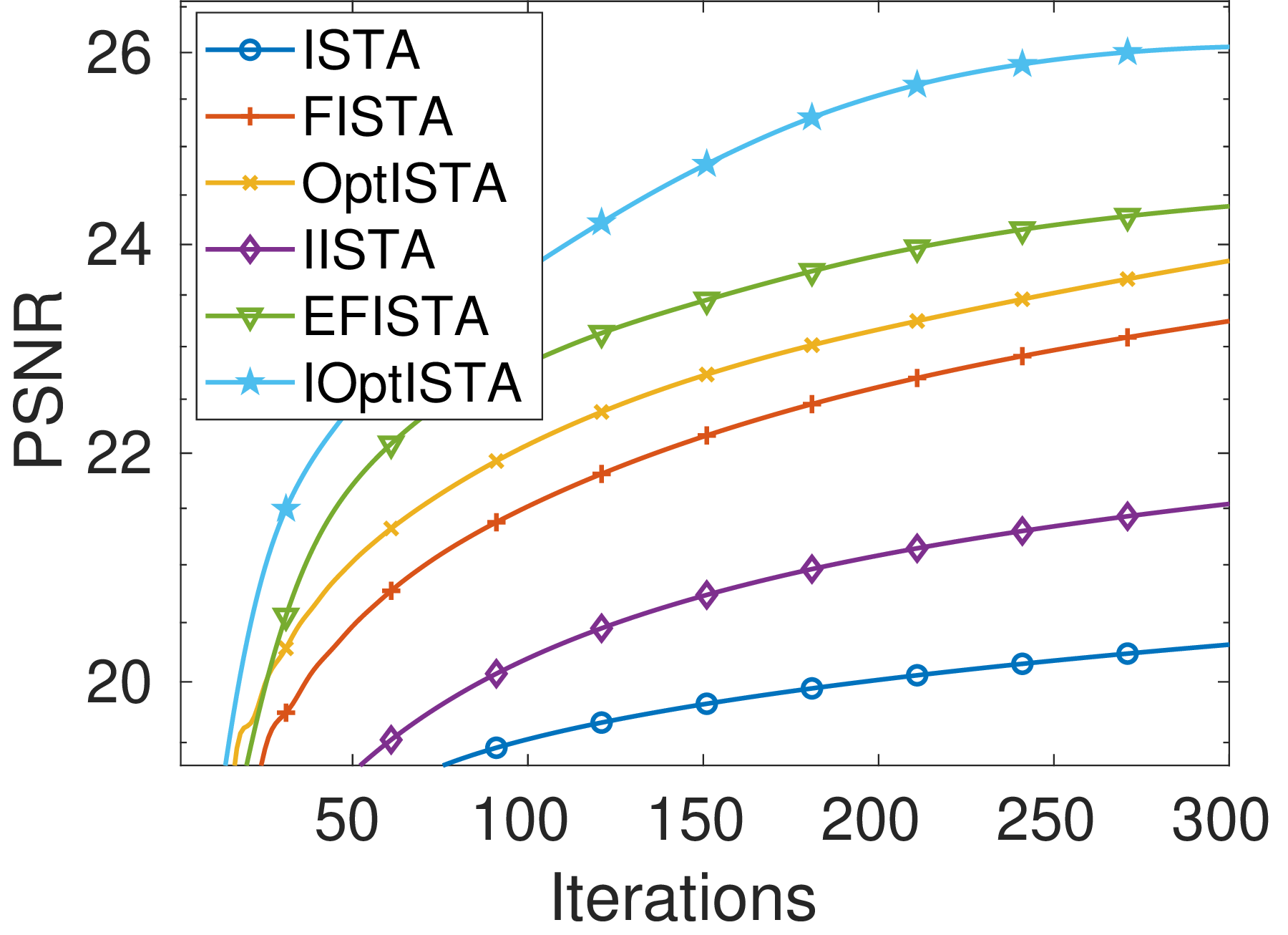}& \includegraphics[width=0.32\textwidth, height=3.5cm]{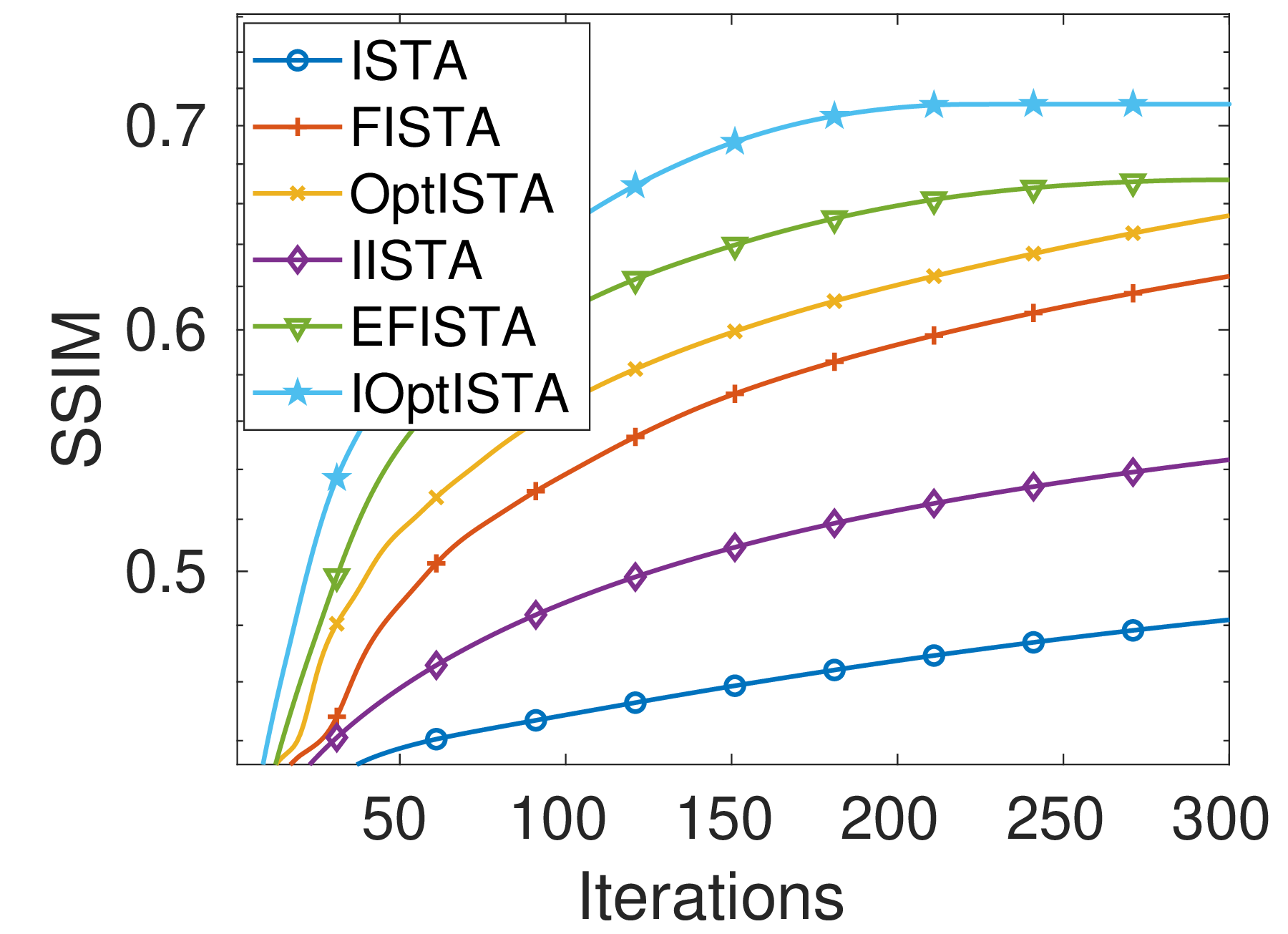}\\
	(d) Tol & (e) PNSR & (f) SSIM
\end{tabular}
\caption{The Tol, PSNR and SSIM results for Figure \ref{details_images}(e)  with ``$k = \text{fspecial}(\text{`gaussian'}, 25, 35)$'' and $\varepsilon\sim\mathcal{N}(0, 5e-4)$.} 
\label{tps_results_L1_img05}
\end{figure}

Figures \ref{tps_results_L1_img02} and \ref{tps_results_L1_img05} illustrate the trends in error (Tol), PSNR, and SSIM values across iterations for Figure \ref{details_images}(b) with ``$k = \text{fspecial}(\text{`disk'}, 12)$'' and $\varepsilon\sim\mathcal{N}(0, 1e-4)$, and for the Figure \ref{details_images}(e)  with ``$k = \text{fspecial}(\text{`gaussian'}, 25, 35)$'' and $\varepsilon\sim\mathcal{N}(0, 5e-4)$, respectively. These figures indicate that from the initial stages, the IOptISTA algorithm outperforms other methods. Notably, IOptISTA achieves superior results in terms of both error, PSNR, and SSIM values. We can also observe that when the blur operator is fixed, the performance of all algorithms degrades as the noise level increases, and the numerical performance gap between the algorithms becomes smaller. Nonetheless, the algorithm proposed in this paper consistently achieves better numerical results.

\begin{figure}[!ht]
\setlength\tabcolsep{2pt}
\centering
\begin{tabular}{cccccc} 
\includegraphics[width=0.23\textwidth]{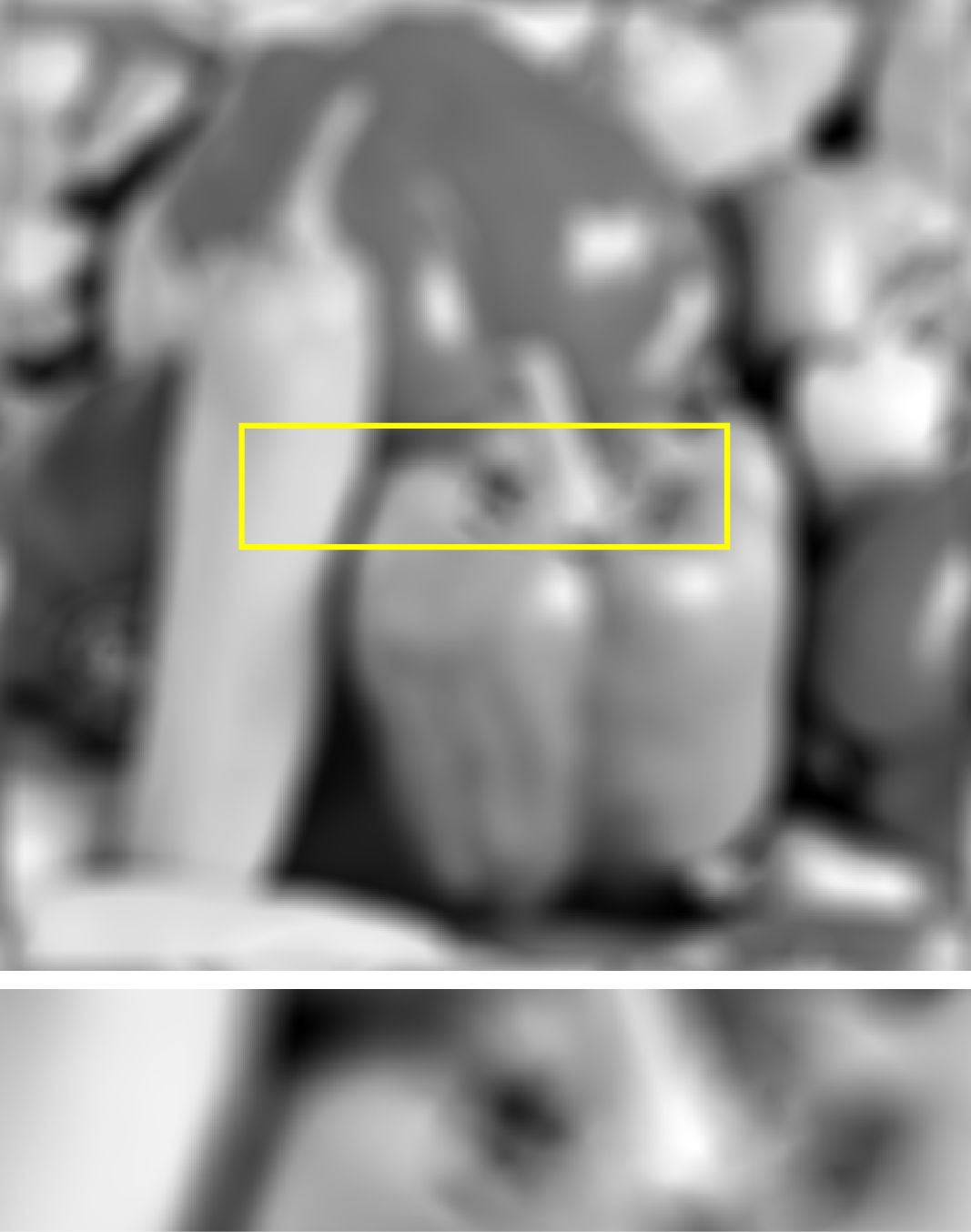} & \includegraphics[width=0.23\textwidth]{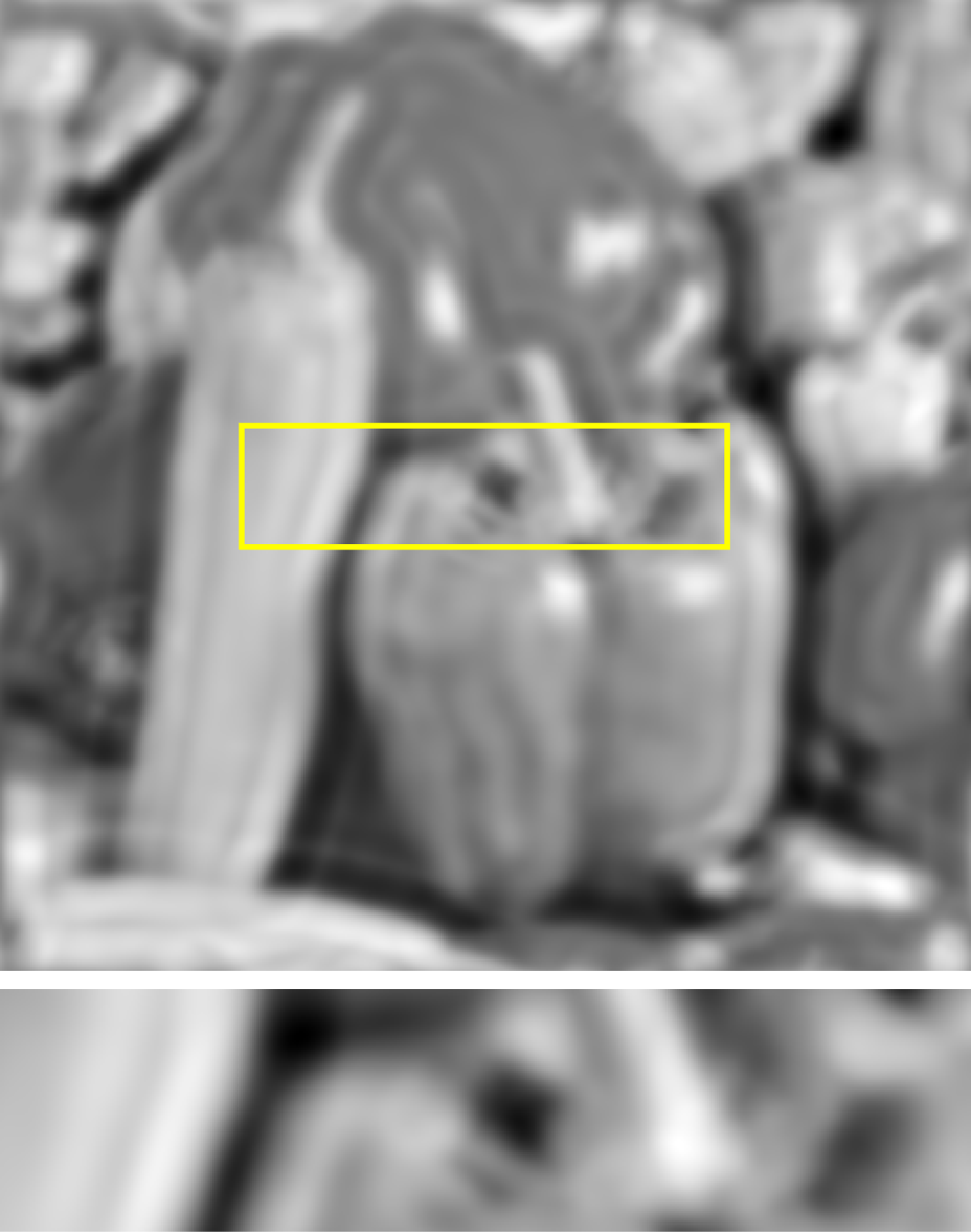}&
\includegraphics[width=0.23\textwidth]{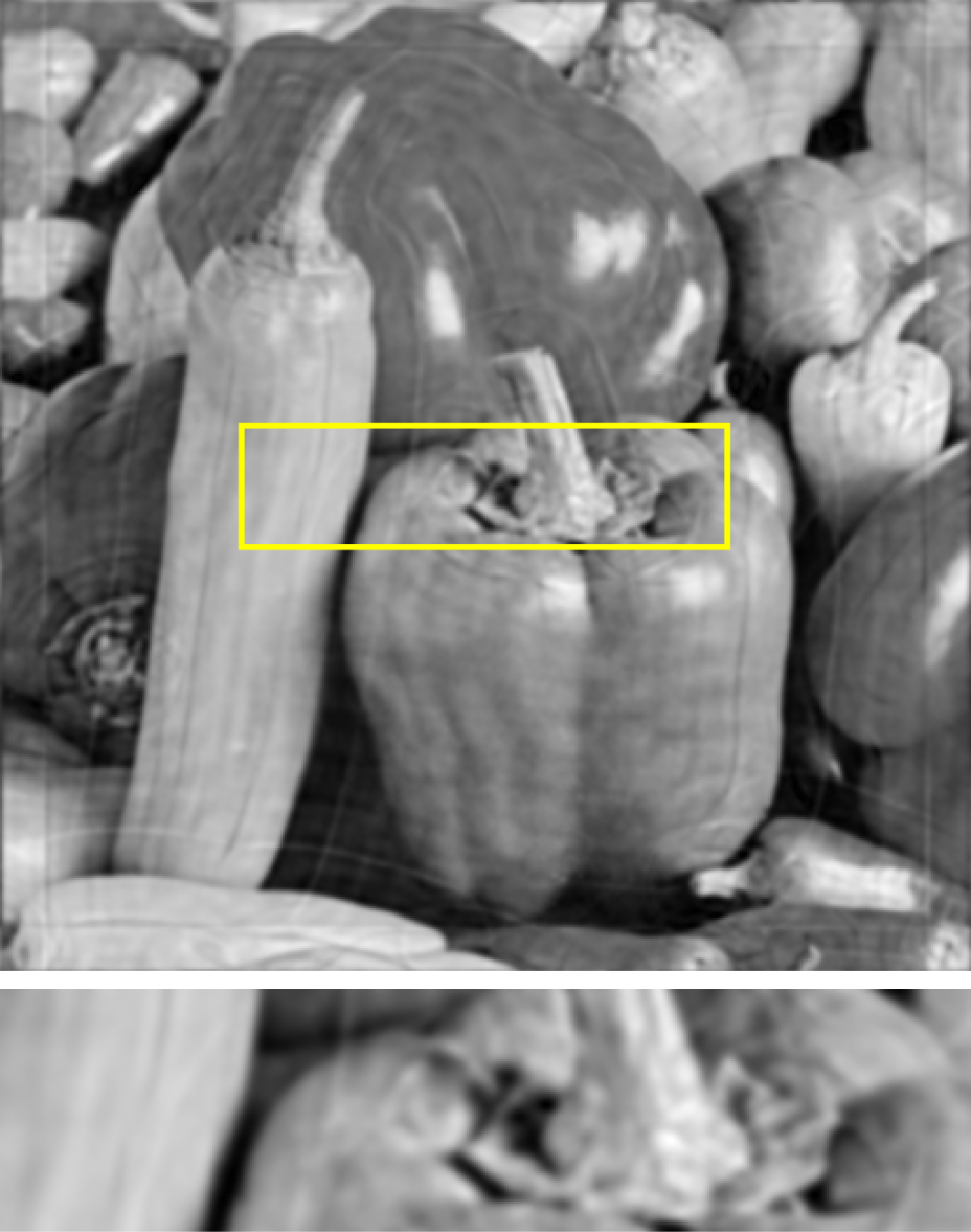}&
\includegraphics[width=0.23\textwidth]{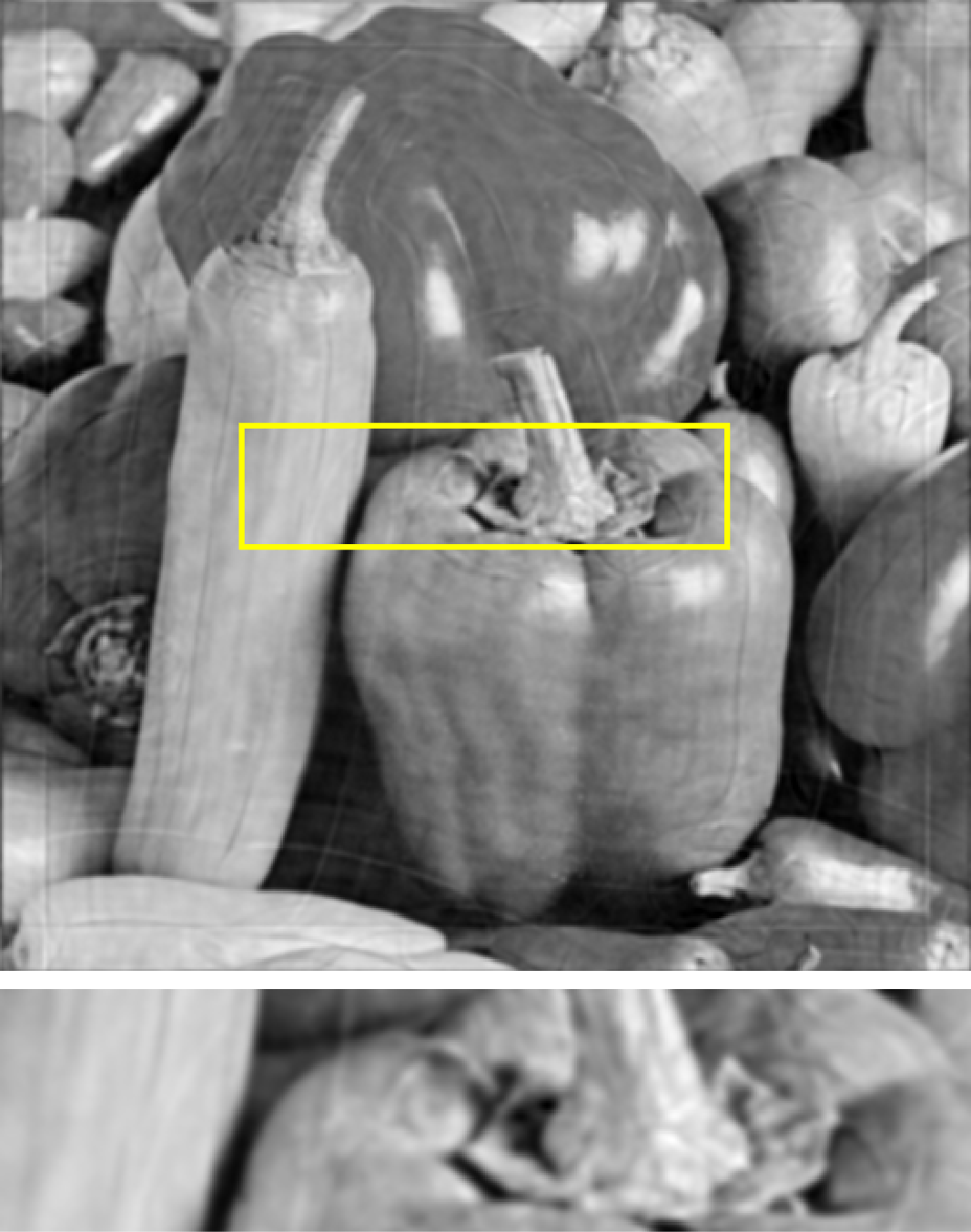}\\
(a) Noised & (b) ISTA & (c) FISTA&(d) OptISTA \\
\includegraphics[width=0.23\textwidth]{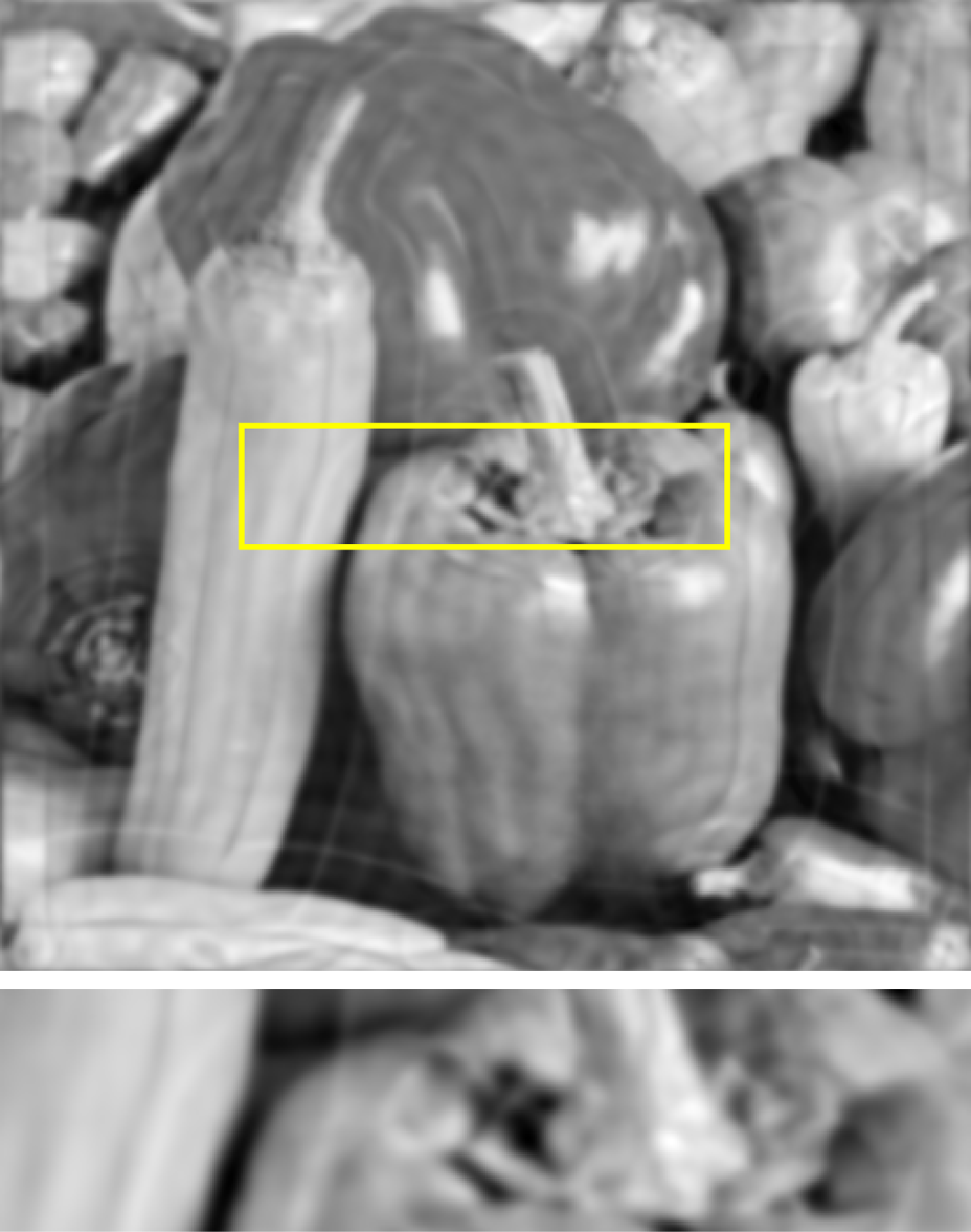} & \includegraphics[width=0.23\textwidth]{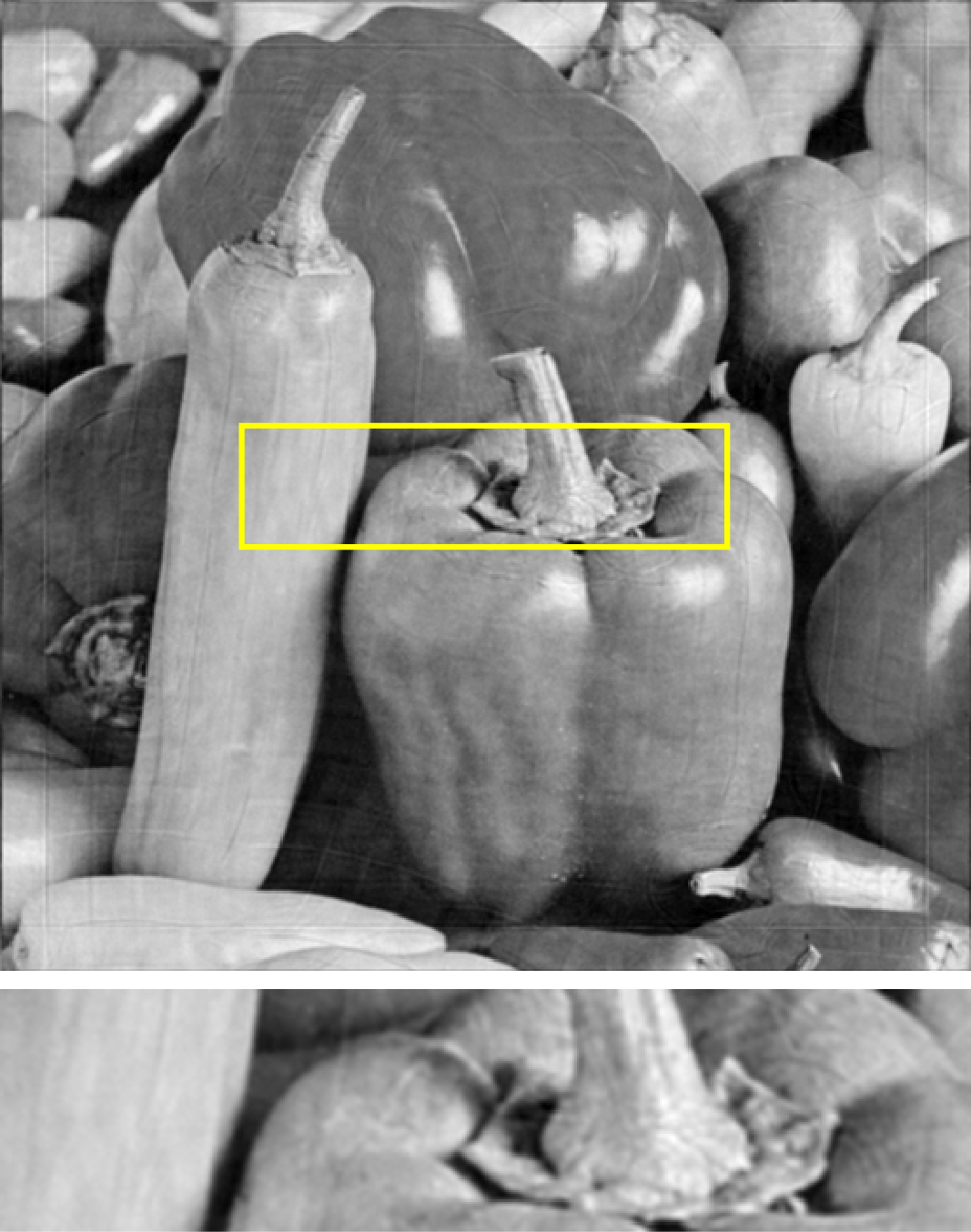}&
\includegraphics[width=0.23\textwidth]{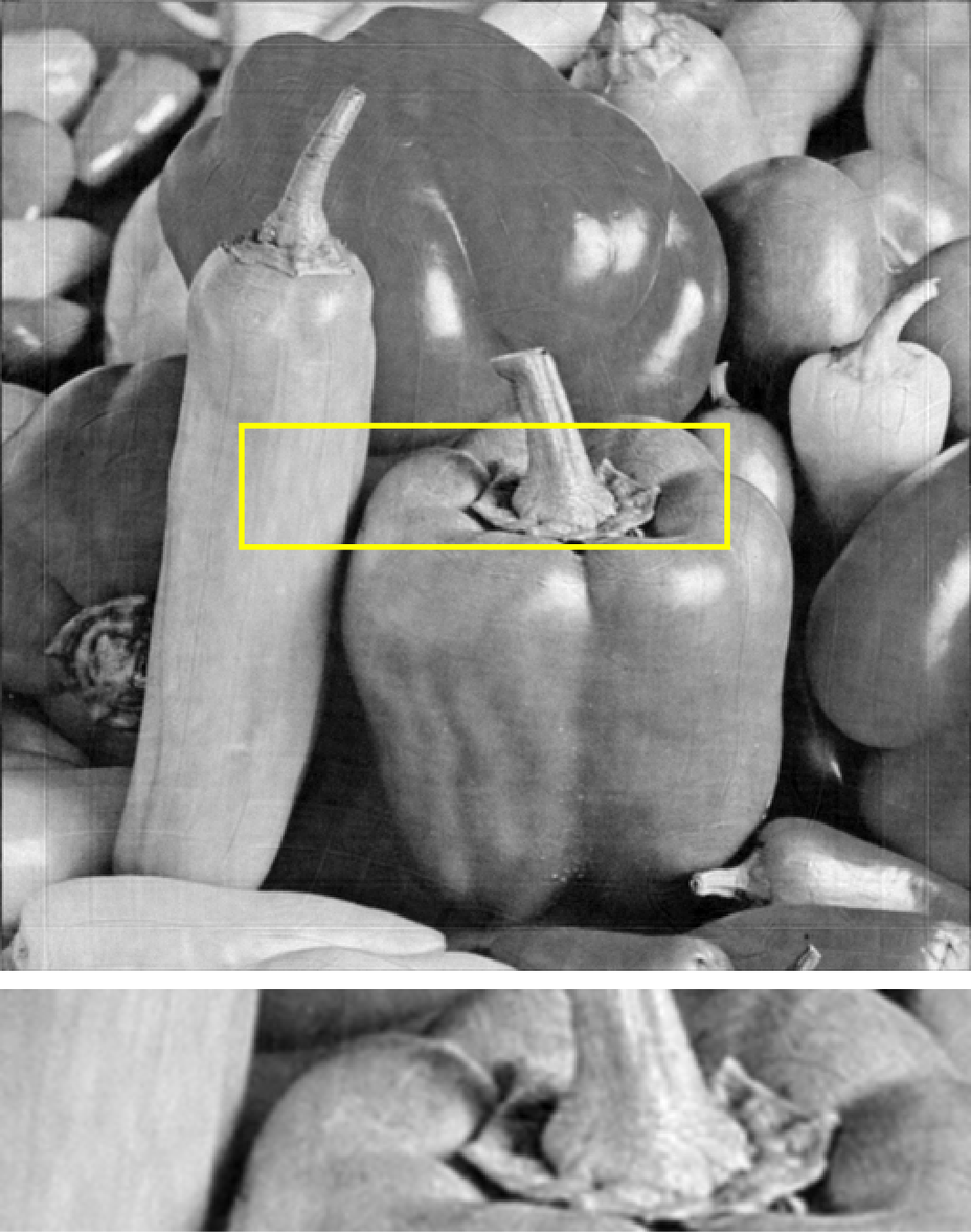}&
\includegraphics[width=0.23\textwidth]{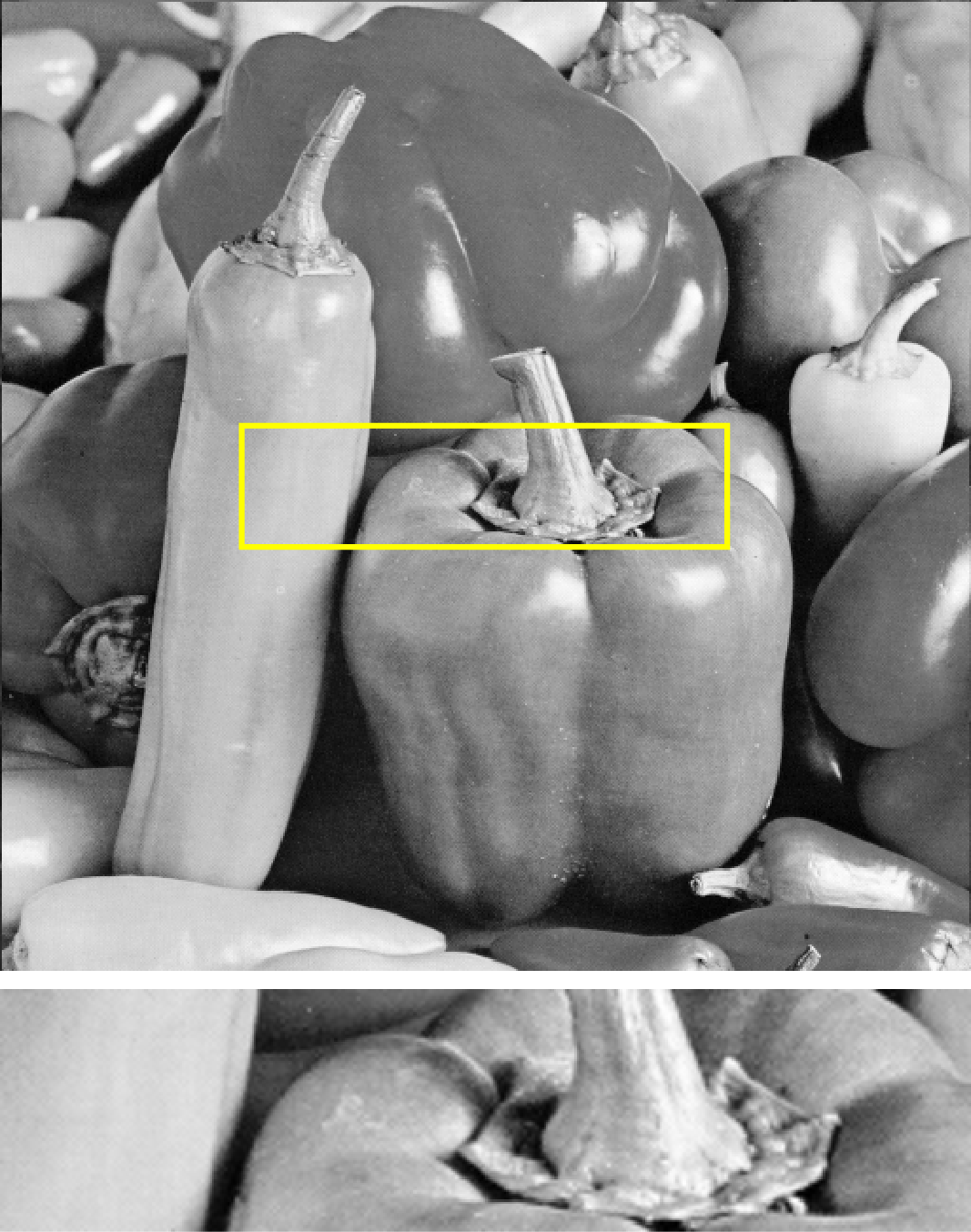}\\
(e) IISTA & (f) EFISTA & (g) IOptISTA & (h) Original 
\end{tabular}
\caption{Blurred and restored images for the Figure \ref{details_images}(b) with ``$k = \text{fspecial}(\text{`disk'}, 12)$'' and $\varepsilon\sim\mathcal{N}(0, 1e-4)$.} 
\label{results_img02_L1}
\end{figure}

\begin{figure}[!ht]
\setlength\tabcolsep{2pt}
\centering
\begin{tabular}{cccccc} 
\includegraphics[width=0.23\textwidth]{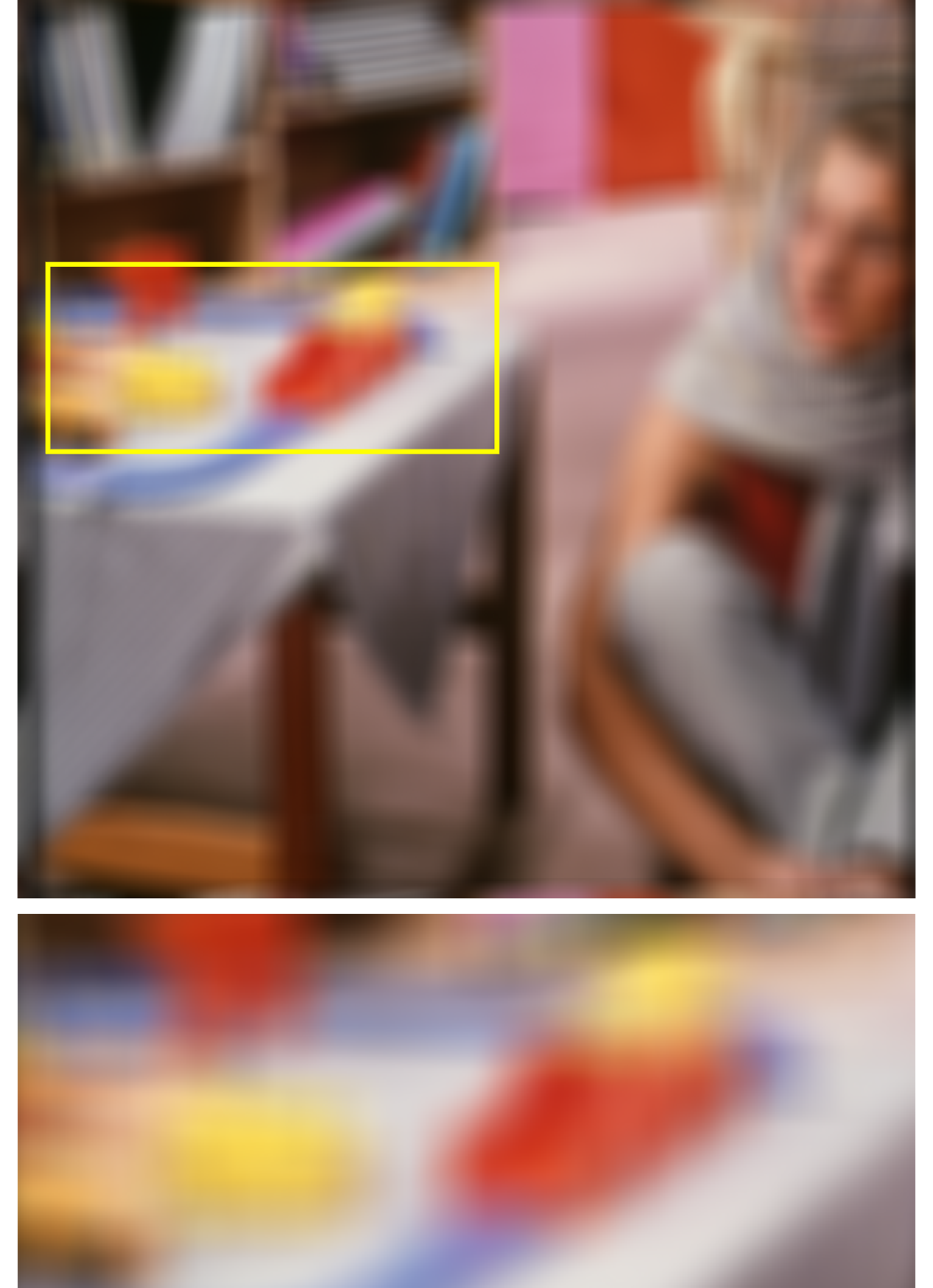} & \includegraphics[width=0.23\textwidth]{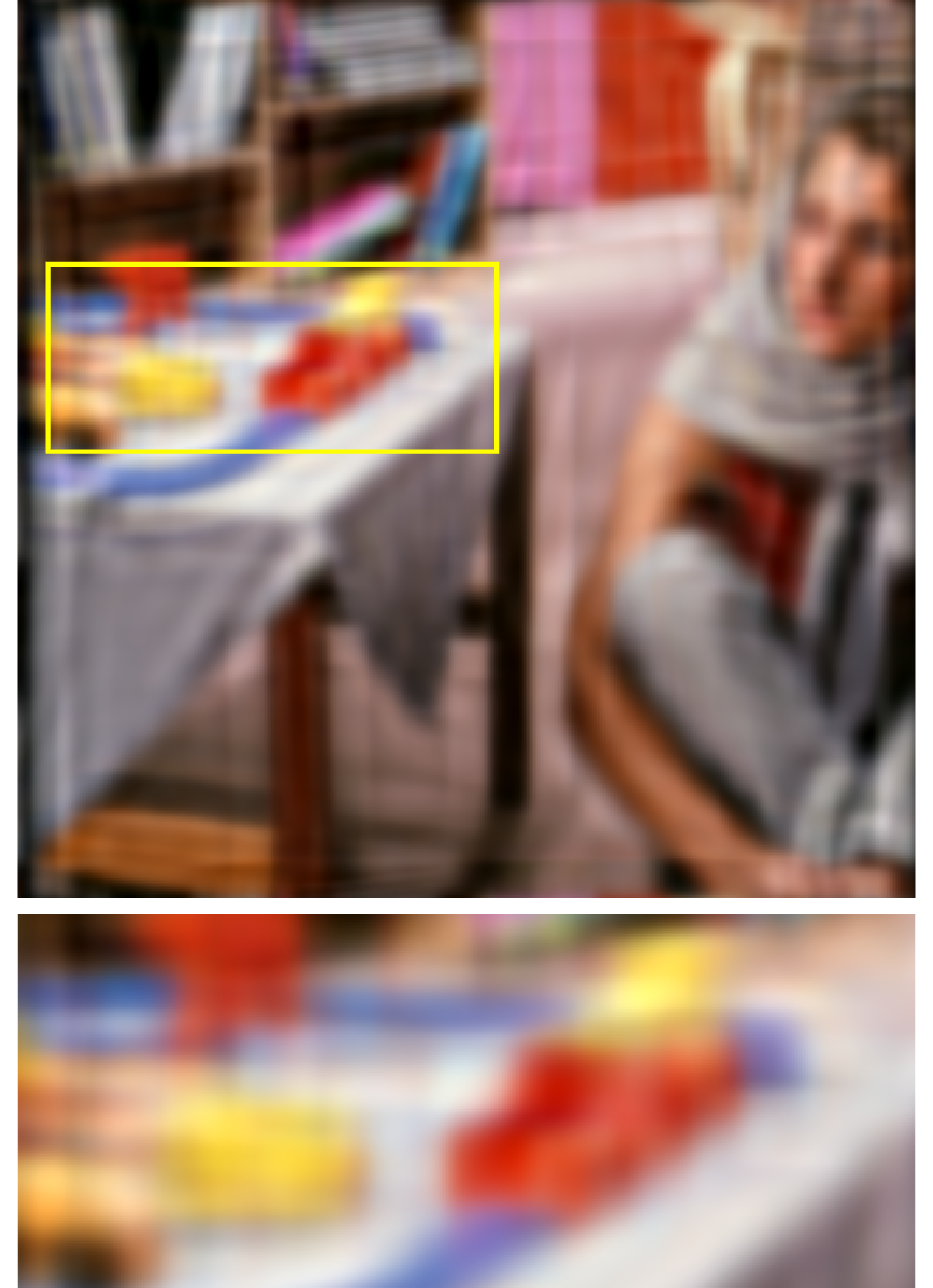}&
\includegraphics[width=0.23\textwidth]{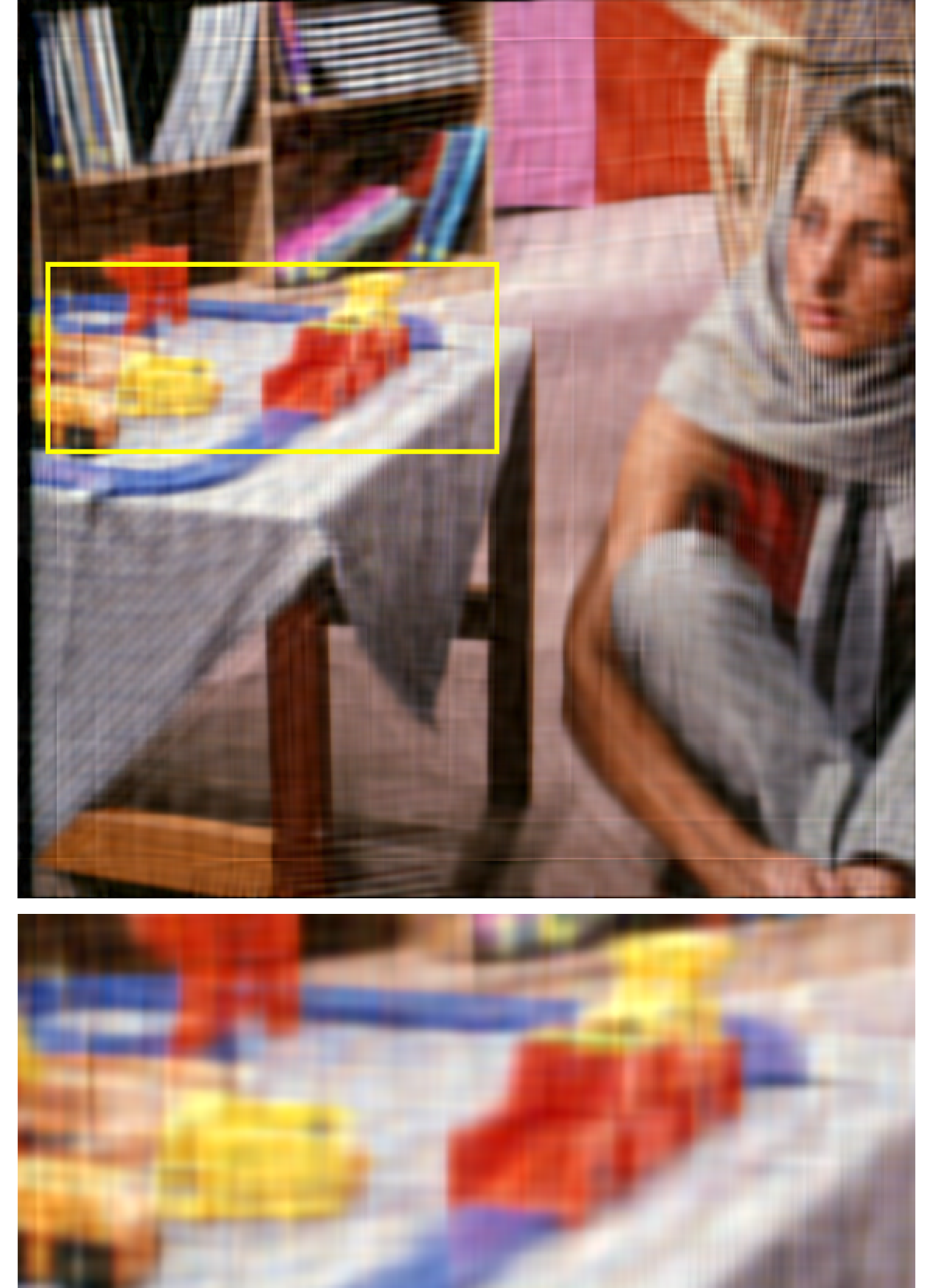}&
\includegraphics[width=0.23\textwidth]{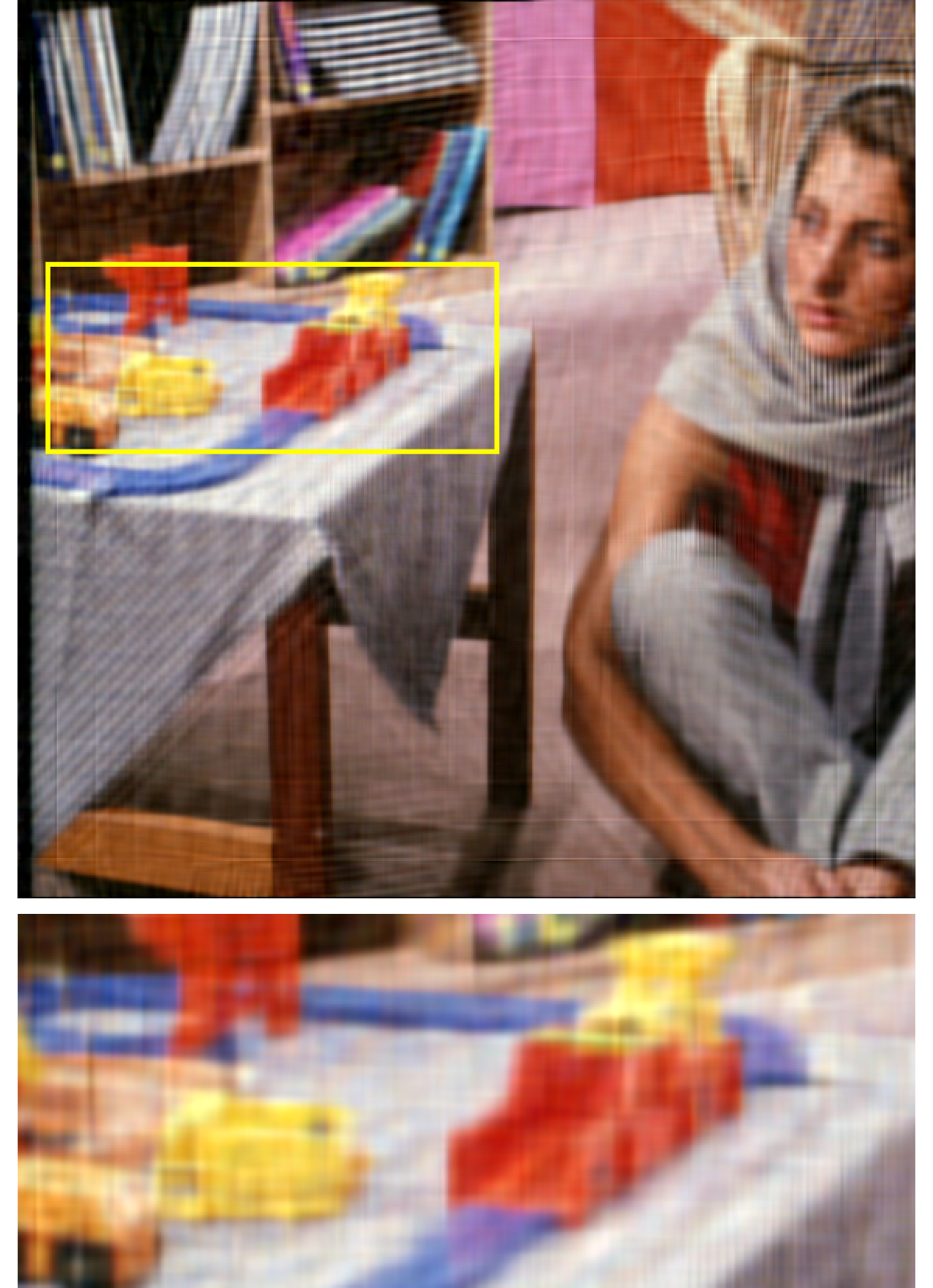}\\
(a) Noised & (b) ISTA & (c) FISTA&(d) OptISTA \\
\includegraphics[width=0.23\textwidth]{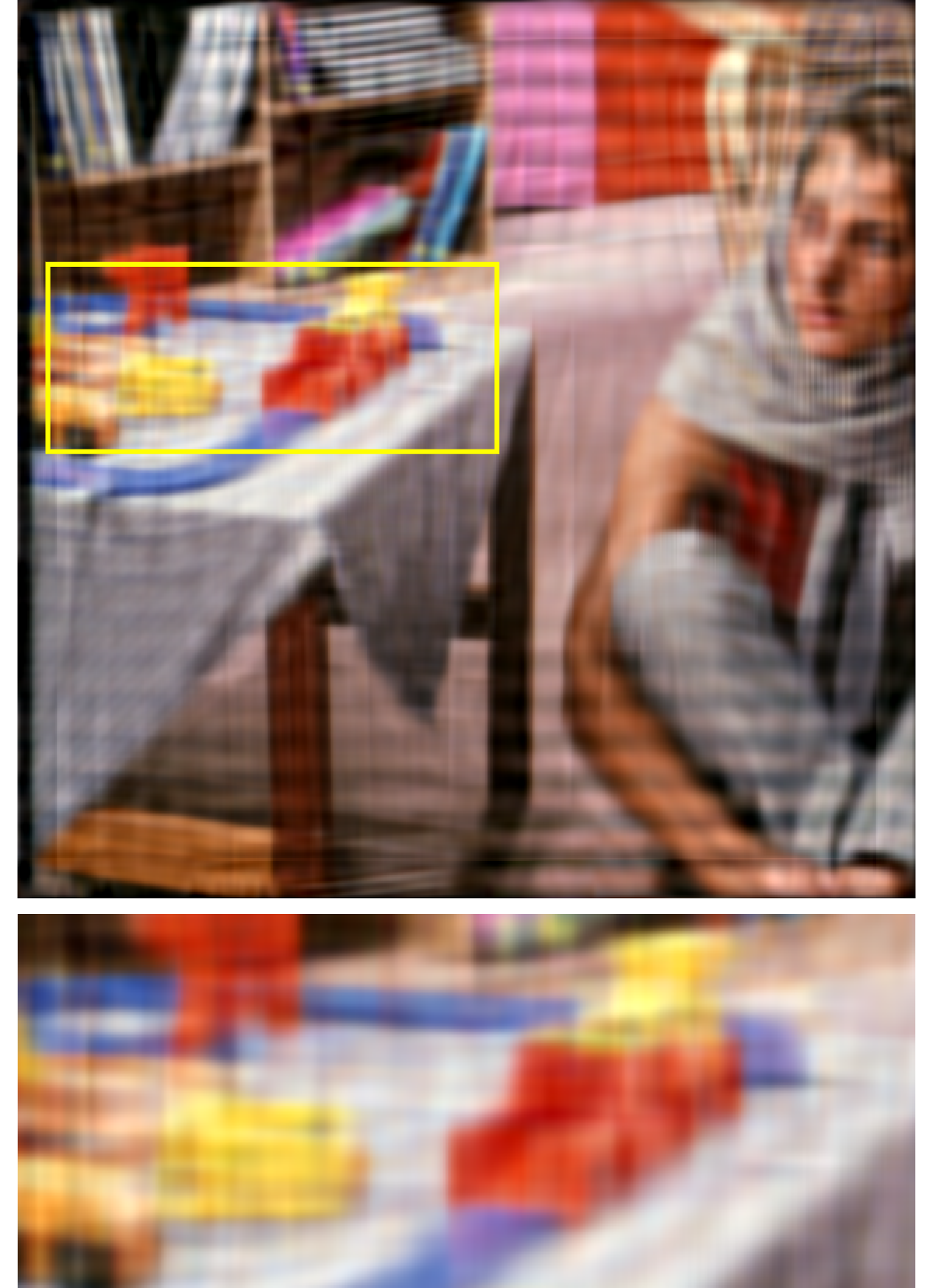} & \includegraphics[width=0.23\textwidth]{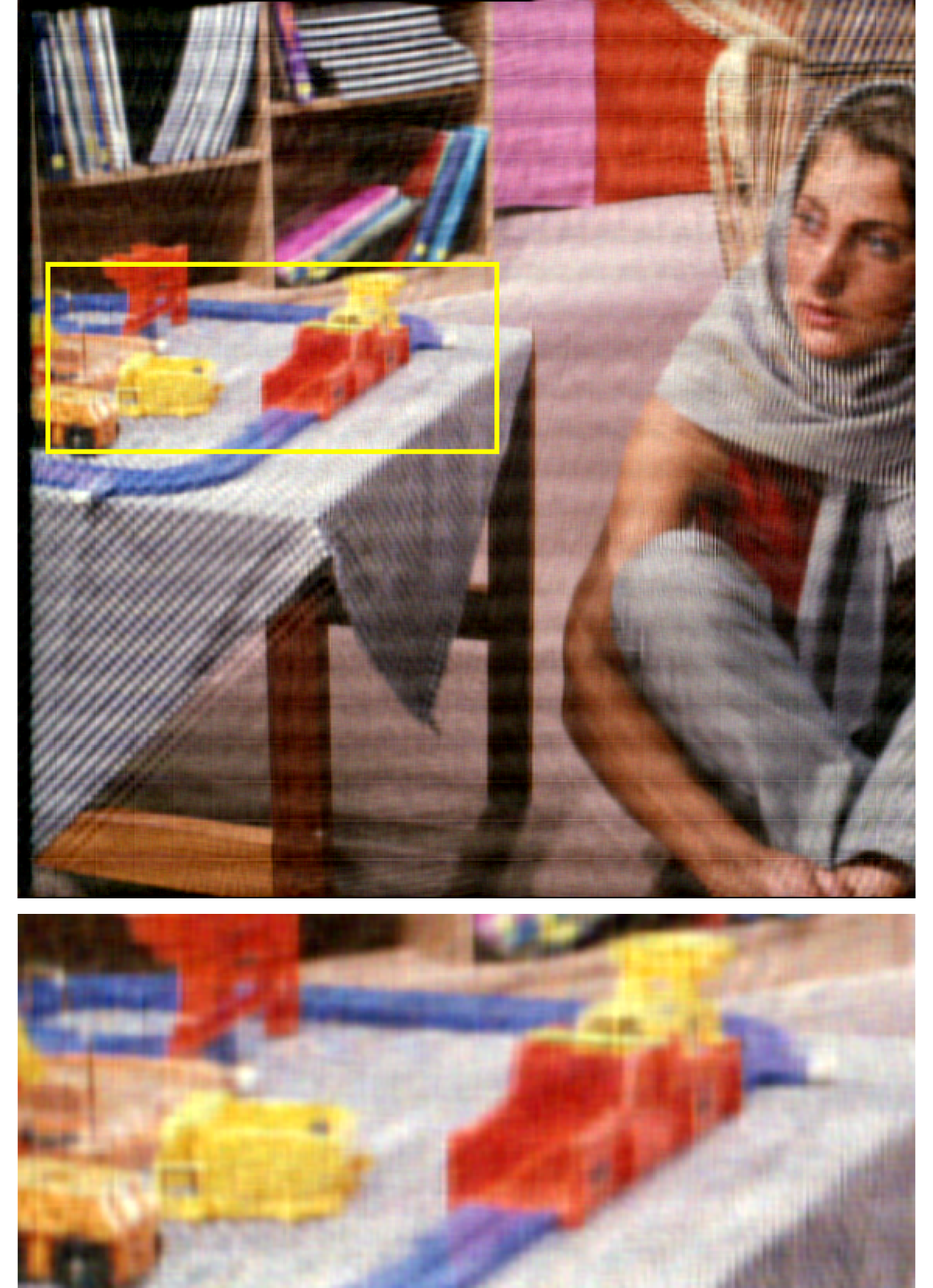}&
\includegraphics[width=0.23\textwidth]{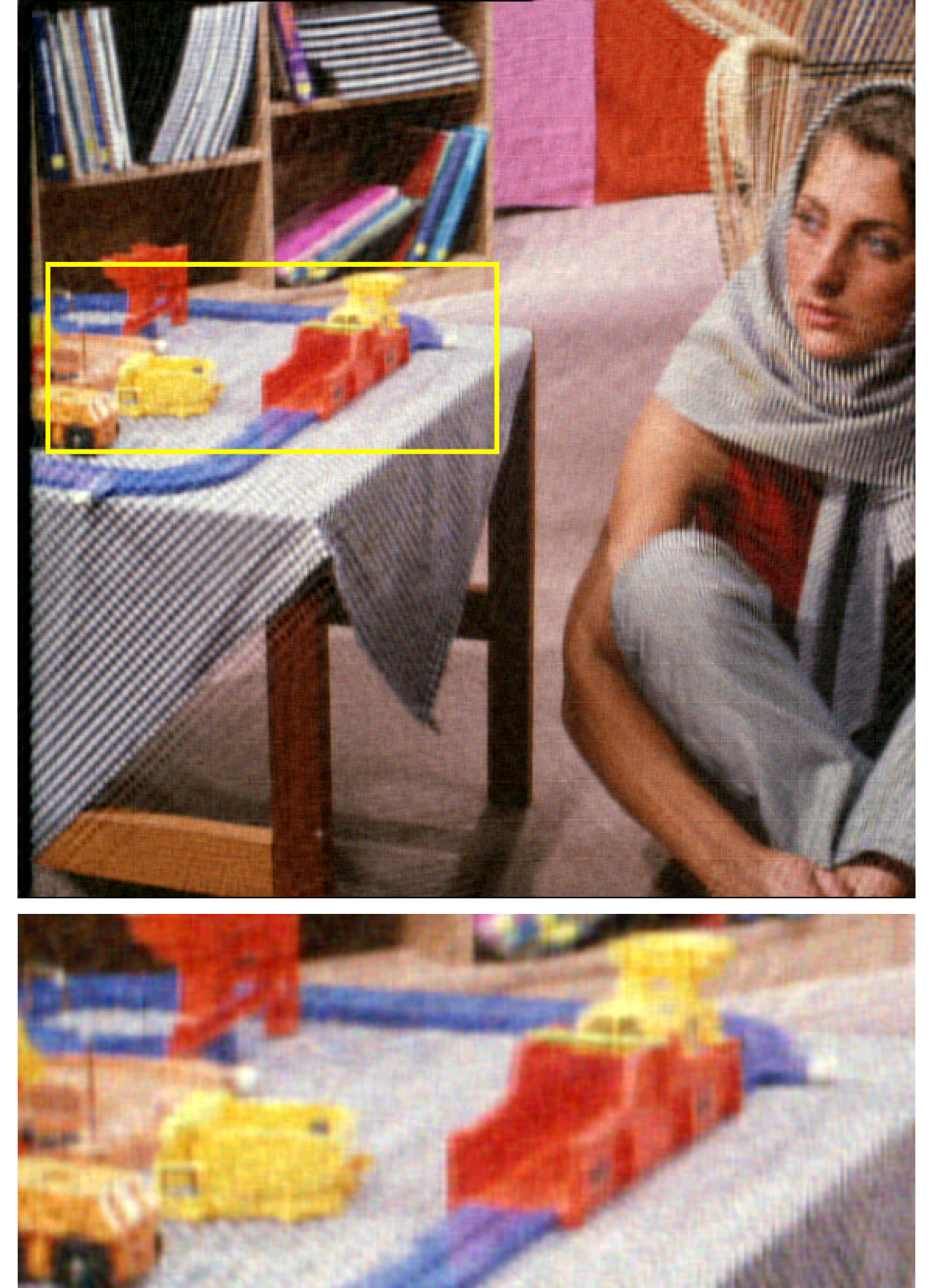}&
\includegraphics[width=0.23\textwidth]{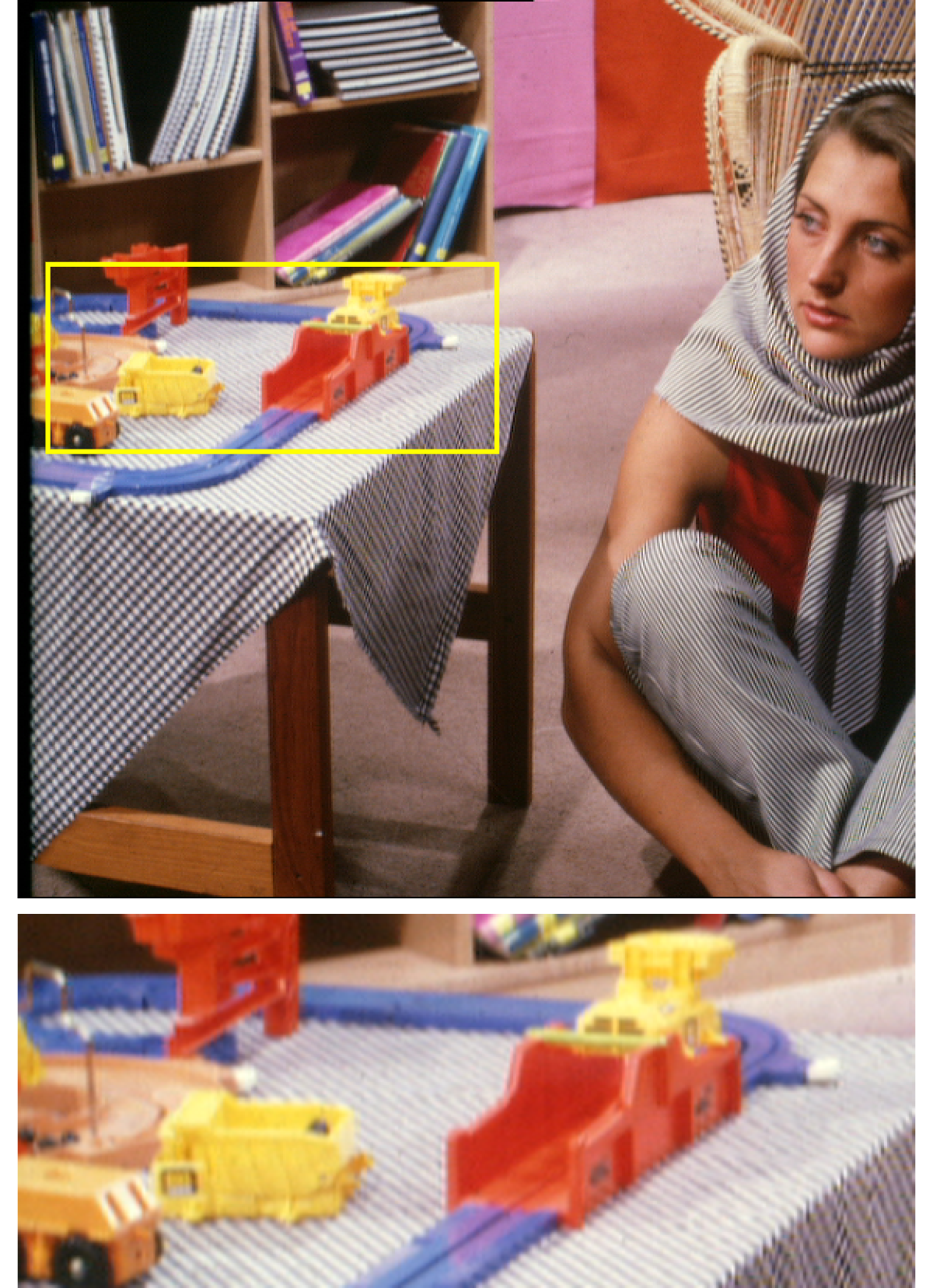}\\
(e) IISTA & (f) EFISTA & (g) IOptISTA & (h) Original 
\end{tabular}
\caption{Blurred and restored images for the Figure \ref{details_images}(e)  with ``$k = \text{fspecial}(\text{`gaussian'}, 25, 35)$'' and $\varepsilon\sim\mathcal{N}(0, 5e-4)$.} 
\label{results_img_05_L1}
\end{figure}

In Figures \ref{results_img02_L1} and \ref{results_img_05_L1}, we present the blurred and restored images resulting from the compared algorithms. As can be observed, the numerical outcomes of the IOptISTA algorithm are markedly superior.

\subsection{Case two: Total variation (TV) regularization}\label{tv-regularization}

In this subsection, we focus on the case where $h(x) = \lambda \|x\|_{\mathrm{TV}}$ \cite{ChanCW99, JiangL21} in the optimization problem \eqref{target-function}, which is formulated as:
\begin{eqnarray}
\min_{x \in \mathbb{R}^n} \quad \frac{1}{2} \|Ax - b\|^2 + \lambda \|x\|_{\mathrm{TV}}, \label{tv-model}
\end{eqnarray}
where $\lambda > 0$ and $\|x\|_{\mathrm{TV}} := \sum_{i=2}^{n} |x_i - x_{i-1}|$ represents the total variation (TV) norm of $x$ \cite{RudinOF92}. To implement the proximal operator of the TV norm, we utilize the direct algorithm\footnote{\url{https://lcondat.github.io/software.html}} from \cite{Condat13}. We set $\lambda = 0.0001$ to assess the performance of all compared algorithms. For these algorithms, the parameters are configured with $T = 20$ seconds, $K = 200$ iterations, and the initial iteration point is zero.

We apply all images from Figure \ref{details_images} to the optimization problem \eqref{tv-model} to evaluate the performance of the proposed algorithms. Detailed numerical results are presented in Table \ref{real_tv_table}.

\begin{table*}[thp]
\begin{center}
	\fontsize{6}{11}\selectfont
	\setlength{\tabcolsep}{1.2mm}
	\caption{The numerical results for the optimization problem \eqref{tv-model} with different blurred kernels and noises under six compared algorithms. 
	} 
\label{real_tv_table}
\begin{tabular}{c|c|c c | cc | cc| cc| cc| cc cc } 
	\hline 
	 & Images & \multicolumn{2}{c|}{Img01} & \multicolumn{2}{c|}{Img02} & \multicolumn{2}{c|}{Img03} &  \multicolumn{2}{c|}{Img04} &\multicolumn{2}{c|}{Img05} &\multicolumn{2}{c}{Img06}\\\cline{2-14}
	Algorithm & Kernels& 12 &(24,40) & 12 & (24,40) & 13 & (20,30)  & 7 & (14,30) & 8  & (24,34) & 7 & (10,20)  \\\hline
    \multicolumn{14}{c}{Noise $\varepsilon\sim \mathcal{N}(0,10^{-4})$} \\\hline
\multirow{3}{*}{ISTA} & Tol & 3.6e0 & 3.2e0 & 4.6e0 & 4.8e0  & 9.3e0 & 1.2e+1 & 1.6e+1 & 3.2e+1 & 1.1e+1 & 1.1e+1 & 9.1e-1 & 1.3e0  \\
	  & PSNR & 19.62 & 19.53 & 23.54 & 23.18  & 16.26 & 16.42 & 14.64 & 15.25 & 21.34 & 20.12 & 21.54 & 22.11 \\ 
      & SSIM & 0.2854 & 0.2804 & 0.6573 & 0.6512  & 0.1660 & 0.1924 & 0.2221 & 0.3177 & 0.5302 & 0.4712 & 0.3858 & 0.4352  \\\hline
\multirow{3}{*}{FISTA} & Tol & 2.9e-1 & 2.2e-1 & 1.2e-1 & 1.1e-1  & 8.5e-1 & 7.2e-1 & 9.4e-1 & 5.2e-1 & 5.0e-1 & 2.5e-1 & 6.0e-2 & 5.7e-2  \\
	  & PSNR & 21.02 & 20.63 & 27.53 & 26.75  & 19.71 & 18.55 & 18.05 & 21.82 & 23.89 & 22.62 & 24.00 & 24.59 \\ 
      & SSIM & 0.4406 & 0.3968 & 0.7568 & 0.7336  & 0.5985 & 0.5088 & 0.4940 & 0.6543 & 0.6737 & 0.5939 & 0.6165 & 0.6655   \\\hline
\multirow{3}{*}{OptISTA} & Tol & 2.0e-1 & 1.6e-1 & 6.0e-2 & 5.9e-2  & 3.6e-1 & 4.2e-1 & 5.1e-1 & 1.9e-1 & 2.8e-1 & 1.2e-1 & 4.5e-2 & 4.5e-2  \\
	  & PSNR & 21.32 & 20.85 & 28.58 & 27.63  & 20.82 & 19.19 & 19.35 & 23.91 & 24.67 & 23.10 & 24.43 & 24.91 \\ 
      & SSIM & 0.4748 & 0.4203 & 0.7824 & 0.7540  & 0.6832 & 0.5726 & 0.5670 & 0.7298 & 0.7133 & 0.6194 & 0.6476 & 0.6889  \\\hline
\multirow{3}{*}{IISTA} & Tol & 5.3e-1 & 1.7e0 & 3.6e-1 & 2.0e0  & 2.5e0 & 1.5e0 & 2.5e0 & 2.2e0 & 1.2e0 & 9.2e-1 & 1.1e-1 & 1.1e-1  \\
	  & PSNR & 20.66 & 18.63 & 26.25 & 20.60  & 18.18 & 17.90 & 16.87 & 19.20 & 23.05 & 21.84 & 23.42 & 24.10 \\ 
      & SSIM & 0.3987 & 0.3371 & 0.7308 & 0.5788  & 0.4554 & 0.4248 & 0.4183 & 0.5560 & 0.6328 & 0.5566 & 0.5739 & 0.6290  \\\hline
\multirow{3}{*}{EFISTA} & Tol & 4.3e-2 & 3.6e-2 & 8.8e-3 & 1.3e-2  & 3.6e-2 & 4.8e-2 & 4.4e-2 & 2.2e-2 & 3.8e-2 & 2.4e-2 & 8.5e-3 & 7.1e-3    \\
	  & PSNR & 22.98 & 21.91 & 31.22 & 29.81  & 24.04 & 21.79 & 24.27 & 26.89 & 27.83 & 24.36 & 26.86 & 26.87 \\ 
      & SSIM & 0.6441 & 0.5393 & 0.8364 & 0.7997  & 0.8505 & 0.7634 & 0.7468 & 0.8208 & 0.8223 & 0.6886 & 0.8022 & 0.8068  \\\hline
\multirow{3}{*}{IOptISTA} & Tol & \textbf{2.4e-2} & \textbf{2.0e-2} & \textbf{4.8e-3} & \textbf{6.0e-2}  & \textbf{1.6e-2} & \textbf{2.2e-2} & \textbf{1.7e-2} & \textbf{9.3e-3} & \textbf{1.7e-2} & \textbf{1.4e-2} & \textbf{4.4e-3} & \textbf{3.5e-3}  \\
	  & PSNR & \textbf{23.80} & \textbf{22.38} & \textbf{32.23} & \textbf{30.88}  & \textbf{25.24} & \textbf{22.86} & \textbf{26.35} & \textbf{27.89} & \textbf{29.34} & \textbf{24.98} & \textbf{28.03} & \textbf{27.75} \\ 
      & SSIM & \textbf{0.7067} & \textbf{0.5869} & \textbf{0.8538} & \textbf{0.8210}  & \textbf{0.8856} & \textbf{0.8118} & \textbf{0.8040} & \textbf{0.8447} & \textbf{0.8587} & \textbf{0.7183} & \textbf{0.8490} & \textbf{0.8429} \\\hline
\multicolumn{14}{c}{Noise $\varepsilon\sim \mathcal{N}(0,10^{-3})$} \\\hline
\multirow{3}{*}{ISTA} & Tol & 3.8e0 &3.4e0 & 4.7e0 & 4.9e0  & 9.5e0 & 1.2e+1 & 1.6e+1 & 3.2e+1 & 1.2e+1 & 1.1e+1 & 9.4e-1 & 1.3e0    \\
	  & PSNR & 19.62 & 19.53 & 23.54 & 23.18  & 16.26 & 16.42 & 14.64 & 15.25 & 21.34 & 20.12 & 21.54 & 22.11  \\ 
      & SSIM & 0.2854 & 0.2803 & 0.6573 & 0.6512  & 0.1660 & 0.1923 & 0.2221 & 0.3177 & 0.5302 & 0.4712 & 0.3858 & 0.4352   \\\hline
\multirow{3}{*}{FISTA} & Tol & 4.1e-1 & 3.5e-1 & 2.4e-1 & 2.3e-1  & 9.8e-1 & 8.4e-1 & 1.0e0 & 5.8e-1 & 6.5e-1 & 4.0e-1 & 8.7e-2 & 8.3e-2    \\
	  & PSNR & 21.01 & 20.62 & 27.50 & 26.73  & 19.71 & 18.55 & 18.04 & 21.81 & 23.88 & 22.61 & 23.98 & 24.56  \\ 
      & SSIM & 0.4399 &0.3958 & 0.7545 & 0.7311  & 0.5982 & 0.5085 & 0.4938 & 0.6537 & 0.6722 & 0.5921 & 0.6146 & 0.6636  \\\hline
    \multirow{3}{*}{OptISTA} & Tol & 3.2e-1 & 2.8e-1 & 1.8e-1 & 1.8e-1  & 4.8e-1 & 5.4e-1 & 5.7e-1 & 2.5e-1 & 4.3e-1 & 2.7e-1 & 7.1e-2  & 6.9e-2    \\
	  & PSNR & 21.31 & 20.84 & 28.52 & 27.57  & 20.80 & 19.18 & 19.34 & 23.87 & 24.65 & 23.09 & 24.39 & 24.86  \\ 
      & SSIM & 0.4734 & 0.4184 & 0.7774 & 0.7481  & 0.6823 & 0.5717 & 0.5662 & 0.7281 & 0.7099 & 0.6160 & 0.6440 & 0.6854   \\\hline
\multirow{3}{*}{IISTA} & Tol & 6.5e-1 &1.8e0 & 4.9e-1 & 2.2e0  & 2.6e0 & 1.6e0 & 2.6e0 & 2.3e0 & 1.4e0 & 1.1e0 & 1.4e-1 & 1.3e-1    \\
	  & PSNR & 20.65 &18.62 & 26.24 & 20.60  & 18.17 & 17.89 & 16.87 & 19.20 & 23.05 & 21.84 & 23.41 & 24.09  \\ 
      & SSIM & 0.3984 & 0.3367 & 0.7298 & 0.5788  & 0.4553 & 0.4247 & 0.4182 & 0.5558 & 0.6322 & 0.5562 & 0.5728 & 0.6280   \\\hline
\multirow{3}{*}{EFISTA} & Tol & 1.4e-1 &1.5e-1 & 1.2e-1 & 1.3e-1  & 1.4e-1 & 1.5e-1 & 8.6e-2 & 7.5e-2 & 1.5e-1 & 1.7e-1 & 2.8e-2 & 2.6e-2    \\
	  & PSNR & 22.80 &21.81 & 30.26 & 29.23  & 23.70 & 21.66 & 23.81 & 26.38 & 27.37 & 24.21 & 26.18 & 26.34  \\ 
      & SSIM & 0.6246 &0.5251 & 0.7880 & 0.7571  & 0.8364 & 0.7546 & 0.7300 & 0.8037 & 0.7863 & 0.6617 & 0.7663 & 0.7789   \\\hline
\multirow{3}{*}{IOptISTA} & Tol & \textbf{1.2e-1} &\textbf{1.3e-1} & \textbf{1.0e-2} & \textbf{1.2e-1}  & \textbf{1.1e-1} & \textbf{1.2e-1} & \textbf{5.2e-2} & \textbf{5.8e-2} & \textbf{1.2e-1} & \textbf{1.5e-1} & \textbf{2.0e-2} & \textbf{2.0e-2}    \\
	  & PSNR & \textbf{23.27} &\textbf{22.11} & \textbf{30.43} & \textbf{29.40}  & \textbf{24.24} & \textbf{22.50} & \textbf{24.95} & \textbf{26.58} & \textbf{27.90} & \textbf{24.62} & \textbf{26.53} & \textbf{26.59}  \\ 
      & SSIM & \textbf{0.6572} &\textbf{0.5552} & \textbf{0.7912} & \textbf{0.7515}  & \textbf{0.8528} & \textbf{0.7914} & \textbf{0.7646} & \textbf{0.8086} & \textbf{0.7916} & \textbf{0.6643} & \textbf{0.7830} & \textbf{0.7895}  \\\hline
\end{tabular}
\end{center}
\end{table*}

The table further corroborates the conclusions drawn in Subsection \ref{l1-regularization}: the IOptISTA algorithm consistently attains lower error rates under identical stopping conditions when compared to other methods. Additionally, IOptISTA typically achieves higher PSNR and SSIM values, see Figures \ref{tps_results_tv_img02} and \ref{tps_results_tv_img05} for details. 

\begin{figure}[!ht]
\setlength\tabcolsep{2pt}
\centering
\begin{tabular}{ccc} 
\includegraphics[width=0.32\textwidth, height=3.5cm]{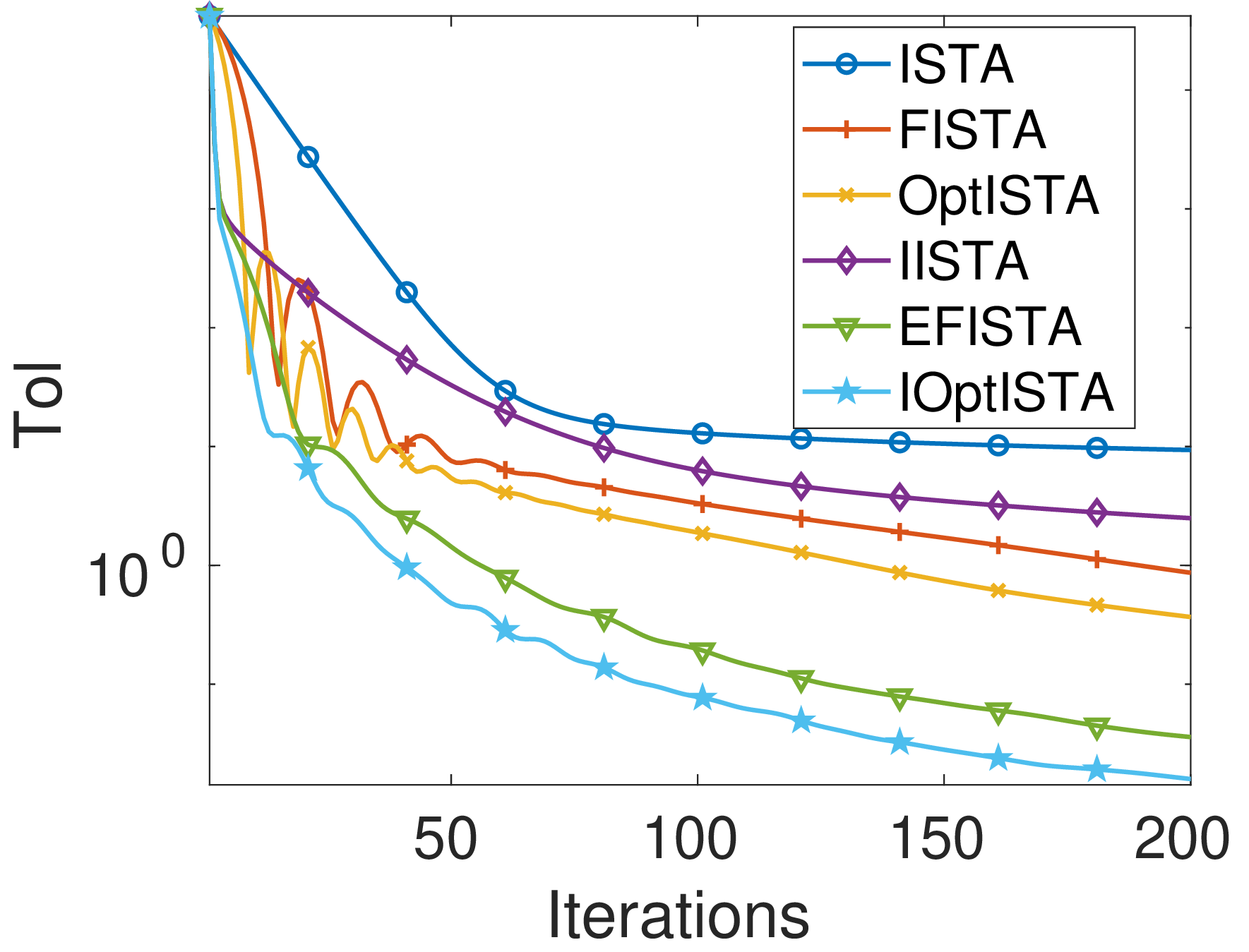} & \includegraphics[width=0.32\textwidth, height=3.5cm]{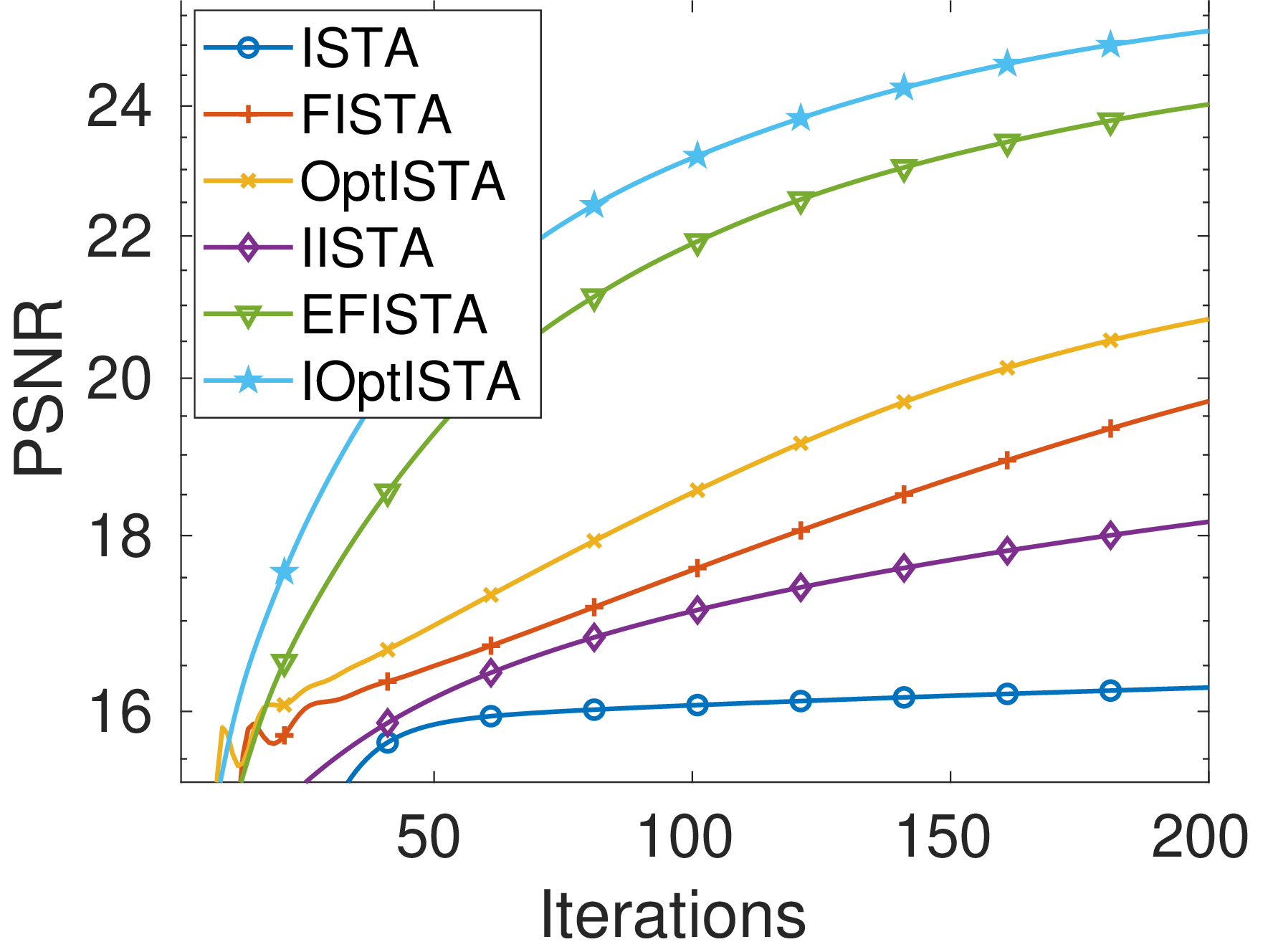}& \includegraphics[width=0.32\textwidth, height=3.5cm]{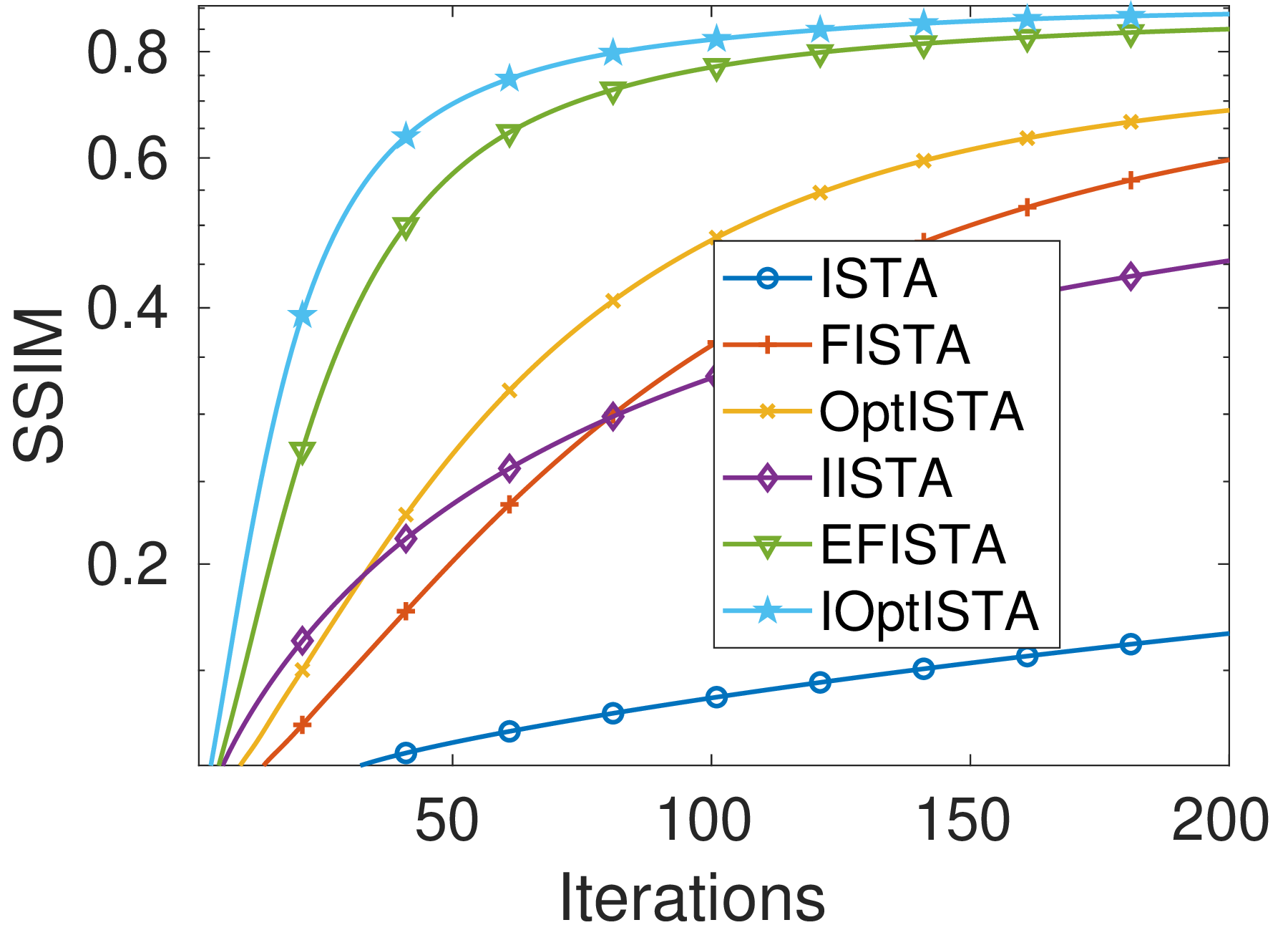}\\
	(a) Tol & (b) PNSR & (c) SSIM
\end{tabular}
\caption{The Tol, PSNR and SSIM results for Figure \ref{details_images}(c)  with ``$k = \text{fspecial}(\text{`disk'}, 13)$'' and $\varepsilon\sim\mathcal{N}(0, 1e-4)$.}
\label{tps_results_tv_img02}
\end{figure}
\begin{figure}[!ht]
\setlength\tabcolsep{2pt}
\centering
\begin{tabular}{ccc} 
\includegraphics[width=0.32\textwidth, height=3.5cm]{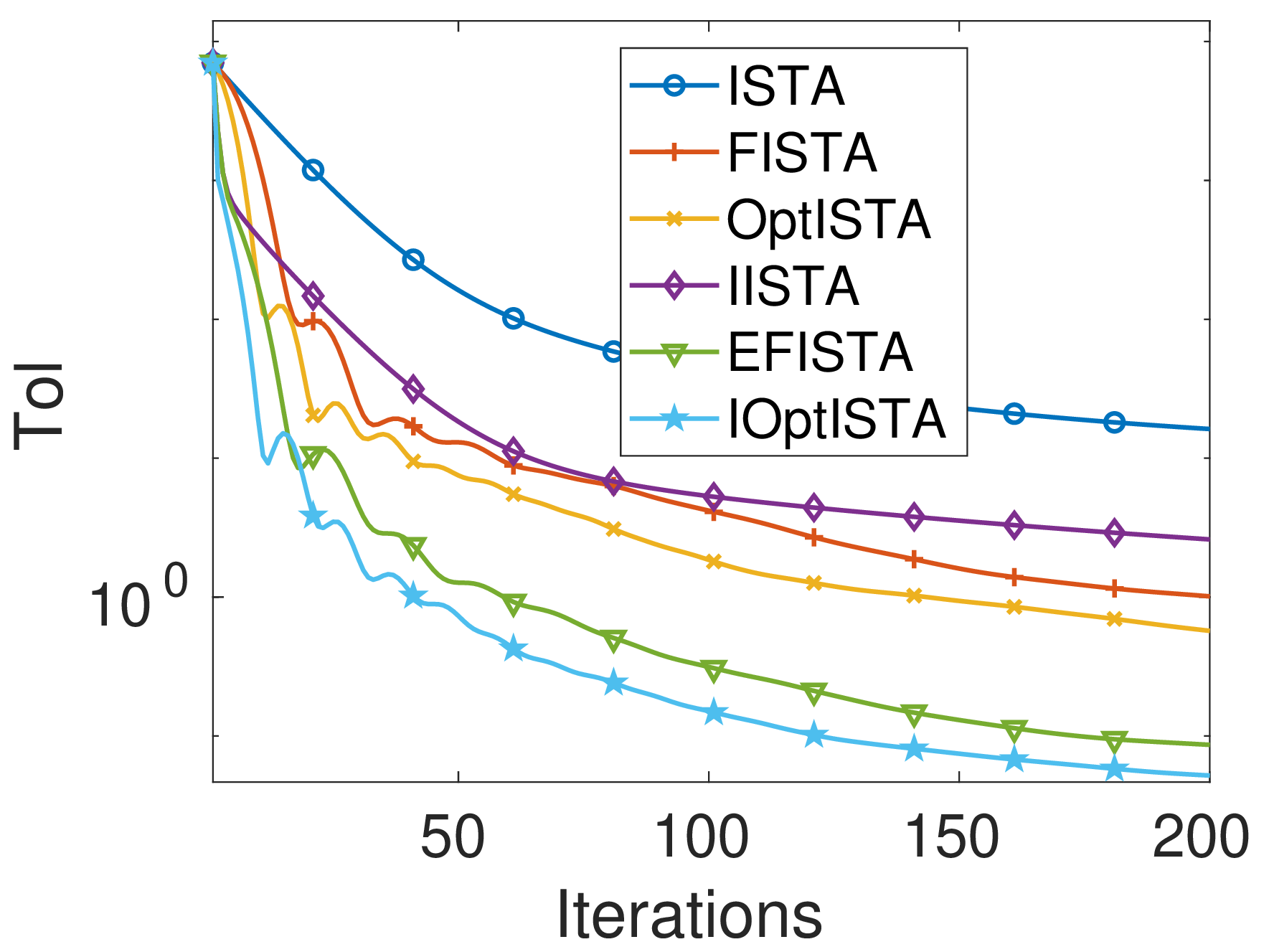} & \includegraphics[width=0.32\textwidth, height=3.5cm]{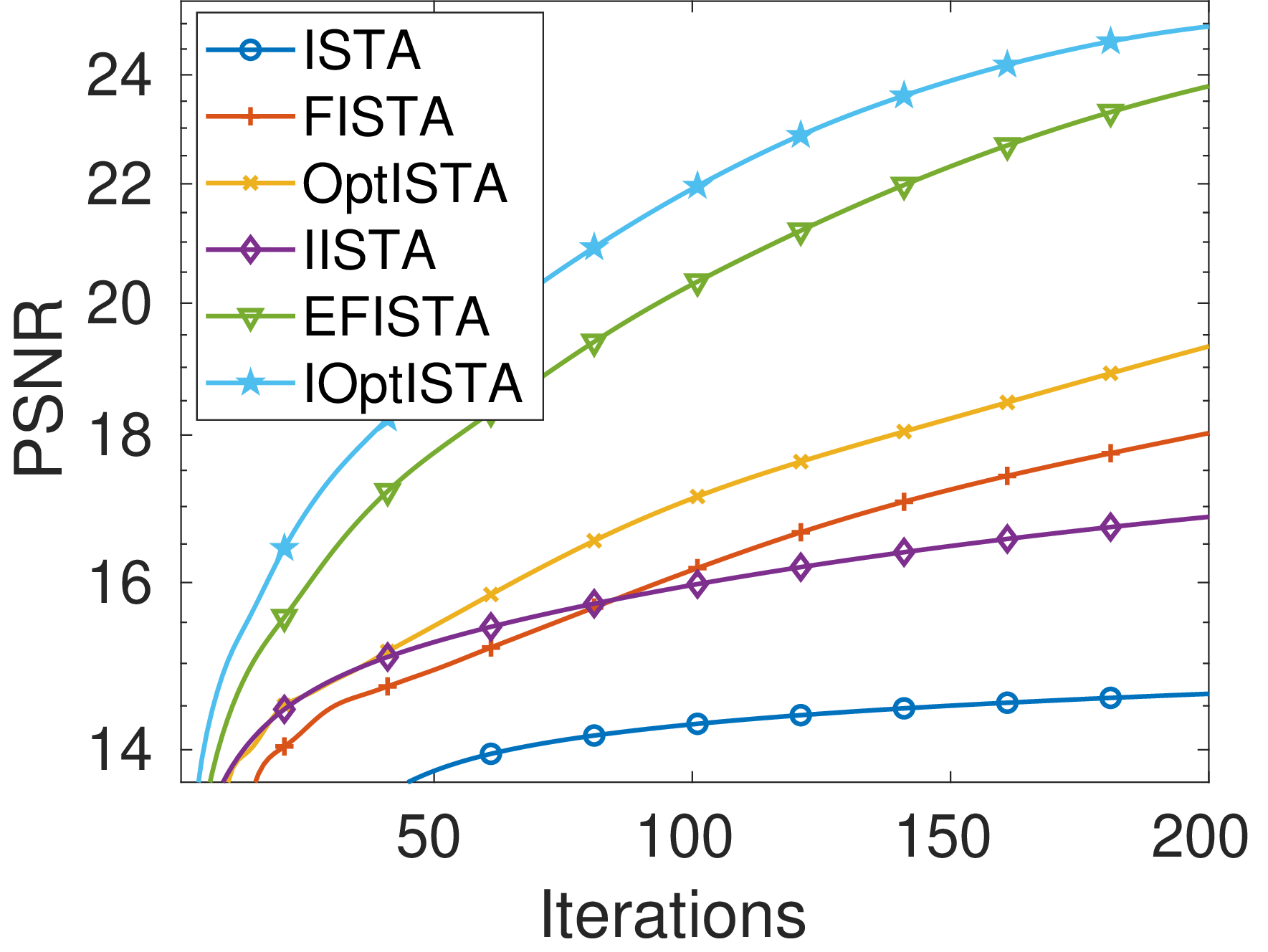}& \includegraphics[width=0.32\textwidth, height=3.5cm]{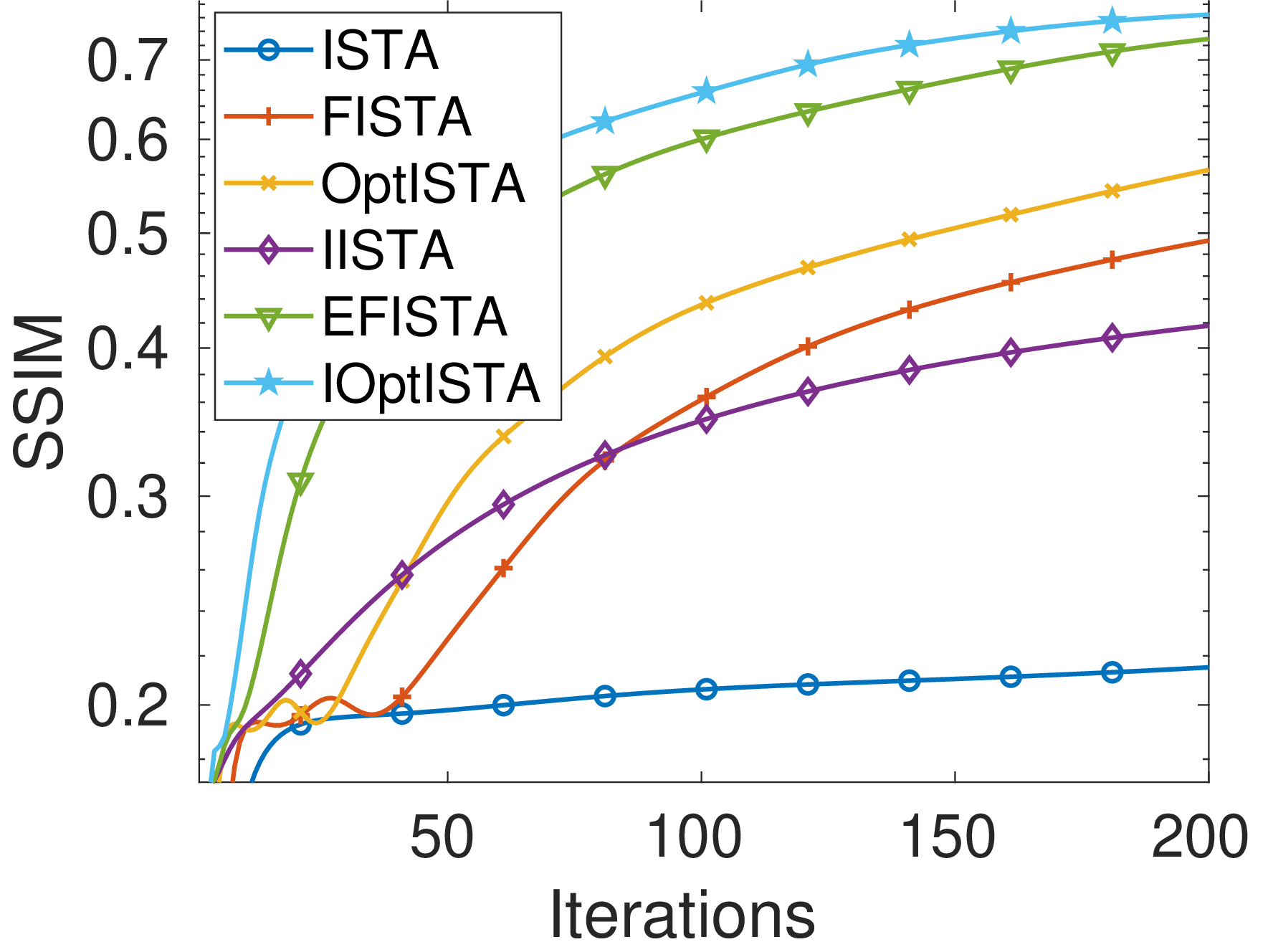}\\
	(d) Tol & (e) PNSR & (f) SSIM
\end{tabular}
\caption{The Tol, PSNR and SSIM results for Figure \ref{details_images}(d)  with ``$k = \text{fspecial}(\text{`disk'}, 7)$'' and $\varepsilon\sim\mathcal{N}(0, 1e-3)$.} 
\label{tps_results_tv_img05}
\end{figure}
\begin{figure}[!ht]
\setlength\tabcolsep{2pt}
\centering
\begin{tabular}{cccccc} 
\includegraphics[width=0.23\textwidth]{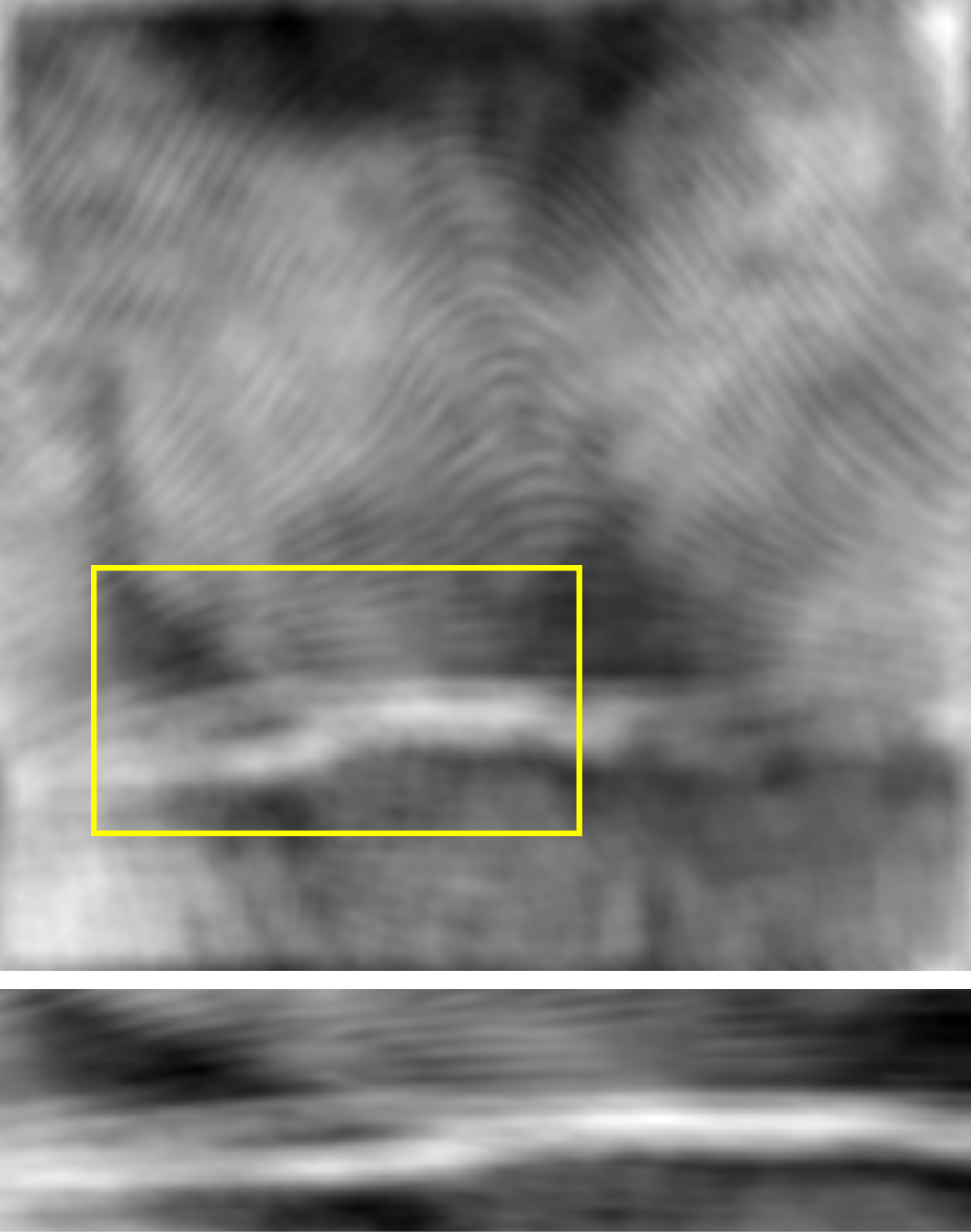} & \includegraphics[width=0.23\textwidth]{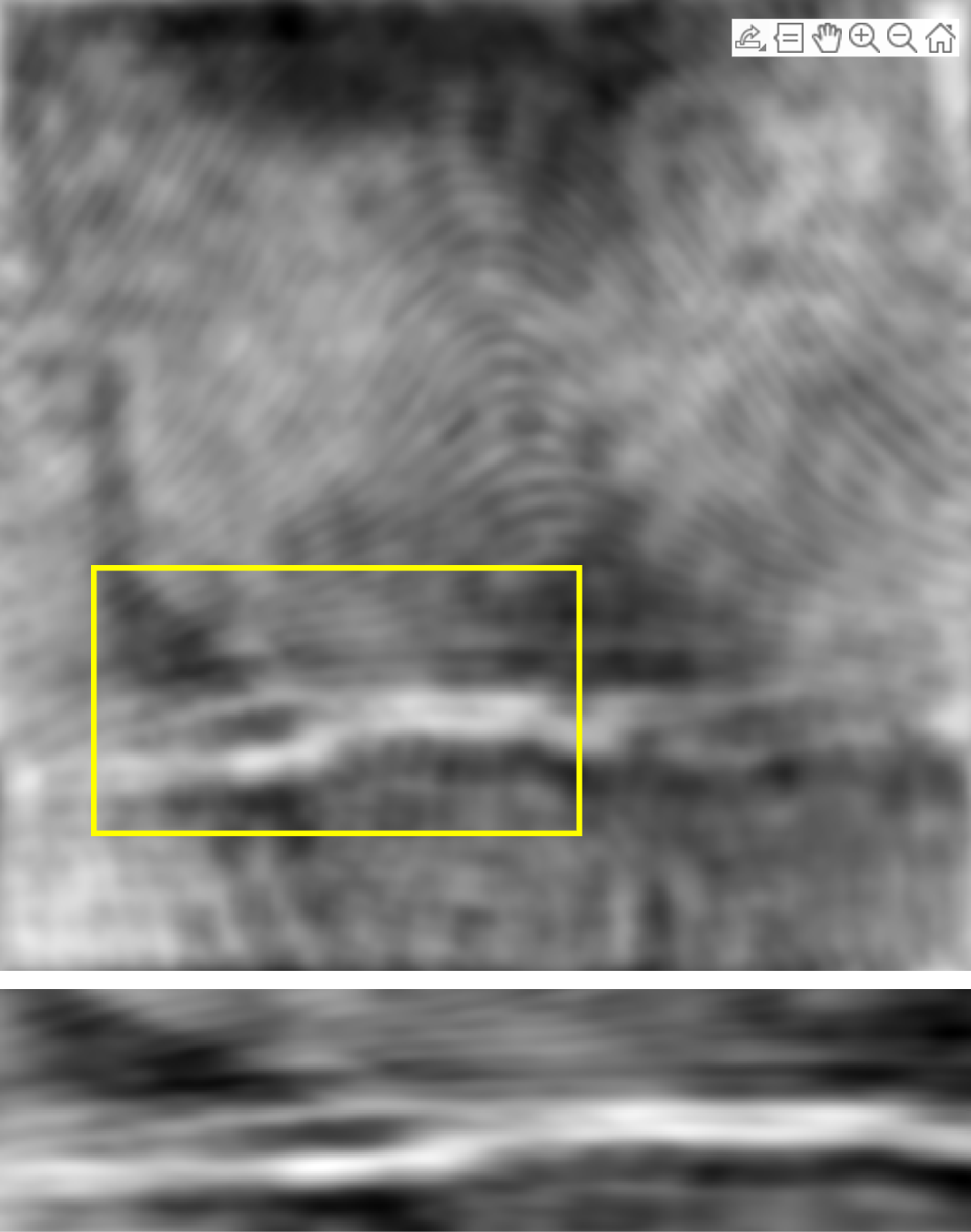}&
\includegraphics[width=0.23\textwidth]{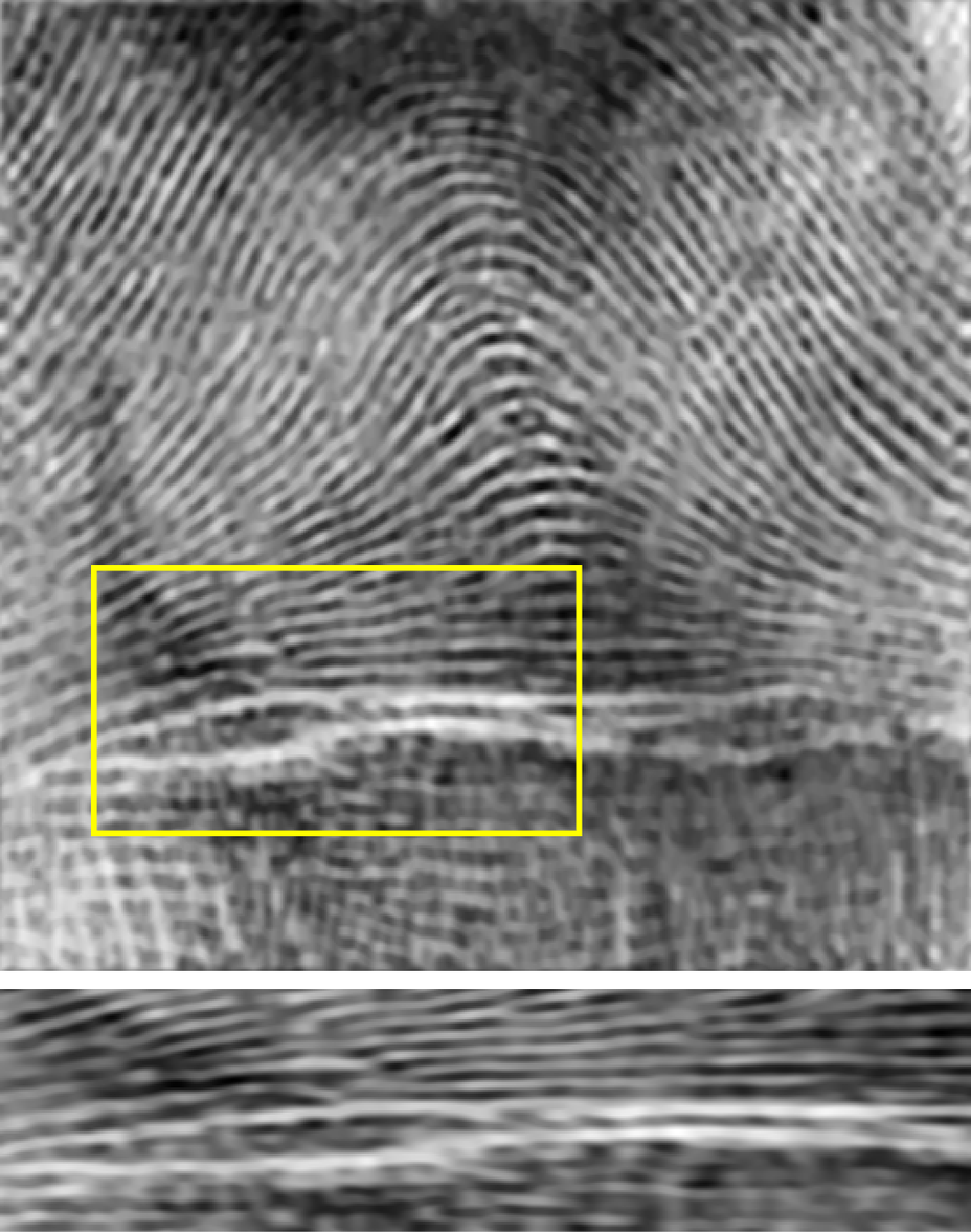}&
\includegraphics[width=0.23\textwidth]{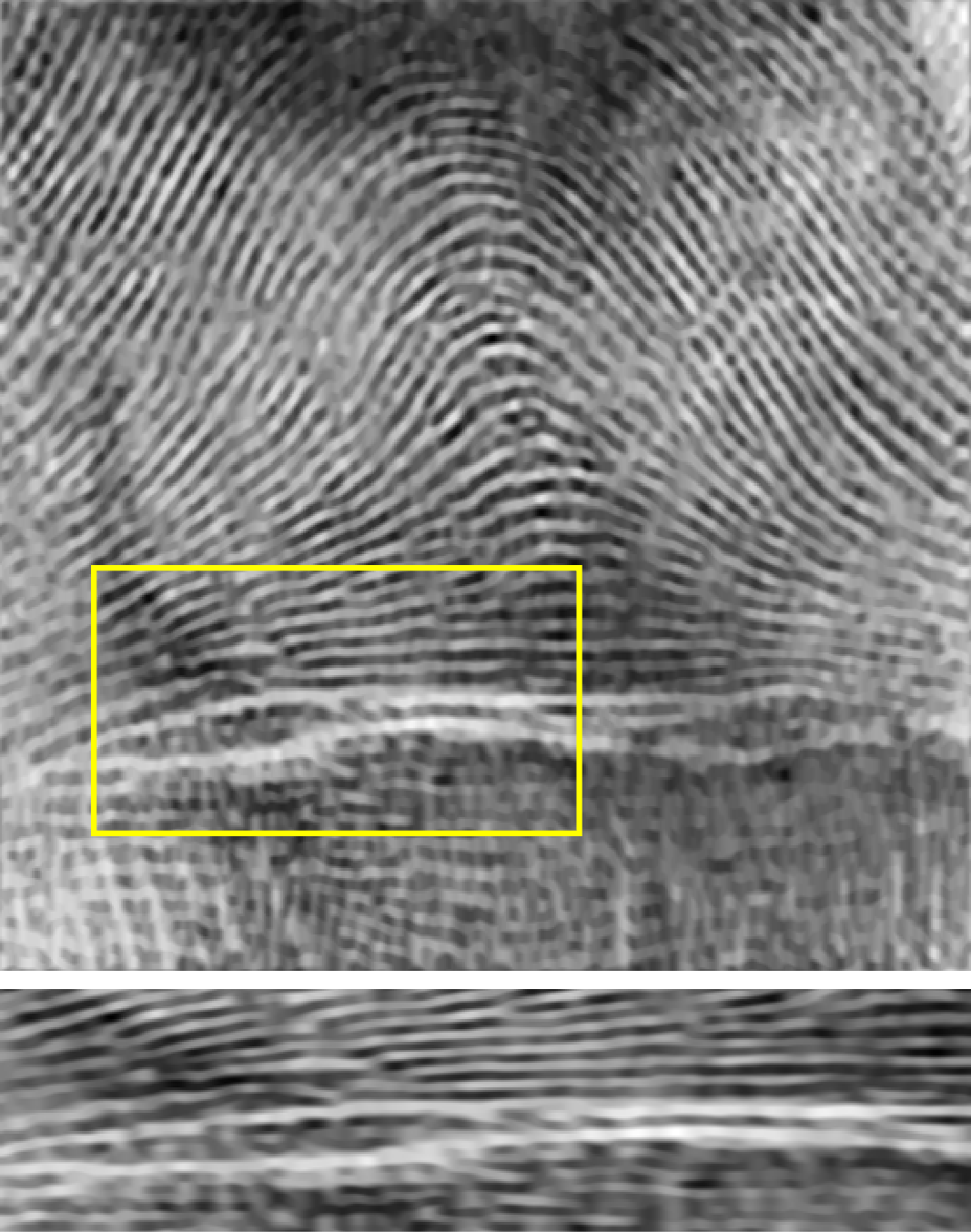}\\
(a) Noised & (b) ISTA & (c) FISTA&(d) OptISTA \\
\includegraphics[width=0.23\textwidth]{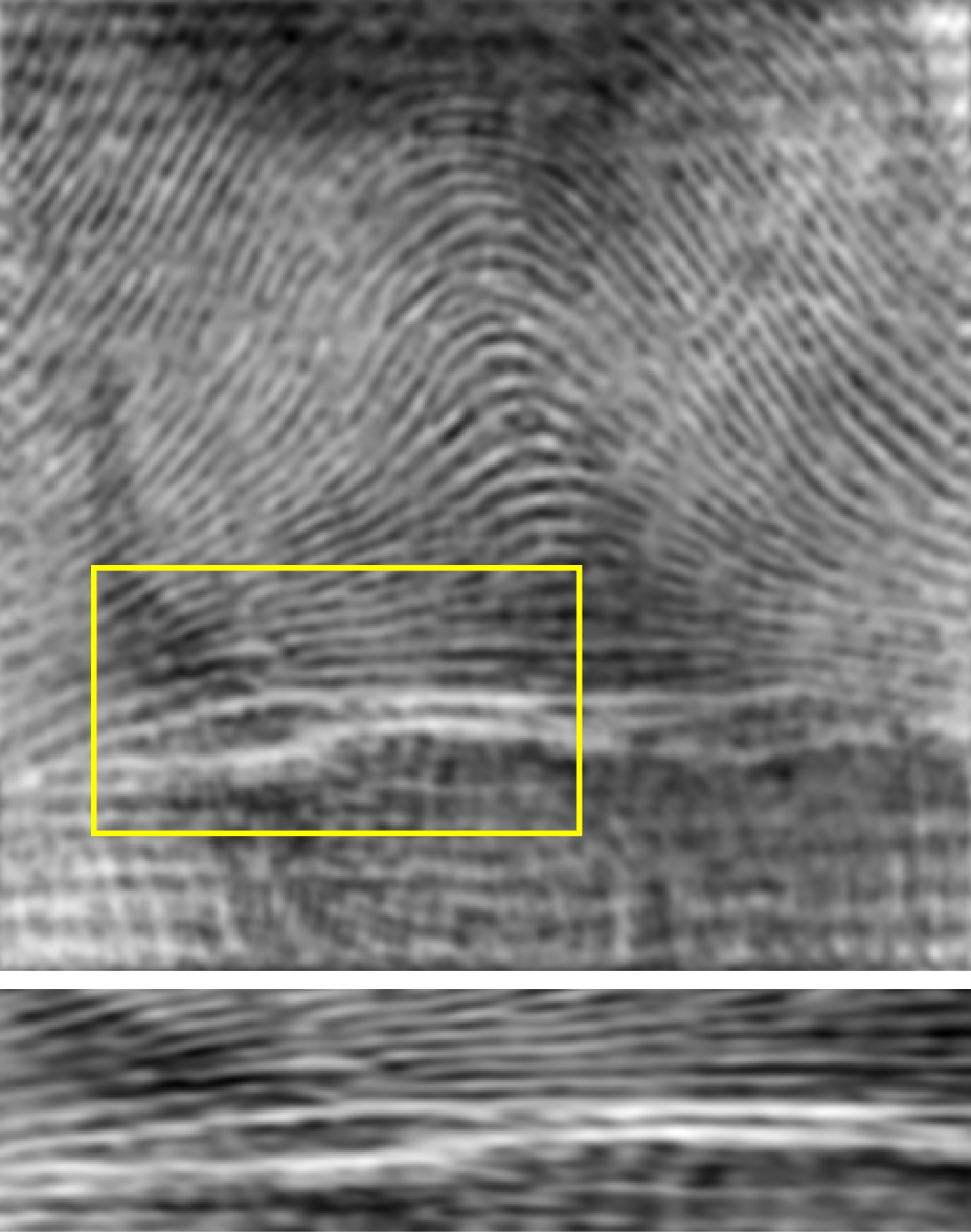} & \includegraphics[width=0.23\textwidth]{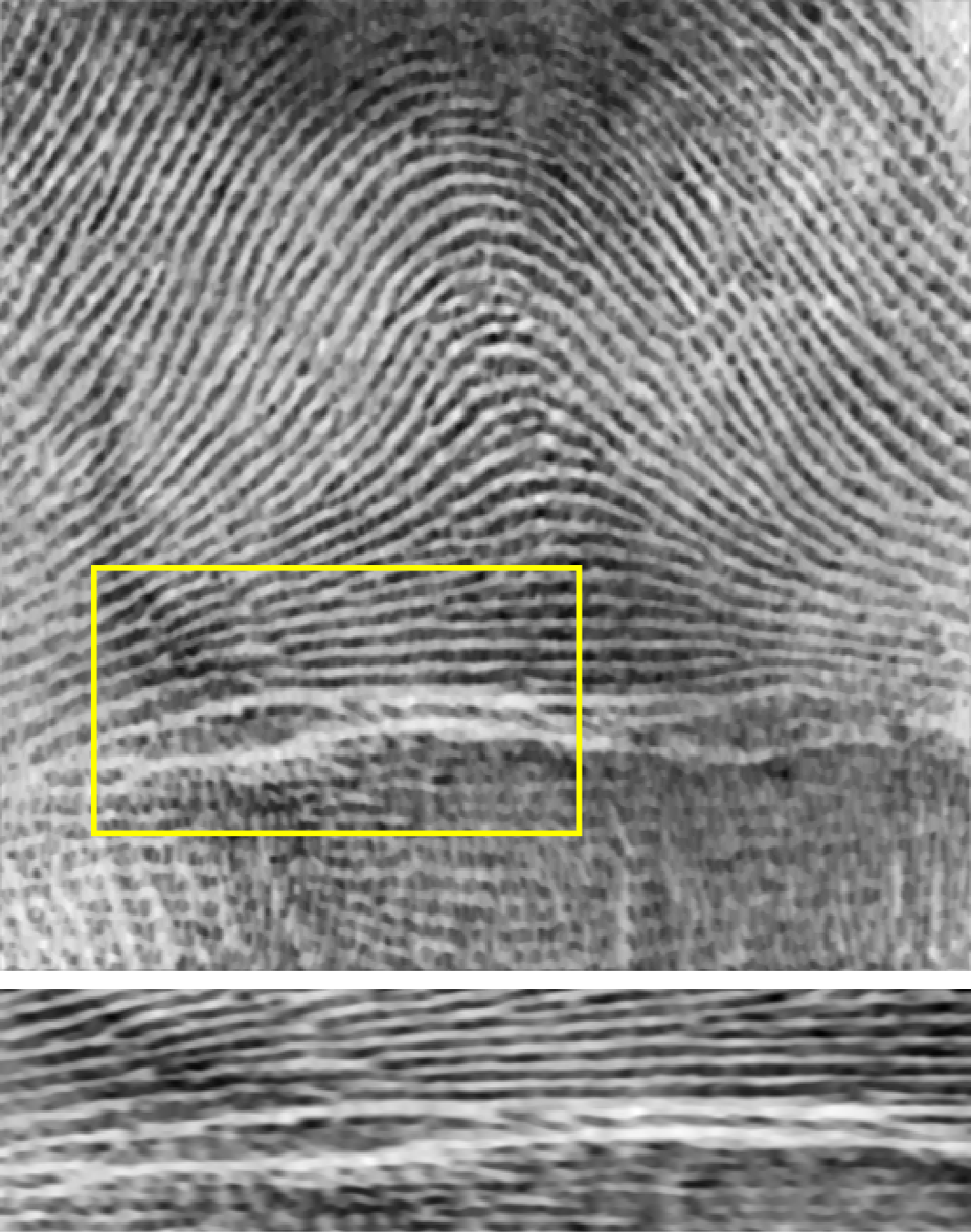}&
\includegraphics[width=0.23\textwidth]{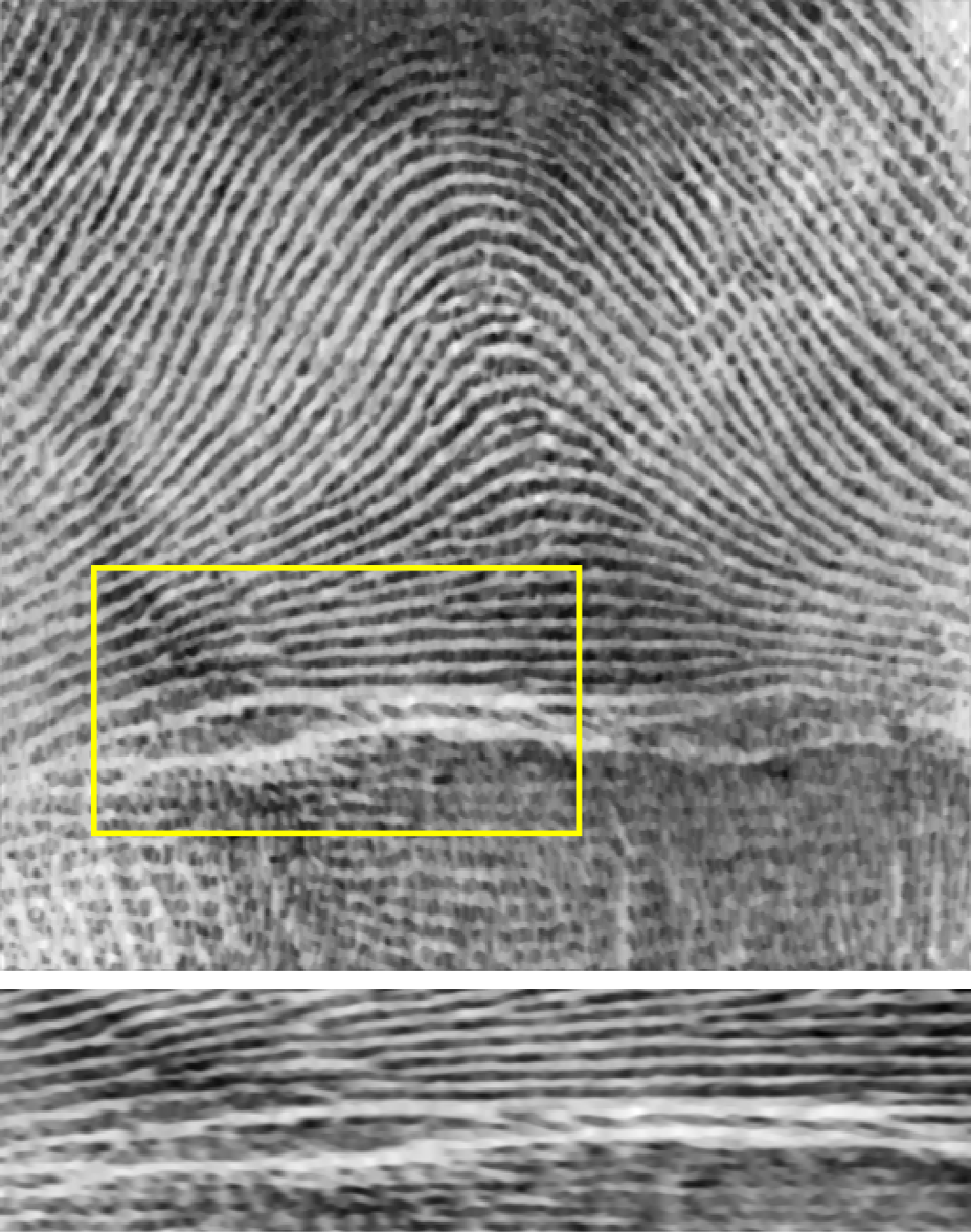}&
\includegraphics[width=0.23\textwidth]{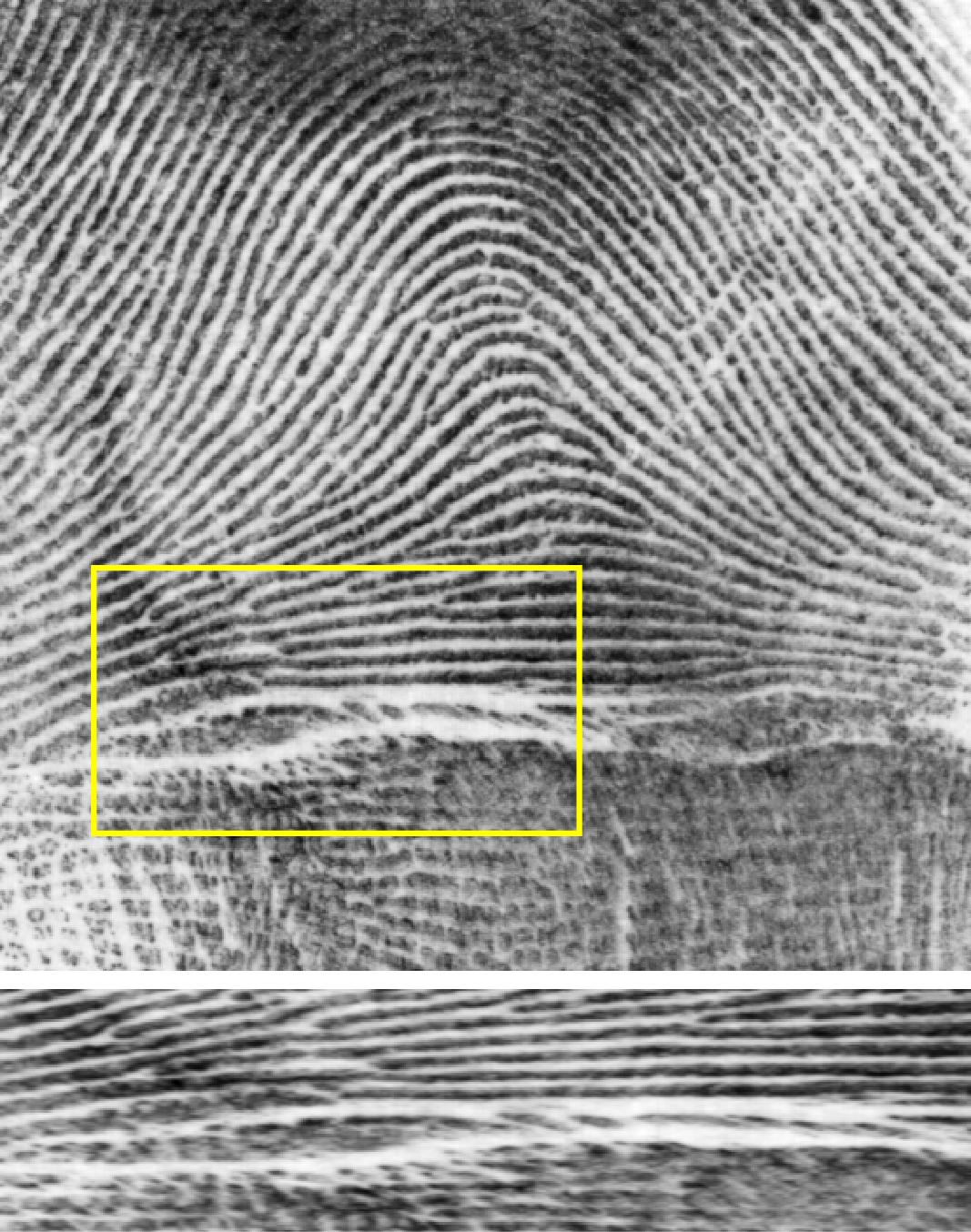}\\
(e) IISTA & (f) EFISTA & (g) IOptISTA & (h) Original 
\end{tabular}
\caption{Blurred and restored images for the Figure \ref{details_images}(c)  with ``$k = \text{fspecial}(\text{`disk'}, 13)$'' and $\varepsilon\sim\mathcal{N}(0, 1e-4)$.} 
\label{results_img01_TV}
\end{figure}
\begin{figure}[!ht]
\setlength\tabcolsep{2pt}
\centering
\begin{tabular}{cccccc} 
\includegraphics[width=0.23\textwidth]{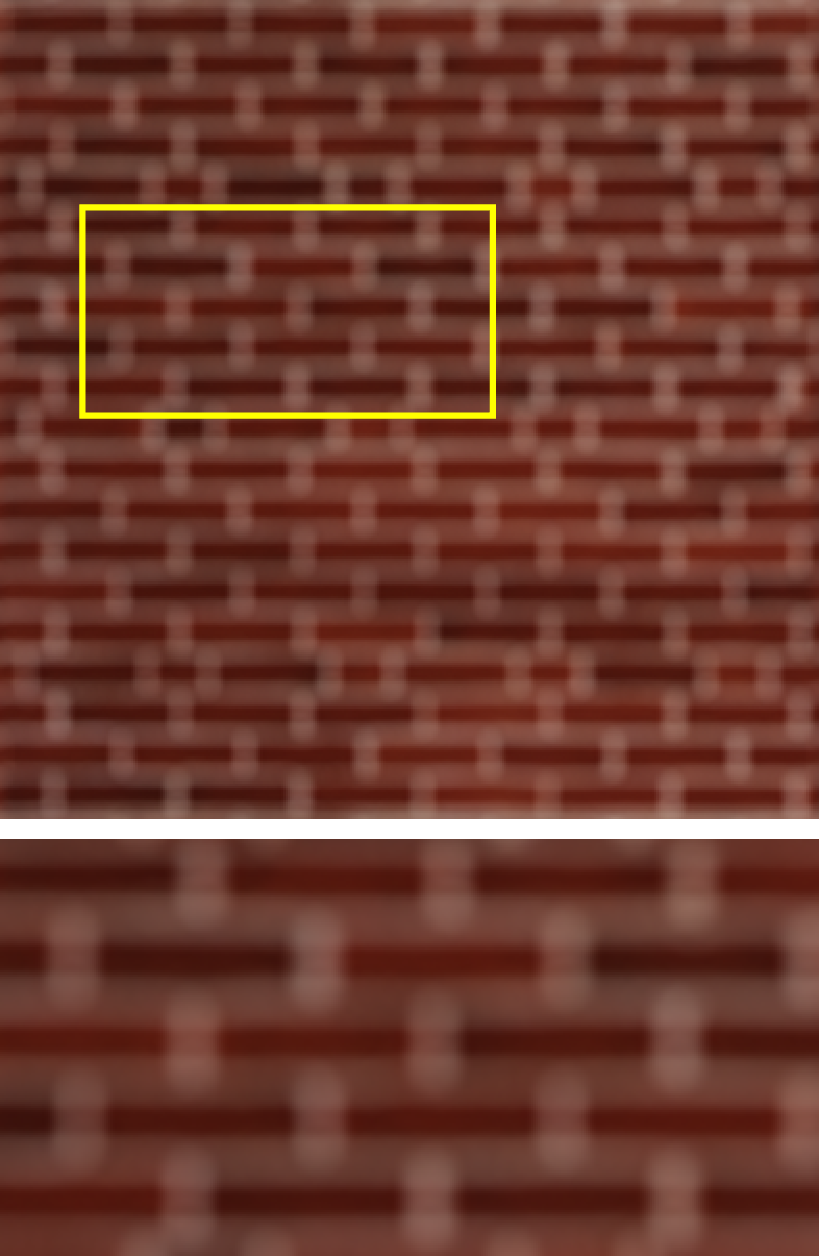} & \includegraphics[width=0.23\textwidth]{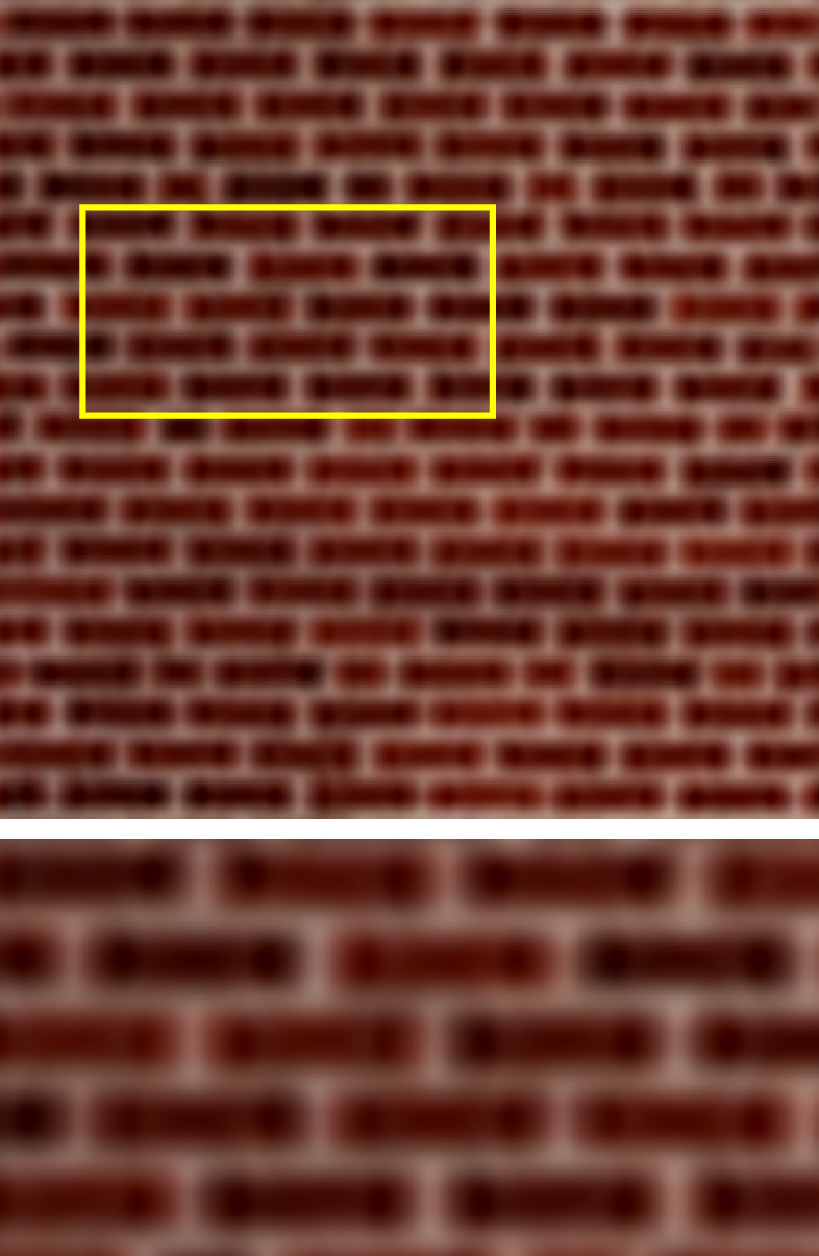}&
\includegraphics[width=0.23\textwidth]{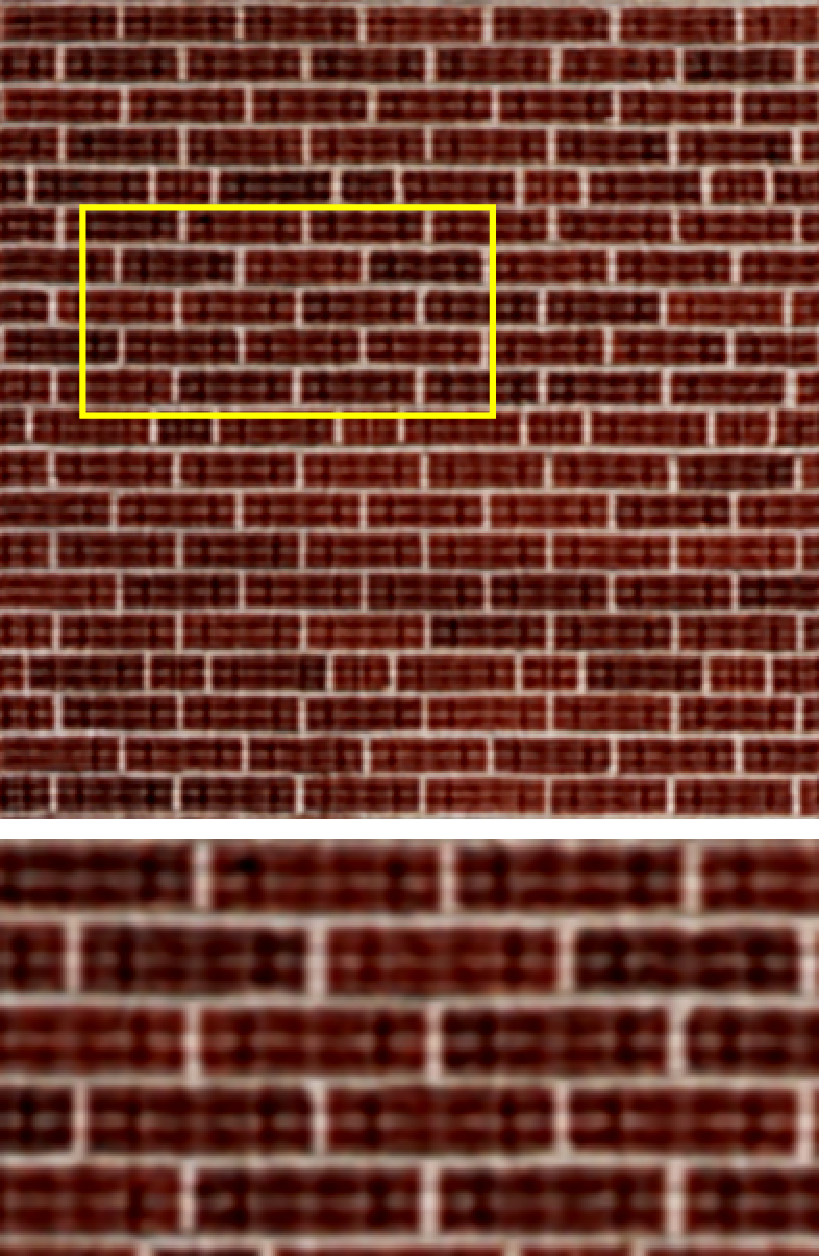}&
\includegraphics[width=0.23\textwidth]{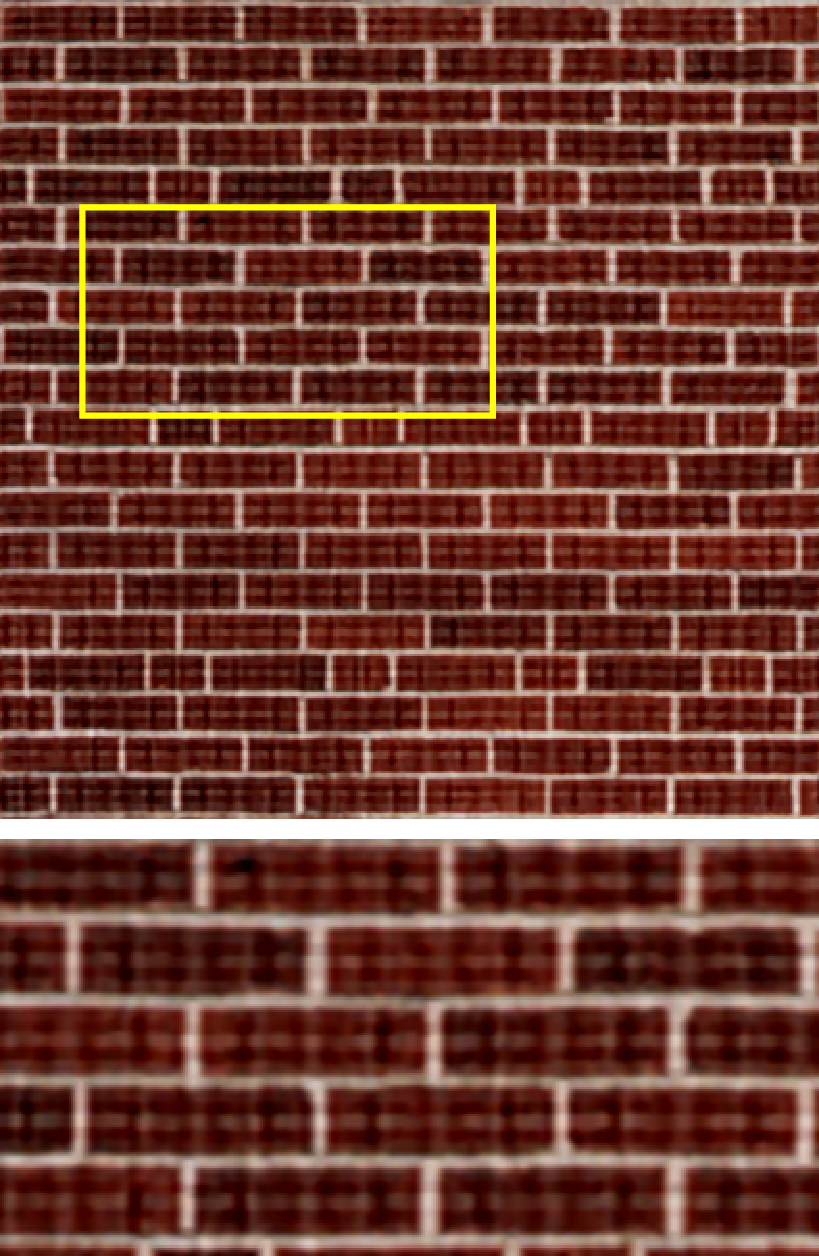}\\
(a) Noised & (b) ISTA & (c) FISTA&(d) OptISTA \\
\includegraphics[width=0.23\textwidth]{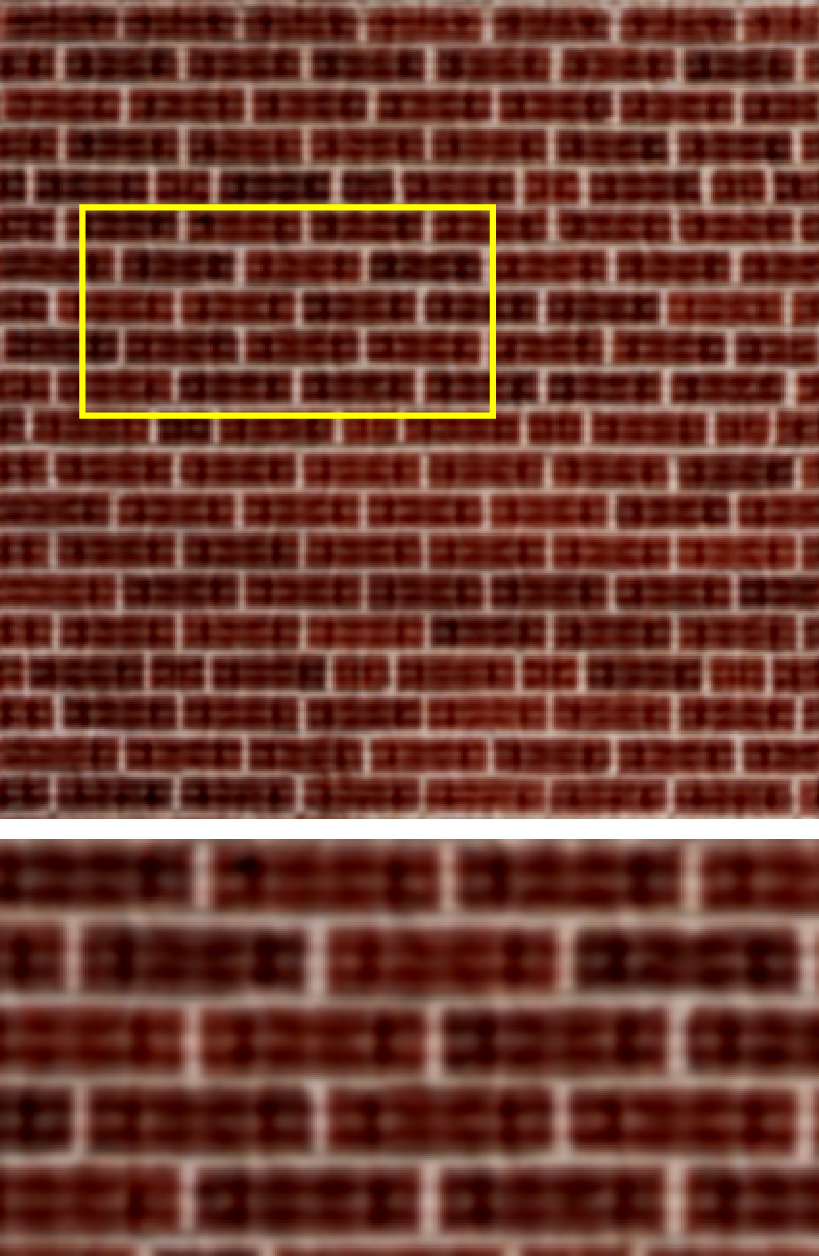} & \includegraphics[width=0.23\textwidth]{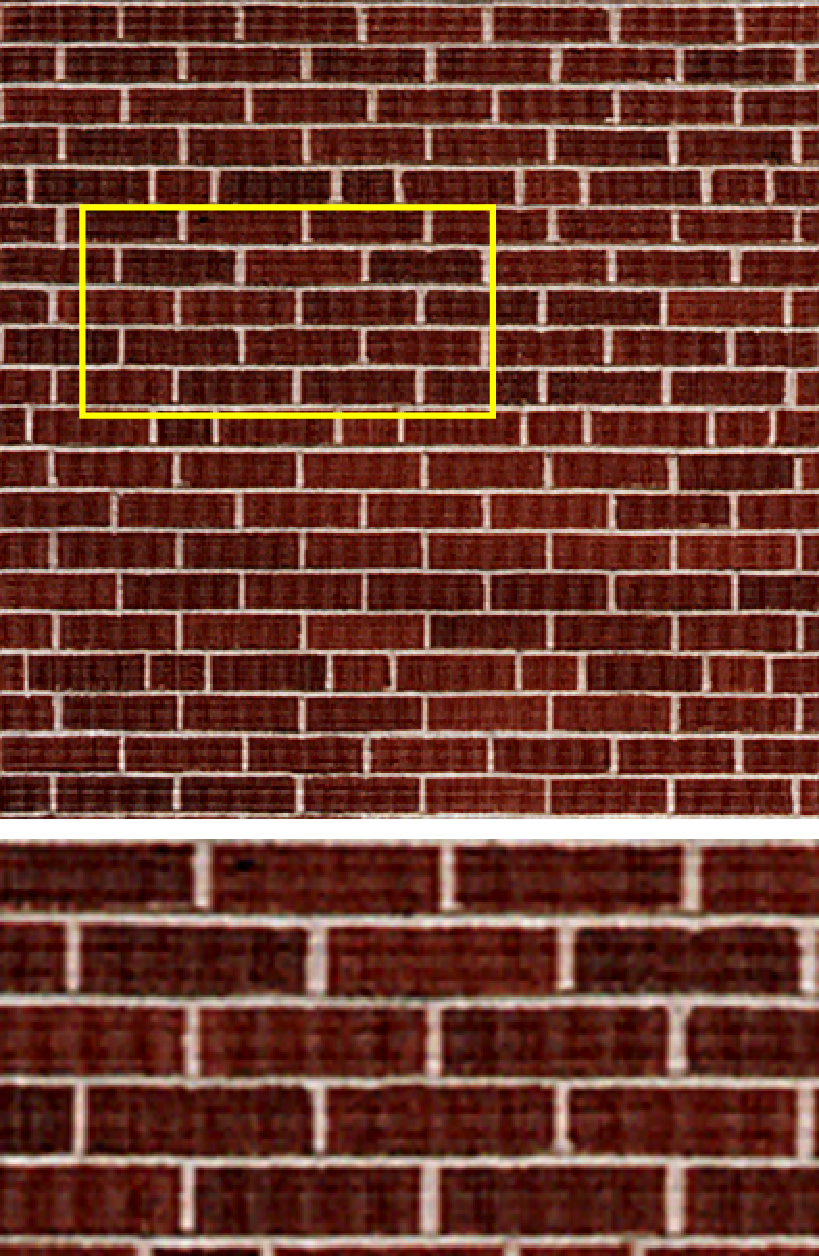}&
\includegraphics[width=0.23\textwidth]{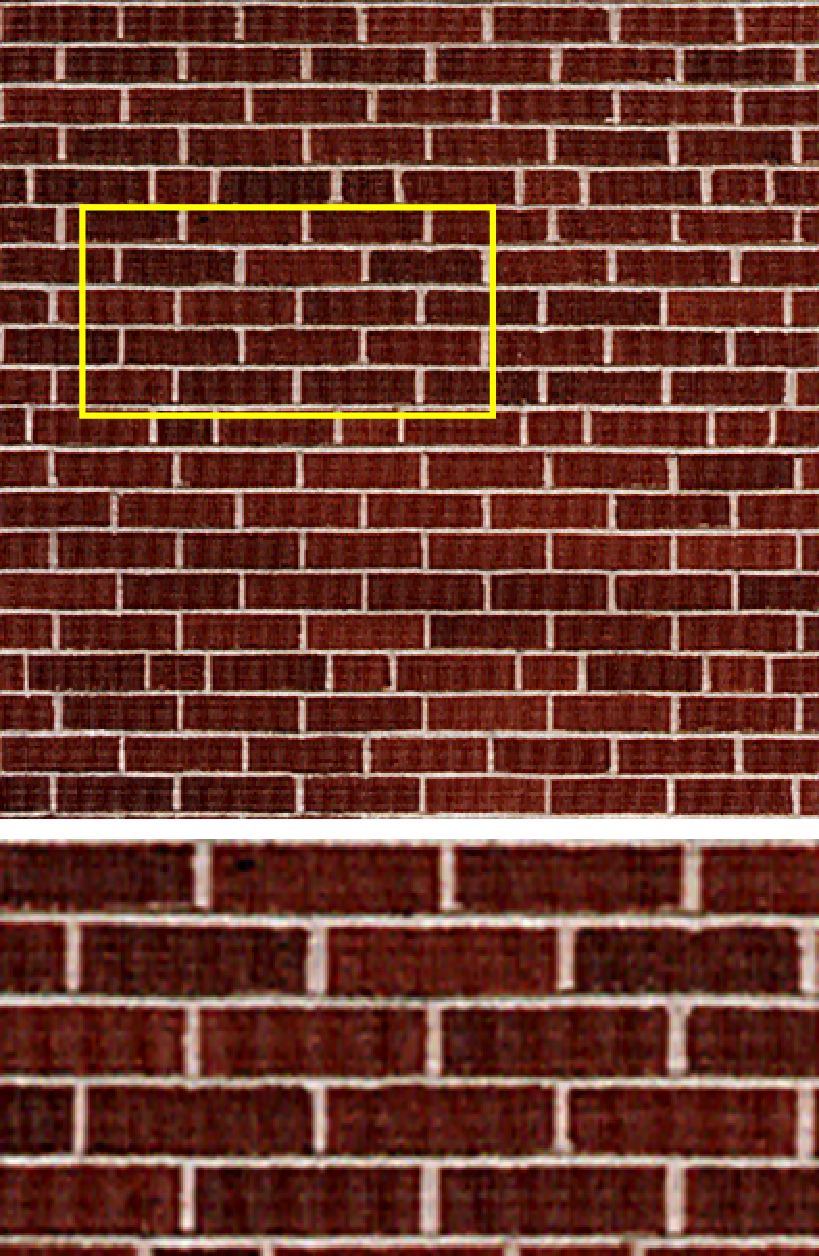}&
\includegraphics[width=0.23\textwidth]{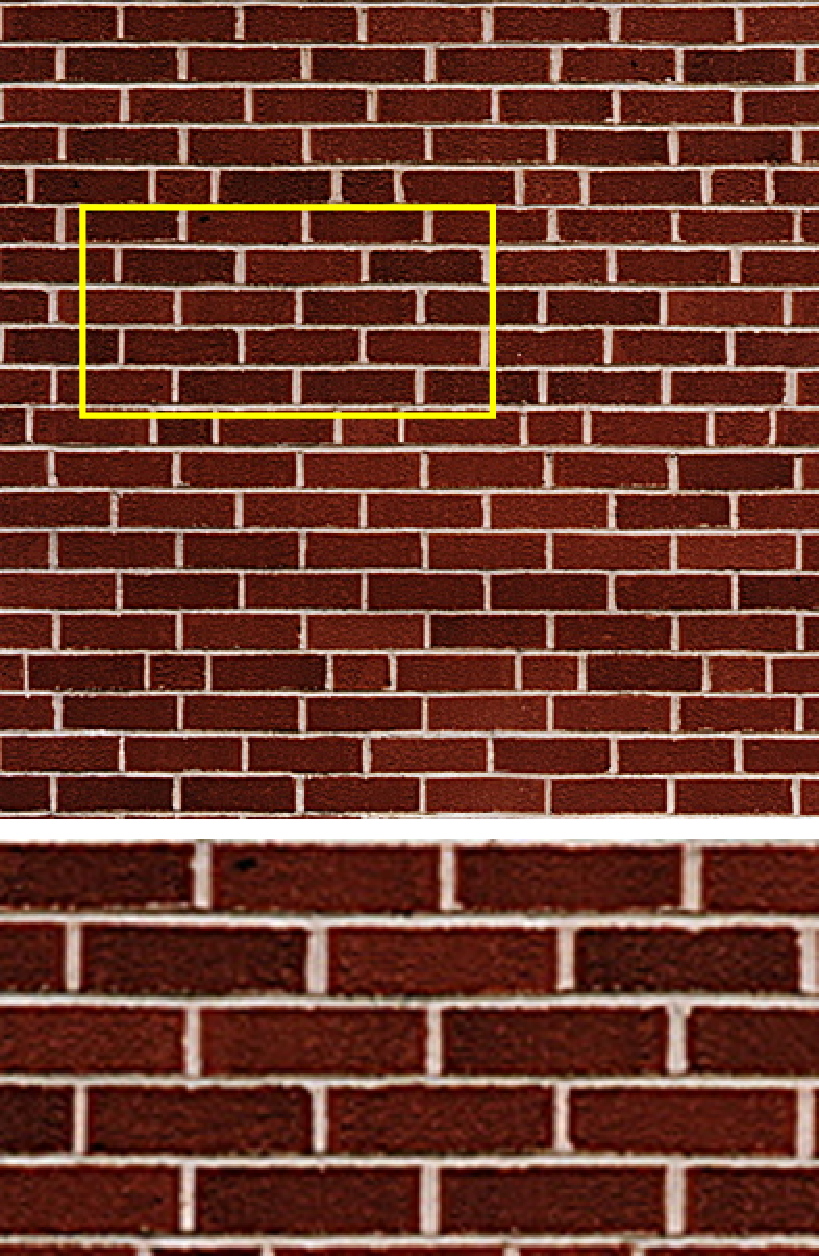}\\
(e) IISTA & (f) EFISTA & (g) IOptISTA & (h) Original 
\end{tabular}
\caption{Blurred and restored images for the Figure \ref{details_images}(d)  with ``$k = \text{fspecial}(\text{`disk'}, 7)$'' and $\varepsilon\sim\mathcal{N}(0, 1e-3)$.} 
\label{results_img04_TV}
\end{figure}

In summary, a comprehensive suite of numerical experiments has substantiated the effectiveness of the proposed algorithms. This paper establishes that integrating the weighting matrix technique with the optimal gradient method markedly enhances convergence and elevates numerical performance.

\section{Conclusion and future works}\label{conclusion-part}
This paper addressed the non-blind image deblurring problem by proposing an improved optimal proximal gradient algorithm for its solution. Specifically, we developed the IOptISTA and MOptISTA algorithms, whose superior convergence speed has been demonstrated through numerical experiments. These experiments, employing $l_1$ norm and TV norm regularizations, confirmed the efficiency of our proposed methods, achieving lower error rates and higher PSNR and SSIM values compared to existing methods, thus showcasing their superior restoration performance.

In the future, we aim to conduct a comprehensive theoretical convergence analysis of the proposed algorithms to complement the experimental findings. Additionally, we plan to explore the application of these methods to nonconvex optimization problems in image and video deblurring, further extending their utility in computational imaging.

\bigskip
\section*{Declarations}
\noindent{\bf Conflict of Interest:} On behalf of all authors, the corresponding author states that there is no conflict of interest. 

\noindent{\bf Funding:} {This research is supported by the National Natural Science Foundation of China (NSFC) grant 12171145, 12331011, 12401415, 12471282.}

\noindent{\bf Data  Availability Statement:} Data will be available upon reasonable request. 

\newpage
\appendix
\section{Proof of Lemma \ref{lemma1}} \label{pf_lemma1}
Before to prove Lemma \ref{lemma1}, we need the following lemmas.
\begin{lemma} \label{lemma2}
(Lemma 16 in \cite{JangGR23})
Let $\{t_{i,j}\}_{i\in[1:K],j\in[0:K-1]}$ and $\{s_{i,j}\}_{i\in[1:K],j\in[0:K-1]}$ be
\[
t_{i+1,j}=\begin{cases}
t_{j+1,j}+\sum_{k=j+1}^{i}\left(\frac{2\alpha_j}{\alpha_{k+1}}-\frac{1}{\alpha_{k+1}}t_{k,j}
\right),\quad &\textup{ if } j\in[0:i-1], \\
1+\frac{2\alpha_i-1}{\alpha_{i+1}},\quad &\textup{ if } j=i,\end{cases}
\]
\[
s_{i+1,j}=\begin{cases}
   t_{i+1,j}-t_{i,j},\quad   &\textup{ if } j\in[0:i-1],\\
   t_{i+1,j},\quad  &\textup{ if } j=i.
\end{cases}
\]
Then,
 \[
 t_{i+1,j}=\sum_{k=j+1}^{i+1} s_{k,j} \textup{ for } j\in[0:i]
\textup{ and }
s_{i+1,j}=\begin{cases}
    \frac{\alpha_i-1}{\alpha_{i+1}}s_{i,j},& \textup{ if } j\in[0:i-2],\\
    \frac{\alpha_i-1}{\alpha_{i+1}}\left(s_{i,i-1}-1\right),& \textup{ if } j=i-1,\\
    1+\frac{2\alpha_i-1}{\alpha_{i+1}}, &\textup{ if }  j=i.
\end{cases}
\]
\end{lemma}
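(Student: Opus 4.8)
The plan is to treat Lemma \ref{lemma2} as a purely algebraic statement about the two doubly-indexed arrays and to prove its two assertions — the summation identity and the three-case recursion — directly from the defining relations, without invoking any special property of $\{\alpha_k\}$.

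First I would establish the summation identity $t_{i+1,j}=\sum_{k=j+1}^{i+1}s_{k,j}$ by telescoping. For $j=i$ it is immediate, since the sum collapses to its single term $s_{i+1,i}=t_{i+1,i}$ by the second case of the definition of $s$. For $j\in[0:i-1]$ I would peel off the lowest index: the base term is $s_{j+1,j}=t_{j+1,j}$ (second case of $s$ with $i=j$), while every higher term satisfies $s_{k,j}=t_{k,j}-t_{k-1,j}$ for $k\ge j+2$ (first case of $s$). Summing then telescopes to $t_{j+1,j}+(t_{i+1,j}-t_{j+1,j})=t_{i+1,j}$, which is the claim. This part is routine and requires only care about which branch of the piecewise definition applies at each index.

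The heart of the argument is the three-case recursion for $s_{i+1,j}$, and the linchpin I would isolate first is the \emph{unified identity}
\[
\alpha_{i+1}\,s_{i+1,j}=2\alpha_j-t_{i,j}\qquad\text{for all }j\in[0:i-1].
\]
This follows by subtracting the defining sum for $t_{i,j}$ from that for $t_{i+1,j}$: when $j\le i-2$ both use the first-case formula and the difference leaves only the last summand $\tfrac{2\alpha_j-t_{i,j}}{\alpha_{i+1}}$, while at the boundary $j=i-1$ the sum for $t_{i+1,i-1}$ reduces to a single term over the base value $t_{i,i-1}=t_{j+1,j}$, giving the same expression. With this identity in hand the three cases fall out. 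The case $j=i$ is immediate: $s_{i+1,i}=t_{i+1,i}=1+\tfrac{2\alpha_i-1}{\alpha_{i+1}}$ by the second branches of $s$ and $t$. For $j\in[0:i-2]$ I would write the unified identity at levels $i$ and $i-1$, namely $\alpha_{i+1}s_{i+1,j}=2\alpha_j-t_{i,j}$ and $\alpha_i s_{i,j}=2\alpha_j-t_{i-1,j}$, and subtract; the $2\alpha_j$ terms cancel and $t_{i-1,j}-t_{i,j}=-s_{i,j}$, yielding $\alpha_{i+1}s_{i+1,j}=(\alpha_i-1)s_{i,j}$. For the boundary $j=i-1$ I would substitute $t_{i,i-1}=s_{i,i-1}=1+\tfrac{2\alpha_{i-1}-1}{\alpha_i}$ into the unified identity and factor out $\tfrac{\alpha_i-1}{\alpha_i}$ to match $\tfrac{\alpha_i-1}{\alpha_{i+1}}(s_{i,i-1}-1)$.

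The main obstacle is not the algebra within any single case, which is short, but the bookkeeping at the boundaries $j=i$, $j=i-1$, $j=i-2$: one must verify at each step exactly which branch of the piecewise definitions of $t$ and $s$ is active, and in particular recognize that the unified identity continues to hold at $j=i-1$ even though $t_{i,i-1}$ is then supplied by the second (base-value) branch rather than the first. Getting these index ranges right is what makes the three cases align with the three branches of the claimed recursion; once the unified identity is secured, everything else is a direct substitution.
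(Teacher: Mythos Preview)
Your proof is correct and complete. The summation identity is indeed a straightforward telescope once the boundary case $s_{j+1,j}=t_{j+1,j}$ is identified, and your ``unified identity'' $\alpha_{i+1}s_{i+1,j}=2\alpha_j-t_{i,j}$ for $j\in[0:i-1]$ is the right organizing device for the three-case recursion; the computations you outline for each case go through exactly as you describe.

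As for comparison with the paper: there is nothing to compare against. The paper does not prove Lemma~\ref{lemma2} at all --- it simply quotes the statement verbatim as Lemma~16 of \cite{JangGR23} and uses it as a black box inside the proof of Lemma~\ref{lemma3}. Your write-up therefore supplies strictly more than the paper does on this point, giving a self-contained algebraic verification that removes the dependence on the external reference.
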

Now we convert $x$ and $y$-iterates of IOptISTA (Algorithm \ref{IOptISTA}) into the following form.
\begin{lemma}\label{lemma3}
Let
\[
\gamma_i=\frac{2\alpha_i}{\alpha_K^2}\left(\alpha_K^2-2\alpha_i^2+\alpha_i\right) \textup{ for } i\in[0:K-1],
\]
\[
t_{i+1,j}=\begin{cases} t_{j+1,j}+\sum_{k=j+1}^{i}\left(\frac{2\alpha_j}{\alpha_{k+1}}-\frac{1}{\alpha_{k+1}}t_{k,j}\right), &\textup{ if } j\in[0:i-1],
\\ 1+\frac{2\alpha_i-1}{\alpha_{i+1}}, &\textup{ if } j=i.\end{cases}
\]
Denote $h'(y_{j+1}):=\frac{1}{\eta_{j}\gamma_{j}}(y_{j}-\eta_{j}\gamma_{j}W_{n}\nabla f(x_{j})-y_{j+1})\in \partial h(y_{j+1})$,  then,  $\{x_1,\ldots,x_{K}\}$ and  $\{y_1,\ldots,y_K\}$ of the following  form:
\begin{equation}
    \begin{aligned}
    y_{i+1}&=x_0-\sum_{j\in[0:i]}\gamma_j\eta_{j} W_{n}\nabla f(x_j)- \sum_{j\in[0:i]}\gamma_j\eta_{j}h'(y_{j+1}) \quad\textup{ for } i\in[0:K-1],\\
     x_{i+1}&=x_0-\sum_{j\in[0:i]}t_{i+1,j}\eta_{j}W_{n}\nabla f(x_j)- \sum_{j\in[0:i]}t_{i+1,j}\eta_{j}h'(y_{j+1})
     \quad \textup{ for } i\in[0:K-1]
    \end{aligned}
        \label{equivalent_xy}
\end{equation}
is equal to $x$-iterates and $y$-iterates generated by Algorithm \ref{IOptISTA} respectively.
\end{lemma}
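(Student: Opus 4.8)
The plan is to collapse both recursions of Algorithm \ref{IOptISTA} onto a single family of composite increments and then match coefficients. Introduce the shorthand $g_j := \eta_j W_n\nabla f(x_j) + \eta_j h'(y_{j+1})$. First I would record the first-order optimality condition of the proximal subproblem defining $y_{j+1}$: since $y_{j+1}$ minimizes $h(\cdot) + \tfrac{1}{2\gamma_j\eta_j}\|\cdot - (y_j - \gamma_j\eta_j W_n\nabla f(x_j))\|^2$, we obtain $h'(y_{j+1}) = \tfrac{1}{\gamma_j\eta_j}\bigl(y_j - \gamma_j\eta_j W_n\nabla f(x_j) - y_{j+1}\bigr) \in \partial h(y_{j+1})$, which simultaneously justifies the subgradient claim and rearranges to $y_{j+1} = y_j - \gamma_j g_j$. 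Telescoping this identity from $y_0 = x_0$ gives the asserted formula for $y_{i+1}$ immediately. The same substitution turns the $z$-update into $z_{j+1} = x_j + \tfrac{1}{\gamma_j}(y_{j+1}-y_j) = x_j - g_j$, which is the simplification feeding the $x$-iterate.

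For the $x$-iterates I would induct on $i$. Inserting $z_{i+1} = x_i - g_i$ and $z_i = x_{i-1} - g_{i-1}$ into the momentum update and writing $c := \tfrac{\alpha_i-1}{\alpha_{i+1}}$, $d := \tfrac{\alpha_i}{\alpha_{i+1}}$, the update collapses to $x_{i+1} = (1+c)x_i - c\,x_{i-1} - (1+c+d)\,g_i + c\,g_{i-1}$. The base case $i=0$ is direct: with $\alpha_0 = 1$ one gets $x_1 = x_0 - (1+\tfrac{1}{\alpha_1})g_0 = x_0 - t_{1,0}\,g_0$. For the step I would substitute the induction hypotheses $x_i = x_0 - \sum_{j=0}^{i-1} t_{i,j}g_j$ and $x_{i-1} = x_0 - \sum_{j=0}^{i-2} t_{i-1,j}g_j$; the $x_0$-coefficients combine to $1$, and it remains to verify that the coefficient of each $g_j$ equals $-t_{i+1,j}$.

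The coefficient check splits into three cases. For $j=i$ the coefficient is $1+c+d = 1 + \tfrac{2\alpha_i-1}{\alpha_{i+1}}$, which is exactly the diagonal entry $t_{i+1,i}$. For $j\le i-2$ the coefficient is $(1+c)t_{i,j} - c\,t_{i-1,j}$, and for $j=i-1$ it is $(1+c)t_{i,i-1} - c$. To finish I would use the first-order recursion $t_{i+1,j} - t_{i,j} = \tfrac{2\alpha_j}{\alpha_{i+1}} - \tfrac{1}{\alpha_{i+1}}t_{i,j}$ for $j\in[0:i-1]$, obtained by differencing the closed-form definition of $t$ (this is precisely the content of the $s$-recursion in Lemma \ref{lemma2}, since $s_{i+1,j}=t_{i+1,j}-t_{i,j}$). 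Eliminating $t_{i-1,j}$ with the same recursion one step earlier and using $c/(\alpha_i-1)=1/\alpha_{i+1}$ gives $(1+c)t_{i,j}-c\,t_{i-1,j} = (1-\tfrac{1}{\alpha_{i+1}})t_{i,j} + \tfrac{2\alpha_j}{\alpha_{i+1}} = t_{i+1,j}$, while the $j=i-1$ identity follows after substituting the explicit diagonal value $t_{i,i-1} = 1 + \tfrac{2\alpha_{i-1}-1}{\alpha_i}$.

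The main obstacle is the bookkeeping at the boundary index $j=i-1$, where the expansion of $x_i$ contributes its diagonal coefficient $t_{i,i-1}$ while the new momentum term contributes a bare constant; this is exactly the case that Lemma \ref{lemma2} isolates in the separate $j=i-1$ branch of its $s$-recursion, so invoking that lemma is what makes the boundary coefficient match cleanly. I would emphasize that only the closed-form definition of $\{t_{i,j}\}$ and elementary algebra are used here, and no identity among the $\alpha_k$ beyond the definition of $t$ is needed, so the two representations in \eqref{equivalent_xy} hold for the iterates independently of the quadratic schedule chosen for $\{\alpha_k\}$.
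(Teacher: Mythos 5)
Your proposal is correct and follows essentially the same strategy as the paper: telescope the proximal optimality condition for the $y$-iterates, then induct on $i$ for the $x$-iterates and match the coefficient of each composite increment $\eta_j W_n\nabla f(x_j)+\eta_j h'(y_{j+1})$ against $t_{i+1,j}$, treating $j=i$, $j=i-1$, and $j\le i-2$ separately. The only (cosmetic) difference is that the paper routes the coefficient check through the $s$-recursion of Lemma \ref{lemma2} applied to $z_{i+1}-z_i$, whereas you eliminate $t_{i-1,j}$ directly from the first-order difference of the $t$-definition; the two computations are equivalent, and your closing observation that no property of $\{\alpha_k\}$ beyond $\alpha_0=1$ and the definition of $t$ is needed is consistent with the paper's argument.
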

\begin{proof}
Denote $\{\hat{x}_{1},\dots, \hat{x}_{K}\}$ and $\{\hat{y}_{1},\dots, \hat{y}_{K}\}$ to be sequence generated by Algorithm \ref{IOptISTA}, it shows that $x_{0}=y_{0}$ and $\hat{y}_{1}=y_{1}$. We can get that
\[
\hat{y}_{1}=\text{Prox}_{\gamma_{0}\eta_{0}h}(y_{0}-\gamma_{0}\eta_{0}W_{n}\nabla f(x_{0}))=x_{0}-\gamma_{0}\eta_{0}W_{n}\nabla f(x_{0})-\gamma_{0}\eta_{0}h'(\hat{y}_{1}).
\]
For $x$-iterate, we have
\begin{align*}
\hat{x}_{1}=&x_{1}+\frac{\alpha_{0}-1}{\alpha_{1}}(z_{1}-z_{0})+\frac{\alpha_{0}}{\alpha_{1}}(z_{1}-z_{0})\\
=&x_{0}+\frac{1}{\gamma_{0}}(\hat{y}_{1}-y_{0})+\frac{1}{\alpha_{1}}(x_{0}+\frac{1}{\gamma_{0}}(\hat{y}_{1}-y_{0})-x_{0})\\
=&x_{0}-\frac{1}{\gamma_{0}}\gamma_{0}\eta_{0}(W_{n}\nabla f(x_{0})+h'(\hat{y}_{1})) -\frac{1}{\alpha_{1}}\frac{1}{\gamma_{0}}\gamma_{0}\eta_{0}(W_{n}\nabla f(x_{0})+h'(\hat{y}_{1}))\\
=& x_{0}-\eta_{0}(1+\frac{1}{\alpha_{1}})W_{n}\nabla f(x_{0}) -\eta_{0}(1+\frac{1}{\alpha_{1}})h'(\hat{y}_{1})\\
=&x_{0}-t_{1,0}\eta_{0}W_{n}\nabla f(x_{0}) -t_{1,0}\eta_{0}h'(\hat{y}_{1})\\
=&x_{1}.
\end{align*}
Now we suppose $x_{k}=\hat{x}_{k}$ and $y_{k}=\hat{y}_{k}$ for $1\le k\le i$ and $1\le i\le K-1$. Then we have $\hat{y}_{i+1}=y_{i+1}$ from the uniqueness of the proximal operator, and 
\begin{align*}
\hat{y}_{i+1}=&\text{Prox}_{\gamma_{i}\eta_{i} h}(\hat{y}_{i}-\gamma_{i}\eta_{i}W_{n}\nabla f(x_{i})\\
=&y_{i}-\gamma_{i}\eta_{i}W_{n}\nabla f(x_{i}) -\gamma_{i}\eta_{i}h'(\hat{y}_{i+1})\\
=&x_{0}-\sum_{j\in[0:i-1]}\gamma_{i}\eta_{i}W_{n}\nabla f(x_{j}) -\sum_{j\in[0:i-1]}\gamma_{j}\eta_{j}h'(y_{j+1})-\gamma_{i}\eta_{i}W_{n}\nabla f(x_{i}) -\gamma_{i}\eta_{i}h'(\hat{y}_{i+1})\\
=&x_{0}-\sum_{j\in[0:i]}\gamma_{j}\eta_{j}W_{n}\nabla f(x_{j})-\sum_{j\in[0:i]}\gamma_{j}\eta_{j}h'(y_{j+1})\\
=&y_{i+1}.
\end{align*}
For $x$-iterate, it shows that
\begin{align*}
z_{i+1}=&x_{0}-\sum_{j\in[0:i-1]}t_{i,j}\eta_{j}\nabla f(x_{j}) -\sum_{j\in[0:i-1]}t_{i,j}\eta_{j}h'(\hat{y}_{j+1}) -\frac{1}{\gamma_{i}}(\gamma_{i}\eta_{i}W_{n}\nabla f(x_{i})+\gamma_{i}\eta_{i}h'(y_{i+1}))\\
=& x_{0}-\sum_{j\in[0:i-1]}t_{i,j}\eta_{j}W_{n}\nabla f(x_{j}) - \sum_{j\in[0:i-1]}t_{i,j}\eta_{j}h'(\hat{y}_{j+1}) -\eta_{i}W_{n}\nabla f(x_{i})-\eta_{i}h'(y_{i+1}).
\end{align*}
Based on the above equation, we have
\begin{align*}
\hat{x}_{i+1}=&z_{i+1}+\frac{\alpha_{i}-1}{\alpha_{i+1}}(z_{i+1}-z_{i})+\frac{\alpha_{i}}{\alpha_{i+1}}(z_{i+1}-\hat{x}_{i})\\
=&x_{0}-\sum_{j\in[0:i-1]}t_{i,j}\eta_{j}W_{n}\nabla f(x_{j}) -\sum_{j\in[0:i-1]}t_{i,j}\eta_{j}h'(y_{j+1}) -\eta_{i}W_{n}\nabla f(x_{i}) -\eta_{i}h'(y_{i+1})\\
&-\frac{\alpha_{i}-1}{\alpha_{i+1}}(\sum_{j\in[0:i-2]}s_{i,j}\eta_{j}W_{n}\nabla f(x_{j})+\sum_{j\in[0:i-2]}s_{i,j}\eta_{j}h'(y_{j+1}))\\
&-\frac{\alpha_{i}-1}{\alpha_{i+1}}((t_{i,i-1}-1)\eta_{i-1}W_{n}\nabla f(x_{i-1}) +(t_{i,i-1}-1)\eta_{i-1}h'(y_{i}))\\
&-\frac{\alpha_{i}-1}{\alpha_{i+1}}(\eta_{i}W_{n}\nabla f(x_{i}) +\eta_{i}h'(y_{i+1})) -\frac{\alpha_{i}}{\alpha_{i+1}}(\eta_{i}W_{n}\nabla f(x_{i}) +\eta_{i}h'(y_{i+1})).
\end{align*}
From Lemma \ref{lemma2}, it shows that
\begin{align*}
&\frac{\alpha_{i}-1}{\alpha_{i+1}} s_{i,j}=s_{i+1,j}\quad \text{ for } j\in[0:i-2],\\
&\frac{\alpha_{i}-1}{\alpha_{i+1}}(t_{i,i-1}-1)=s_{i+1,i-1}.
\end{align*}
Thus, we have
\begin{align*}
\hat{x}_{i+1}=&x_{0}-\sum_{j\in[0:i-1]}t_{i,j}\eta_{j}W_{n}\nabla f(x_{j}) -\sum_{j\in[0:i-1]}t_{i,j}\eta_{j}h'(y_{j+1}) -\eta_{i}W_{n}\nabla f(x_{i}) -\eta_{i}h'(y_{i+1})\\
&-\sum_{j\in[0:i-2]}s_{i+1,j}\eta_{j}W_{n}\nabla f(x_{j}) -\sum_{j\in[0:i-2]}s_{i+1,j}\eta_{j}h'(y_{j+1}) -s_{i+1,i-1}\eta_{i-1}W_{n}\nabla f(x_{i-1})\\
&-s_{i+1,i-1}\eta_{i-1}h'(y_{i})-\frac{\alpha_{i}-1}{\alpha_{i+1}}(\eta_{i}W_{n}\nabla f(x_{i})+\eta_{i}h'(y_{i+1}))\\
&-\frac{\alpha_{i}}{\alpha_{i+1}}(\eta_{i}W_{n}\nabla f(x_{i})+\eta_{i}h'(y_{i+1}))\\
=& x_{0}-\sum_{j\in[0:i-1]}t_{i+1,j}\eta_{j}W_{n}\nabla f(x_{j}) -\sum_{j\in[0:i-1]}t_{i+1,j}\eta_{j}h'(y_{j+1}) \\
&-\eta_{i}(1+\frac{2\alpha_{i}-1}{\alpha_{i+1}})W_{n}\nabla f(x_{i}) -\eta_{i}(1+\frac{2\alpha_{i}-1}{\alpha_{i+1}})h'(y_{i+1})\\
=&x_{0}- \sum_{j\in[0:i]}t_{i+1,j}\eta_{j}W_{n}\nabla f(x_{j}) -\sum_{j\in[0:i]}t_{i+1,j}\eta_{j}h'(y_{j+1})\\
=&x_{i+1}.
\end{align*}
Based on mathematical induction, we finish the proof.
\end{proof}

Now we are ready to prove Lemma \ref{lemma1}.
\begin{proof}
From Lemma 18 in \cite{JangGR23}, it shows that 
\[
\frac{\alpha_{j+1}}{2\alpha_{j}-1}(t_{K,j}-1) -1 = \frac{\alpha_{j+1}-1}{2\alpha_{j+1}-1}(t_{K,j+1}-1).
\]
Based on the above equation, we can get that  the equation 
\[
\gamma_{j}=\frac{2\alpha_{j}}{\alpha_{K}^{2}}(\alpha_{K}^{2}-2\alpha_{j}^{2}+\alpha_{i}) =t_{K,j}
\]
holds for all $j\in[0:K-1]$. From \eqref{equivalent_xy} in Lemma \ref{lemma3} with $i=K-1$, we have 
\begin{align*}
x_{K}=&x_0-\sum_{j\in[0:K-1]}t_{K,j}\eta_{j}W_{n}\nabla f(x_j)- \sum_{j\in[0:K-1]}t_{K,j}\eta_{j}h'(y_{j+1})\\
=&x_0-\sum_{j\in[0:K-1]}\gamma_{j}\eta_{j}W_{n}\nabla f(x_j)- \sum_{j\in[0:K-1]}\gamma_{j}\eta_{j}h'(y_{j+1})\\
=&y_{K}.
\end{align*}
This completes the proof.
\end{proof}
\bibliographystyle{plain}

\bibliography{IOPGA}

\begin{thebibliography}{10}

\bibitem{AdamMHMN23}
Tarmizi Adam, Alexander Malyshev, Mohd~Fikree Hassan, Nur~Syarafina Mohamed,
  and Md~Sah~Hj Salam.
\newblock Accelerated proximal iterative re-weighted $l_{1}$ alternating
  minimization for image deblurring.
\newblock {\em arXiv:2309.05204}, 2023.

\bibitem{AliyanB12}
Sanaz Aliyan and Ali Broumandnia.
\newblock A new machine learning approach to deblurring license plate using
  k-means clustering method[.
\newblock {\em International Journal of Advanced Research in Artificial
  Intelligence}, 1(2), 2012.

\bibitem{ArridgeMOS19}
Simon~R. Arridge, Peter Maass, Ozan {\"{O}}ktem, and Carola{-}Bibiane
  Sch{\"{o}}nlieb.
\newblock Solving inverse problems using data-driven models.
\newblock {\em Acta Numer.}, 28:1--174, 2019.

\bibitem{BarmanD24}
Trishna Barman and Bhabesh Deka.
\newblock A deep learning-based joint image super-resolution and deblurring
  framework.
\newblock {\em {IEEE} Trans. Artif. Intell.}, 5(6):3160--3173, 2024.

\bibitem{Beck17}
Amir Beck.
\newblock {\em First-Order Methods in Optimization}.
\newblock SIAM, 2017.

\bibitem{BeckT09}
Amir Beck and Marc Teboulle.
\newblock A fast iterative shrinkage-thresholding algorithm for linear inverse
  problems.
\newblock {\em {SIAM} J. Imaging Sci.}, 2(1):183--202, 2009.

\bibitem{BenningB18}
Martin Benning and Martin Burger.
\newblock Modern regularization methods for inverse problems.
\newblock {\em Acta Numer.}, 27:1--111, 2018.

\bibitem{BhottoAS15}
Md. Zulfiquar~Ali Bhotto, M.~Omair Ahmad, and M.~N.~S. Swamy.
\newblock An improved fast iterative shrinkage thresholding algorithm for image
  deblurring.
\newblock {\em {SIAM} J. Imaging Sci.}, 8(3):1640--1657, 2015.

\bibitem{ChanCW99}
Raymond~H. Chan, Tony~F. Chan, and Chiu{-}Kwong Wong.
\newblock Cosine transform based preconditioners for total variation
  deblurring.
\newblock {\em {IEEE} Trans. Image Process.}, 8(10):1472--1478, 1999.

\bibitem{DaubechiesDM04}
I.~Daubechies, M.~Defrise, and C.~De Mol.
\newblock An iterative thresholding algorithm for linear inverse problems with
  a sparsity constraint.
\newblock {\em Communications on Pure and Applied Mathematics: A Journal Issued
  by the Courant Institute of Mathematical Sciences}, 57(11):1413--1457, 2004.

\bibitem{EqtedaeiA24}
Amir Eqtedaei and Alireza Ahmadyfard.
\newblock Blind image deblurring using both {L0} and {L1} regularization of
  max-min prior.
\newblock {\em Neurocomputing}, 592:127727, 2024.

\bibitem{HuiLLJPCZ24}
Yuxiang Hui, Yang Liu, Yaofang Liu, Fan Jia, Jinshan Pan, Raymond~H. Chan, and
  Tieyong Zeng.
\newblock {VJT:} {A} video transformer on joint tasks of deblurring, low-light
  enhancement and denoising.
\newblock {\em Arxiv: 2401.14754}, 2024.

\bibitem{JangGR23}
Uijeong Jang, Shuvomoy~Das Gupta, and Ernest~K. Ryu.
\newblock Computer-assisted design of accelerated composite optimization
  methods: {OptISTA}.
\newblock {\em Arxiv: 2305.15704}, 2023.

\bibitem{JiangL21}
Bowu Jiang and Wenkai Lu.
\newblock Primal-dual optimization strategy with total variation regularization
  for prestack seismic image deblurring.
\newblock {\em {IEEE} Trans. Geosci. Remote. Sens.}, 59(1):884--893, 2021.

\bibitem{KimF16}
Donghwan Kim and Jeffrey~A. Fessler.
\newblock Optimized first-order methods for smooth convex minimization.
\newblock {\em Math. Program.}, 159(1-2):81--107, 2016.

\bibitem{KumarS24}
Avinash Kumar and Sujit~Kumar Sahoo.
\newblock Enhanced {FISTA} for non-blind image deblurring.
\newblock In {\em 2024 International Conference on Signal Processing and
  Communications (SPCOM)}, pages 1--5, 2024.

\bibitem{Condat13}
Condat Laurent.
\newblock A direct algorithm for {1D} total variation denoising.
\newblock {\em IEEE Signal Proc. Letters}, 20(11):1054--1057, 2013.

\bibitem{LiSXZ22}
Chen Li, Li~Song, Rong Xie, and Wenjun Zhang.
\newblock {L0} structure-prior assisted blur-intensity aware efficient video
  deblurring.
\newblock {\em Neurocomputing}, 483:195--209, 2022.

\bibitem{LiLZGW24}
Zhiyuan Li, Jie Li, Yueting Zhang, Jiayi Guo, and Yirong Wu.
\newblock A noise-robust blind deblurring algorithm with wavelet-enhanced
  diffusion model for optical remote sensing images.
\newblock {\em {IEEE} J. Sel. Top. Appl. Earth Obs. Remote. Sens.},
  17:16236--16254, 2024.

\bibitem{LiuYLJCP23}
Mushui Liu, Yunlong Yu, Yingming Li, Zhong Ji, Wen Chen, and Yang Peng.
\newblock Lightweight mimo-wnet for single image deblurring.
\newblock {\em Neurocomputing}, 516:106--114, 2023.

\bibitem{Lu17c}
Cheng Lu.
\newblock {\em Efficient Algorithms for Markov Random Fields, Isotonic
  Regression, Graph Fused Lasso, and Image Segmentation}.
\newblock PhD thesis, University of California, Berkeley, {USA}, 2017.

\bibitem{Nesterov18}
Yurii~E. Nesterov.
\newblock {\em Lectures on Convex Optimization}, volume 137 of {\em Springer
  Optimization and Its Applications}.
\newblock Springer International Publishing, 2018.

\bibitem{QuanLXNJ22}
Yuhui Quan, Peikang Lin, Yong Xu, Yuesong Nan, and Hui Ji.
\newblock Nonblind image deblurring via deep learning in complex field.
\newblock {\em {IEEE} Trans. Neural Networks Learn. Syst.}, 33(10):5387--5400,
  2022.

\bibitem{RudinOF92}
L~Rudin, S~Osher, and E~Fatemi.
\newblock Nonlinear total variation based noise removal algorithms.
\newblock {\em Journal of Physics D}, 60:259--268, 1992.

\bibitem{TangXCZ18}
Yibin Tang, Yimei Xue, Ying Chen, and Lin Zhou.
\newblock Blind deblurring with sparse representation via external patch
  priors.
\newblock {\em Digit. Signal Process.}, 78:322--331, 2018.

\bibitem{Taylor17}
A.~B. Taylor.
\newblock {\em Convex interpolation and performance estimation of first-order
  methods for convex optimization}.
\newblock PhD thesis, Catholic University of Louvain, Louvain-la-Neuve,
  Belgium, 2017.

\bibitem{Tibshirani96}
Robert Tibshirani.
\newblock Regression shrinkage and selection via the lasso.
\newblock {\em J. R. Stat. Soc. Series B Stat. Methodol.}, 58(1):267--288,
  1996.

\bibitem{WangZZLZ24}
Rui Wang, Chunjie Zhang, Xiaolong Zheng, Yisheng Lv, and Yao Zhao.
\newblock Joint defocus deblurring and superresolution learning network for
  autonomous driving.
\newblock {\em {IEEE} Intell. Transp. Syst. Mag.}, 16(1):104--115, 2024.

\bibitem{WangJJGY24}
Yangguang Wang, Chenxu Jiang, Xu~Jia, Yufei Guo, and Lei Yu.
\newblock Event-based shutter unrolling and motion deblurring in dynamic
  scenes.
\newblock {\em {IEEE} Signal Process. Lett.}, 31:1069--1073, 2024.

\bibitem{WangWWGC18}
Zhi Wang, Jianjun Wang, Wendong Wang, Chao Gao, and Siqi Chen.
\newblock A novel thresholding algorithm for image deblurring beyond nesterov's
  rule.
\newblock {\em {IEEE} Access}, 6:58119--58131, 2018.

\bibitem{WangBSS04}
Zhou Wang, Alan~C. Bovik, Hamid~R. Sheikh, and Eero~P. Simoncelli.
\newblock Image quality assessment: from error visibility to structural
  similarity.
\newblock {\em {IEEE} Trans. Image Process.}, 13(4):600--612, 2004.

\bibitem{YangFZWYY23}
Xu~Yang, Wenquan Feng, Chen Zhuang, Qiang Wang, Xu~Yang, and Zhe Yang.
\newblock A multi-correlation peak phase deblurring algorithm for beidou {B1C}
  signals in urban environments.
\newblock {\em Remote. Sens.}, 15(17):4300, 2023.

\bibitem{ZhangQZ24}
Hao Zhang, Te~Qi, and Tieyong Zeng.
\newblock Quaternion-aware low-rank prior for blind color image deblurring.
\newblock {\em J. Sci. Comput.}, 101(2):27, 2024.

\bibitem{ZhangWZZL20}
Hong Zhang, Yujie Wu, Lei Zhang, Zeyu Zhang, and Yawei Li.
\newblock Image deblurring using tri-segment intensity prior.
\newblock {\em Neurocomputing}, 398:265--279, 2020.

\bibitem{ZhangZLHC22}
Jingwen Zhang, Xiaoxuan Zhou, Liyuan Li, Tingliang Hu, and Fansheng Chen.
\newblock A combined stripe noise removal and deblurring recovering method for
  thermal infrared remote sensing images.
\newblock {\em {IEEE} Trans. Geosci. Remote. Sens.}, 60:1--14, 2022.

\bibitem{ZhangLGWL24}
Pei Zhang, Haosen Liu, Zhou Ge, Chutian Wang, and Edmund~Y. Lam.
\newblock Neuromorphic imaging with joint image deblurring and event denoising.
\newblock {\em {IEEE} Trans. Image Process.}, 33:2318--2333, 2024.

\end{thebibliography}

\end{document}